\theoremstyle{plain} 
\newtheorem{theorem}{Theorem}[section]
\newtheorem*{theorem*}{Theorem}
\newtheorem{lemma}[theorem]{Lemma}
\newtheorem*{lemma*}{Lemma}
\newtheorem*{corollary*}{Corollary}
\newtheorem*{proposition*}{Proposition}
\newtheorem{definition}[theorem]{Definition}
\newtheorem*{definition*}{Definition}
\newtheorem*{conjecture*}{Conjecture}
\theoremstyle{definition} 
\newtheorem*{example*}{Example}
\newtheorem{remark}[theorem]{Remark}
\newtheorem*{remark*}{Remark}
\title{Efficient and Accurate Gradients for Neural SDEs}
\author{Patrick Kidger$\hspace{0.1em}^{1}$
\hspace{1.7em} James Foster$\hspace{0.1em}^{1}$
\hspace{1.7em} Xuechen Li$\hspace{0.1em}^{2}$
\hspace{1.7em} Terry Lyons$\hspace{0.1em}^{1}$\\[5pt]

$^{1}$ University of Oxford; The Alan Turing Insitute\qquad$^{2}$ Stanford\\
\texttt{\{kidger, foster, tlyons\}@\hspace{0.8pt}maths.ox.ac.uk}\\
\texttt{lxuechen@\hspace{0.8pt}cs.toronto.edu}\\
}
\definecolor{mydarkblue}{rgb}{0,0.08,0.45}
\newcommand{\abs}[1]{\left|#1\right|}
\newcommand{\dd}{\mathrm{d}}
\newcommand{\reals}{\mathbb{R}}
\newcommand{\expect}{\mathbb{E}}
\newcommand{\normal}[2]{\mathcal{N}(#1, #2)}
\newcommand{\restr}[2]{{\left.\kern-\nulldelimiterspace #1 \right|_{#2}}}
\newcommand{\kl}[2]{\mathrm{KL}\left(#1\middle\|#2\right)}
\newcommand{\norm}[1]{\left\|#1\right\|}
\newcommand{\bigO}[1]{\mathcal{O}\left(#1\right)}
\newcommand{\score}[2]{#1 $\pm$ #2}
\newcommand{\bfscore}[2]{\textbf{#1} $\pm$ \textbf{#2}}
\newcommand{\mybox}[1]{\begin{tabular}[c]{@{}l@{}}#1\end{tabular}}
\newcommand{\myboxcenter}[1]{\begin{tabular}[c]{@{}c@{}}#1\end{tabular}}
\renewcommand{\L}{\mathbb{L}}
\newcommand{\E}{\mathbb{E}}
\renewcommand{\P}{\mathbb{P}}
\definecolor{Code}{rgb}{0,0,0} 
\definecolor{Decorators}{rgb}{0.5,0.5,0.5} 
\definecolor{Numbers}{rgb}{0.5,0,0} 
\definecolor{MatchingBrackets}{rgb}{0.25,0.5,0.5} 
\definecolor{Keywords}{rgb}{0,0,1} 
\definecolor{self}{rgb}{0,0,0} 
\definecolor{Strings}{rgb}{0,0.63,0} 
\definecolor{Comments}{rgb}{0,0.63,1} 
\definecolor{Backquotes}{rgb}{0,0,0} 
\definecolor{Classname}{rgb}{0,0,0} 
\definecolor{FunctionName}{rgb}{0,0,0} 
\definecolor{Operators}{rgb}{0,0,0} 
\definecolor{Background}{rgb}{0.98,0.98,0.98} 
\lstdefinelanguage{Python}{ 
numbers=left, 
numberstyle=\scriptsize, 
numbersep=1em, 
xleftmargin=1em, 
framextopmargin=2em, 
framexbottommargin=2em, 
showspaces=false, 
showtabs=false, 
showstringspaces=false, 
frame=l, 
tabsize=4, 
basicstyle=\small,
backgroundcolor=\color{Background}, 
commentstyle=\color{Comments}\slshape, 
stringstyle=\color{Strings}, 
morecomment=[s][\color{Strings}]{"""}{"""}, 
morecomment=[s][\color{Strings}]{'''}{'''}, 
morekeywords={import,from,class,def,for,while,if,is,in,elif,else,not,and,or,print,break,continue,return,True,False,None,access,as,,del,except,exec,finally,global,import,lambda,pass,print,raise,try,assert}, 
keywordstyle={\color{Keywords}\bfseries}, 
morekeywords={[2]@invariant,pylab,numpy,np,scipy}, 
keywordstyle={[2]\color{Decorators}\slshape}, 
emph={self}, 
emphstyle={\color{self}\slshape}, 
}  
\begin{document}

\maketitle

\begin{abstract}
    Neural SDEs combine many of the best qualities of both RNNs and SDEs: memory efficient training, high-capacity function approximation, and strong priors on model space. This makes them a natural choice for modelling many types of temporal dynamics. Training a Neural SDE (either as a VAE or as a GAN) requires backpropagating through an SDE solve. This may be done by solving a backwards-in-time SDE whose solution is the desired parameter gradients. However, this has previously suffered from severe speed and accuracy issues, due to high computational cost and numerical truncation errors. Here, we overcome these issues through several technical innovations. First, we introduce the \textit{reversible Heun method}. This is a new SDE solver that is \textit{algebraically reversible}: eliminating numerical gradient errors, and the first such solver of which we are aware. Moreover it requires half as many function evaluations as comparable solvers, giving up to a $1.98\times$ speedup. Second, we introduce the \textit{Brownian Interval}: a new, fast, memory efficient, and exact way of sampling \textit{and reconstructing} Brownian motion. With this we obtain up to a $10.6\times$ speed improvement over previous techniques, which in contrast are both approximate and relatively slow. Third, when specifically training Neural SDEs as GANs (Kidger et al. 2021), we demonstrate how SDE-GANs may be trained through careful weight clipping and choice of activation function. This reduces computational cost (giving up to a $1.87\times$ speedup) and removes the numerical truncation errors associated with gradient penalty. Altogether, we outperform the state-of-the-art by substantial margins, with respect to training speed, and with respect to classification, prediction, and MMD test metrics. We have contributed implementations of all of our techniques to the \texttt{torchsde} library to help facilitate their adoption.
\end{abstract}

\section{Introduction}

\paragraph{Stochastic differential equations} Stochastic differential equations have seen widespread use in the mathematical modelling of random phenomena, such as particle systems \citep{langevinbook}, financial markets \citep{stoccalcfinance}, population dynamics \citep{lotkavolterra}, and genetics \citep{SDEgenetics}. Featuring inherent randomness, then in modern machine learning parlance SDEs are generative models.

Such models have typically been constructed theoretically, and are usually relatively simple. For example the Black--Scholes equation, widely used to model asset prices in financial markets, has only two scalar parameters: a fixed drift and a fixed diffusion \citep{blackscholes}.

\paragraph{Neural stochastic differential equations}

Neural stochastic differential equations offer a shift in this paradigm. By parameterising the drift and diffusion of an SDE as neural networks, then modelling capacity is greatly increased, and theoretically arbitrary SDEs may be approximated. (By the universal approximation theorem for neural networks \citep{pinkus, deepnarrow2020}.) Several authors have now studied or introduced Neural SDEs; \citep{nsde-basic, nsde-generative, njsde, nsde-normalisation, sde-net, roughstochasticNF, finance-nsde, scalable-sde, sde-gan} amongst others.

\paragraph{Connections to recurrent neural networks} A numerically discretised (Neural) SDE may be interpreted as an RNN (featuring a residual connection), whose input is random noise -- Brownian motion -- and whose output is a generated sample. Subject to a suitable loss function between distributions, such as the KL divergence \citep{scalable-sde} or Wasserstein distance \citep{sde-gan}, this may then simply be backpropagated through in the usual way.

\paragraph{Generative time series models} SDEs are naturally random. In modern machine learning parlance they are thus generative models. As such we treat Neural SDEs as \textit{generative time series models}.

The (recurrent) neural network-like structure offers high-capacity function approximation, whilst the SDE-like structure offers strong priors on model space, memory efficiency, and deep theoretical connections to a well-understood literature. Relative to the classical SDE literature, Neural SDEs have essentially unprecedented modelling capacity.

(Generative) time series models are of classical interest, with forecasting models such as Holt--Winters \citep{holt-winters1, holt-winters2}, ARMA \citep{arma} and so on. It has also attracted much recent interest with (besides Neural SDEs) the development of models such as Time Series GAN \citep{ts-gan}, Latent ODEs \citep{latent-odes}, GRU-ODE-Bayes \citep{gru-ode-bayes}, ODE$^2$VAE \citep{ode2vae}, CTFPs \citep{ctfp}, Neural ODE Processes \citep{norcliffe2021neuralodeprocesses} and Neural Jump ODEs \citep{herrera2021neural}.

\subsection{Contributions}
We study backpropagation through SDE solvers, in particular to train Neural SDEs, via continuous adjoint methods. We introduce several technical innovations to improve both model performance and the speed of training: in particular to reduce numerical gradient errors to almost zero.

First, we introduce the \textit{reversible Heun method}: a new SDE solver, constructed to be \textit{algebraically reversible}. By matching the truncation errors of the forward and backward passes, the gradients computed via continuous adjoint method are precisely those of the numerical discretisation of the forward pass. This overcomes the typical greatest limitation of continuous adjoint methods -- and to the best of our knowledge, is the first algebraically reversible SDE solver to have been developed.

After that, we introduce the \textit{Brownian Interval} as a new way of sampling \textit{and reconstructing} Brownian motion. It is fast, memory efficient and exact. It has an average (modal) time complexity of $\bigO{1}$, and consumes only $\bigO{1}$ GPU memory. This is contrast to previous techniques requiring either $\bigO{T}$ memory, or a choice of approximation error $\varepsilon \ll 1$ and then a time complexity of $\bigO{\log(1/\varepsilon)}$.

Finally, we demonstrate how the Lipschitz condition for the discriminator of an SDE-GAN may be imposed without gradient penalties -- instead using careful clipping and the LipSwish activation function -- so as to overcome their previous incompatibility with continuous adjoint methods.

Overall, multiple technical innovations provide substantial improvements over the state-of-the-art with respect to training speed, and with respect to classification, prediction, and MMD test metrics.

\section{Background}
\subsection{Neural SDE construction}
\paragraph{Certain minimal amount of structure}
Following \citet{sde-gan}, we construct Neural SDEs with a certain minimal amount of structure. Let $T>0$ be fixed and suppose we wish to model a path-valued random variable $Y_\text{true} \colon [0, T] \to \reals^y$. The size of $y$ is the dimensionality of the data.\footnote{In practice we will typically observe some discretised time series sampled from $Y_\text{true}$. For ease of presentation we will neglect this detail for now and will return to it in Section \ref{section:discretised}.}

Let $W \colon [0, T] \to \reals^w$ be a $w$-dimensional Brownian motion, and let $V \sim \normal{0, I_v}$ be drawn from a $v$-dimensional standard multivariate normal. The values $w, v$ are hyperparameters describing the size of the noise. Let
\begin{equation*}
    \zeta_\theta \colon \reals^v \to \reals^x,\quad\mu_\theta \colon [0, T] \times \reals^x \to \reals^x,\quad\sigma_\theta \colon [0, T] \times \reals^x \to \reals^{x \times w},\quad\ell_\theta \colon \reals^x \to \reals^y,
\end{equation*}
where $\zeta_\theta$, $\mu_\theta$ and $\sigma_\theta$ are neural networks and $\ell_\theta$ is affine. Collectively these are parameterised by $\theta$. The dimension $x$ is a hyperparameter describing the size of the hidden state.

We consider Neural SDEs as models of the form
\begin{equation}\label{eq:nsde}
X_0 = \zeta_\theta(V),\qquad\dd X_t = \mu_\theta(t, X_t) \,\dd t + \sigma_\theta(t, X_t) \circ \dd W_t,\qquad Y_t = \ell_\theta(X_t),
\end{equation}
for $t \in [0, T]$, with $X \colon [0, T] \to \reals^x$ the (strong) solution to the SDE.\footnote{The notation `$\, \circ\, \dd W_t$' denotes Stratonovich integration. It{\^o} is less efficient; see Appendix \ref{appendix:adjoints}.} The solution $X$ is guaranteed to exist given mild conditions: that $\mu_\theta$, $\sigma_\theta$ are Lipschitz, and that $\expect_V\left[\zeta_\theta(V)^2\right] < \infty$.

We seek to train this model such that $Y \overset{\mathrm{d}}{\approx} Y_{\text{true}}$. That is to say, the model $Y$ should have approximately the same distribution as the target $Y_{\text{true}}$, for some notion of approximate. (For example, to be similar with respect to the Wasserstein distance).

\paragraph{RNNs as discretised SDEs}
The minimal amount of structure is chosen to parallel RNNs. The solution $X$ may be interpreted as hidden state, and the $\ell_\theta$ maps the hidden state to the output of the model. In Appendix \ref{appendix:background:recurrent} we provide sample PyTorch \citep{pytorch} code computing a discretised SDE according to the Euler--Maruyama method. The result is an RNN consuming random noise as input.

\subsection{Training criteria for Neural SDEs}
Equation \eqref{eq:nsde} produces a random variable $Y \colon [0, T] \to \reals^y$ implicitly depending on parameters $\theta$. This model must still be fit to data. This may be done by optimising a distance between the probability distributions (laws) for $Y$ and $Y_\text{true}$.

\paragraph{SDE-GANs}\label{section:sde-gan}
The Wasserstein distance may be used by constructing a discriminator and training adversarially, as in \citet{sde-gan}. Let $F_\phi(Y) = m_\phi \cdot H_T$, where
\begin{equation}\label{eq:ncde}
    H_0 = \xi_\phi(Y_0),\qquad \dd H_t = f_\phi(t, H_t) \,\dd t + g_\phi(t, H_t) \circ \dd Y_t,
\end{equation}
for suitable neural networks $\xi_\phi, f_\phi, g_\phi$ and vector $m_\phi$. This is a deterministic function of the generated sample $Y$. Here $\cdot$ denotes a dot product. They then train with respect to
\begin{equation}\label{eq:sde-gan}
    \min_\theta \max_\phi \big( \expect_{Y}\left[F_\phi(Y)\right] - \expect_{Y_\text{true}}\left[F_\phi(Y_\text{true})\right]\big).
\end{equation}
See Appendix \ref{appendix:training-nsde} for additional details on this approach, and in particular how it generalises the classical approach to fitting (calibrating) SDEs.

\paragraph{Latent SDEs}
\citet{scalable-sde} instead optimise a KL divergence. This consists of constructing an auxiliary process $\widehat{X}$ with drift $\nu_\phi$ parameterised by $\phi$, and optimising an expression of the form
\begin{equation}\label{eq:latent-kl}
    \hspace{-0.001em}\min_{\theta, \phi}\expect_{W, Y_\text{true}}\left[\int_0^T (Y_{\text{true}, t} - \ell_\theta(\widehat{X}_t))^2 + \frac{1}{2} \norm{(\sigma_\theta(t, \widehat{X}_t))^{-1}(\mu_\theta(t, \widehat{X}_t) - \nu_\phi(t, \widehat{X}_t, Y_{\text{true}}))}^2_2 \,\dd t\right].\hspace{-1em}
\end{equation}
The full construction is moderately technical; see Appendix \ref{appendix:training-nsde} for further details.

\subsection{Discretised observations}\label{section:discretised}
Observations of $Y_\text{true}$ are typically a discrete time series, rather than a true continuous-time path. This is not a serious hurdle. If training an SDE-GAN, then equation \eqref{eq:ncde} may be evaluated on an interpolation $Y_\text{true}$ of the observed data. If training a Latent SDE, then $\nu_\phi$ in equation \eqref{eq:latent-kl} may depend explicitly on the discretised $Y_\text{true}$.

\subsection{Backpropagation through SDE solves}
Whether the loss for our generated sample $Y$ is produced via a Latent SDE or via the discriminator of an SDE-GAN, it is still required to backpropagate from the loss to the parameters $\theta, \phi$.

Here we use \textit{the continuous adjoint method}. Also known as simply `the adjoint method', or `optimise-then-discretise', this has recently attracted much attention in the modern literature on neural differential equations. This exploits the reversibility of a differential equation: as with invertible neural networks \cite{invertible-residual}, intermediate computations such as $X_t$ for $t < T$ are reconstructed from output computations, so that they do not need to be held in memory.

Given some Stratonovich SDE
\begin{equation}\label{eq:_strat}
    \dd Z_t = \mu(t, Z_t) \,\dd t + \sigma(t, Z_t) \circ\dd W_t \quad\text{ for } t \in [0, T],
\end{equation}
and a loss $L \colon \reals^z \to \reals$ on its terminal value $Z_T$, then the adjoint process $A_t = \nicefrac{\dd L(Z_T)}{dZ_t} \in \reals^z$ is a (strong) solution to
\begin{equation}\label{eq:_strat_adjoint}
    \dd A_t^i = - A_t^j \frac{\partial \mu^j}{\partial Z^i}(t, Z_t) \,\dd t - A_t^j \frac{\partial \sigma^{j, k}}{\partial Z^i}(t, Z_t) \circ \dd W^k_t,
\end{equation}
which in particular uses the same Brownian motion $W$ as on the forward pass. Equations \eqref{eq:_strat} and \eqref{eq:_strat_adjoint} may be combined into a single SDE and solved backwards-in-time\footnote{\citet{scalable-sde} give rigorous meaning to this via two-sided filtrations; for the reader familiar with rough path theory then \citet[Appendix A]{original-sde-gan} also give a pathwise interpretation. The reader familiar with neither of these should feel free to intuitively treat Stratonovich (but not It{\^o}) SDEs like ODEs.} from $t=T$ to $t=0$, starting from $Z_T=Z_T$ (computed on the forward pass of equation \eqref{eq:_strat}) and $A_T = \nicefrac{L(Z_T)}{dZ_T}$. Then $A_0=\nicefrac{\dd L(Z_T)}{Z_0}$ is the desired backpropagated gradient.

Note that we assumed here that the loss $L$ acts only on $Z_T$, not all of $Z$. This is not an issue in practice. In both equations \eqref{eq:sde-gan} and \eqref{eq:latent-kl}, the loss is an integral. As such it may be computed as part of $Z$ in a single SDE solve. This outputs a value at time $T$, the operation $L$ may simply extract this value from $Z_T$, and then backpropagation may proceeed as described here.

\label{section:continuous-adjoint-main-limitation}
The main issue is that the two numerical approximations to $Z_t$, computed in the forward and backward passes of equation \eqref{eq:_strat}, are different. This means that the $Z_t$ used as an input in equation \eqref{eq:_strat_adjoint} has some discrepancy from the forward calculation, and the gradients $A_0$ suffer some error as a result. (Often exacerbating an already tricky training procedure, such as the adversarial training of SDE-GANs.)

See Appendix \ref{appendix:adjoints} for further discussion on how an SDE solve may be backpropagated through.

\subsection{Alternate constructions} There are other uses for Neural SDEs, beyond our scope here. For example \citet{song2021scorebased} combine SDEs with score-matching, and \citet{winnie} use SDEs to represent Bayesian uncertainty over parameters. The techniques introduced in this paper will apply to any backpropagation through an SDE solve.

\begin{wrapfigure}[16]{R}{0.4\textwidth}
\vspace{-2em}
\begin{algorithm}[H]
\SetKwInput{kwInput}{Input}
\SetKwInput{kwOutput}{Output}
\kwInput{$t_n, z_n, \widehat{z}_n, \mu_n, \sigma_n, \Delta t, W$}
\vspace{-1.3em}
\begin{align*}
    t_{n+1} &= t_n + \Delta t\\
    \Delta W_n &= W_{t_{n+1}} - W_{t_n}\\
    \widehat{z}_{n+1} &= 2 z_n - \widehat{z}_n + \mu_n \Delta t + \sigma_n \Delta W_n\\
    \mu_{n+1} &= \mu(t_{n+1}, \widehat{z}_{n+1})\\
    \sigma_{n+1} &= \sigma(t_{n+1}, \widehat{z}_{n+1})\\
    z_{n+1} &= z_n + \frac{1}{2}(\mu_{n} + \mu_{n+1}) \Delta t\\&\hspace{8.25mm} + \frac{1}{2} (\sigma_n + \sigma_{n+1}) \Delta W_n
\end{align*}

\kwOutput{$t_{n+1}, z_{n+1}, \widehat{z}_{n + 1}, \mu_{n+1}, \sigma_{n+1}$}
\caption{Forward pass}\label{alg:rmm-forward}
\end{algorithm}
\end{wrapfigure}

\section{Reversible Heun method}

We introduce a new SDE solver, which we refer to as the \textit{reversible Heun method}. Its key property is algebraic reversibility; moreover to the best of our knowledge it is the first SDE solver to exhibit this property.

To fix notation, we consider solving the Stratonovich SDE
\begin{equation}\label{eq:strat_sde}
    \dd Z_t = \mu(t, Z_t) \,\dd t + \sigma(t, Z_t) \circ\dd W_t,
\end{equation}
with known initial condition $Z_0$.

\paragraph{Solver}

We begin by selecting a step size $\Delta t$, and initialising $t_0 = 0$, $z_0 = \widehat{z}_0 = Z_0$, $\mu_0 = \mu(0, Z_0)$ and $\sigma_0 = \sigma(0, Z_0)$. Let $W$ denote a single sample path of Brownian motion. It is important that the same sample be used for both the forward and backward passes of the algorithm; computationally this may be accomplished by taking $W$ to be a Brownian Interval, which we will introduce in Section \ref{section:brownian-interval}.

We then iterate Algorithm \ref{alg:rmm-forward}. Suppose $T = N \Delta t$ so that $z_N, \widehat{z}_N, \mu_N, \sigma_N$ are the final output. Then $z_N \approx Z_T$ is returned, whilst $z_N, \widehat{z}_N, \mu_N, \sigma_N$ are all retained for the backward pass.

Nothing else need be saved in memory for the backward pass: in particular no intermediate computations, as would otherwise be typical.

\begin{wrapfigure}[29]{R}{0.53\textwidth}
\begin{algorithm}[H]
\SetKwInput{kwInput}{Input}
\SetKwInput{kwOutput}{Output}
\kwInput{\begin{minipage}[t]{\linewidth}$t_{n+1}, z_{n+1}, \widehat{z}_{n+1}, \mu_{n+1}, \sigma_{n+1}, \Delta t, W,\\[4pt]\frac{\partial L(Z_T)}{\partial z_{n + 1}},\frac{\partial L(Z_T)}{\partial \widehat{z}_{n + 1}},\frac{\partial L(Z_T)}{\partial \mu_{n + 1}}, \frac{\partial L(Z_T)}{\partial \sigma_{n + 1}}$\end{minipage}}
\vspace{0.5em}

\# Reverse step\vspace{-0.5em}
\begin{align*}
    t_{n} &= t_{n+1} - \Delta t\\
    \Delta W_n &= W_{t_{n+1}} - W_{t_n}\\
    \widehat{z}_{n} &= 2 z_{n+1} - \widehat{z}_{n+1} - \mu_{n+1} \Delta t - \sigma_{n+1} \Delta W_n\\
    \mu_{n} &= \mu(t_{n}, \widehat{z}_{n})\\
    \sigma_{n} &= \sigma(t_{n}, \widehat{z}_{n})\\
    z_{n} &= z_{n+1} - \frac{1}{2}(\mu_{n} + \mu_{n+1}) \Delta t\\&\hspace{11.85mm}- \frac{1}{2} (\sigma_n + \sigma_{n+1}) \Delta W_n
\end{align*}

\# Local forward\vspace{-0.5em}
\begin{align*}
    z_{n+1}, \widehat{z}_{n+1}, \mu_{n+1}, \sigma_{n+1} = \mathrm{Forward}(&t_n, z_n, \widehat{z}_n, \mu_n,\\&\sigma_n, \Delta t, W)
\end{align*}

\# Local backward\vspace{-0.5em}
\begin{align*}
    \frac{\partial L(Z_T)}{\partial (z_{n}, \widehat{z}_n, \mu_n, \sigma_n)} &= \frac{\partial L(Z_T)}{\partial (z_{n + 1}, \widehat{z}_{n+1}, \mu_{n+1}, \sigma_{n+1})}\\&\quad\cdot\frac{\partial (z_{n+1}, \widehat{z}_{n+1}, \mu_{n+1}, \sigma_{n+1})}{\partial (z_n, \widehat{z}_n, \mu_n, \sigma_n)}
\end{align*}

\kwOutput{\begin{minipage}[t]{\linewidth}$t_n, z_n, \widehat{z}_n, \mu_n, \sigma_n,\\[4pt] \frac{\partial L(Z_T)}{\partial z_{n}},\frac{\partial L(Z_T)}{\partial \widehat{z}_{n}},\frac{\partial L(Z_T)}{\partial \mu_{n}}, \frac{\partial L(Z_T)}{\partial \sigma_{n}}$\end{minipage}}
\caption{Backward pass}\label{alg:rmm-backward}
\end{algorithm}
\end{wrapfigure}

\paragraph{Algebraic reversibility} The key advantage of the reversible Heun method, and the motivating reason for its use alongside continuous-time adjoint methods, is that it is algebraically reversible. That is, it is possible to reconstruct $(z_n, \widehat{z}_n, \mu_n, \sigma_n)$ from $(z_{n+1}, \widehat{z}_{n+1},\mu_{n+1}, \sigma_{n+1})$ in closed form. (And without a fixed-point iteration.)

This crucial property will mean that it is possible to backpropagate through the SDE solve, such that the gradients obtained via the continuous adjoint method (equation \eqref{eq:_strat_adjoint}) \textit{exactly match} the (discretise-then-optimise) gradients obtained by autodifferentiating the numerically discretised forward pass.

In doing so, one of the greatest limitations of continuous adjoint methods is overcome.


To the best of our knowledge, the reversible Heun method is the first algebraically reversible SDE solver.


\paragraph{Computational efficiency} A further advantage of the reversible Heun method is computational efficiency. The method requires only a single function evaluation (of both the drift and diffusion) per step. This is in contrast to other Stratonovich solvers (such as the midpoint method or regular Heun's method), which require two function evaluations per step.

\paragraph{Convergence of the solver} When applied to the Stratonovich SDE (\ref{eq:strat_sde}), the reversible Heun method exhibits strong convergence of order 0.5; the same as the usual Heun's method.\smallbreak
\begin{theorem*}
Let $\{z_n\}$ denote the numerical solution of (\ref{eq:strat_sde}) obtained by Algorithm \ref{alg:rmm-forward} with a constant step size $\Delta t$ and assume sufficient regularity of $\mu$ and $\sigma$. Then there exists a constant $C > 0$ so that
\begin{align*}
\E\Big[\big\|z_N - Z_T\big\|_2\Big] \leq C \sqrt{\Delta t}\,,
\end{align*}
for small $\Delta t$. That is, strong convergence of order 0.5. If $\sigma$ is constant, then this improves to order 1.
\end{theorem*}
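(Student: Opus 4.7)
The plan is to follow the standard Milstein framework for strong convergence of numerical SDE schemes, adapted to the two-step structure of Algorithm~\ref{alg:rmm-forward}. Viewing the scheme as a one-step map on the augmented state $(z_n, \widehat{z}_n)$, I would organise the argument into three parts: (i) bound the auxiliary discrepancy $y_n := z_n - \widehat{z}_n$; (ii) compute the local truncation error of $z_{n+1}$ against the exact solution $Z_{t_{n+1}}$; and (iii) propagate these local errors globally by a discrete Gronwall inequality applied to the augmented error.

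The crux will be step (i). Subtracting the two updates gives the alternating-sign recursion $y_{n+1} = -y_n + \tfrac12(\mu_{n+1}-\mu_n)\Delta t + \tfrac12(\sigma_{n+1}-\sigma_n)\Delta W_n$ with $y_0 = 0$. Taylor-expanding $\mu_{n+1}, \sigma_{n+1}$ about $(t_n, \widehat{z}_n)$ and substituting $\widehat{z}_{n+1}-\widehat{z}_n = 2y_n + \mu_n\Delta t + \sigma_n\Delta W_n$ decomposes the increment $e_n$ on the right-hand side into a centred martingale-difference piece of $L^2$-size $O(\Delta t)$, a slowly varying predictable piece of size $O(\Delta t)$ containing the Stratonovich correction $\tfrac12\sigma\partial_z\sigma\,\Delta t$, and remainders of size $O(y_n\Delta t)$. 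Writing the recursion in closed form as $y_n = \sum_{k<n}(-1)^{n-1-k}e_k$, I would combine (a) martingale orthogonality of the centred summands, (b) pairwise cancellation of consecutive predictable summands (whose consecutive-step differences are $O(\Delta t^{3/2})$), and (c) a discrete Gronwall to absorb the $O(y_n\Delta t)$ feedback, so establishing $\|y_n\|_{L^2} = O(\sqrt{\Delta t})$ uniformly in $n \leq N$.

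With $y_n$ controlled, step (ii) is a routine Stratonovich--Taylor expansion of both $Z_{t_{n+1}}-Z_{t_n}$ and the update $z_{n+1}-z_n$ about $(t_n, Z_{t_n})$. Since $\widehat{z}_n = z_n - y_n$ differs from the truth by $O(\sqrt{\Delta t})$, the update reproduces a classical Stratonovich Heun step up to predictor perturbations of that size, and its leading $\Delta W_n$, $\Delta t$ and $(\Delta W_n)^2$ terms match the true expansion, leaving conditional-mean local error $O(\Delta t^{3/2})$ and conditional $L^2$ local error $O(\Delta t)$. Step (iii) is then the standard mean-square fundamental convergence theorem applied to the augmented error $(z_n - Z_{t_n},\,\widehat{z}_n - Z_{t_n})$, yielding $\E\|z_N - Z_T\|_2 \leq C\sqrt{\Delta t}$.

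For the constant-$\sigma$ improvement, two sharpenings combine: $\sigma_{n+1}-\sigma_n \equiv 0$ removes the dominant stochastic source from the $y_n$-recursion and sharpens it to $\|y_n\|_{L^2} = O(\Delta t)$, while the Stratonovich correction $\tfrac12\sigma\partial_z\sigma$ vanishes so that $z_{n+1}-z_n$ becomes the trapezoidal rule for $\dot Z = \mu(t,Z)$ with exact Brownian integration; the local mean error improves to $O(\Delta t^3)$ and the local $L^2$ error to $O(\Delta t^{3/2})$, and the same global argument then delivers strong order $1$. The principal obstacle throughout is step (i): the linearisation $y_{n+1} \approx -y_n + e_n$ has spectral radius exactly $1$, so naive $L^2$-moment bounds yield only $O(1)$, and the sharp $O(\sqrt{\Delta t})$ rate requires simultaneous exploitation of the martingale structure of the stochastic increments and the pairwise cancellation of the deterministic ones.
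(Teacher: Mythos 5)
Your architecture is essentially the paper's: the paper's ``key idea'' is likewise to pair up consecutive steps so that $\widehat{z}_n \mapsto \widehat{z}_{n+2}$ becomes a midpoint step and the alternating-sign recursion for $z_n - \widehat{z}_n$ telescopes with cancellation, after which a Stratonovich--Taylor comparison and a discrete Gronwall chaining of local mean-square estimates give the global rate; the additive-noise improvement is also obtained the same way. So steps (i)--(iii) and the identification of (i) as the crux all match.

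There is, however, one concrete gap in step (i) as written: you only claim $\|z_n - \widehat{z}_n\|_{L^2} = O(\sqrt{\Delta t})$, and your main tool there (martingale orthogonality) is intrinsically an $L^2$ device. But step (ii) needs the second-order Taylor remainders of $\mu$ and $\sigma$ about $z_n$ --- which are quadratic in $y_n = z_n - \widehat{z}_n$ --- to be $O(\Delta t)$ \emph{in $L^2$}, and $\|\,|y_n|^2\,\|_{L^2} = \|y_n\|_{L^4}^2$. An $L^2$ bound on $y_n$ gives no control on $\E[\|y_n\|^4]$, so the claimed conditional $L^2$ local error of order $O(\Delta t)$ (and hence the fundamental mean-square convergence theorem in step (iii)) does not follow from what you prove. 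The paper flags exactly this point (``the choice of $\L_4$ instead of $\L_2$ is important in subsequent error estimates'') and runs the two-step cancellation argument on the \emph{fourth} power of the increment, obtaining $\|Y_{n+2}-Z_{n+2}\|_{\L_4^n}^4 \leq e^{c_1 h}\|Y_n - Z_n\|_2^4 + c_2 h^3$ and hence $\|y_n\|_{\L_4} = O(\sqrt{\Delta t})$. Your argument can be upgraded the same way (expand the fourth power, use conditional expectations and Cauchy--Schwarz in place of orthogonality), but as stated it stops one moment short of what the rest of the proof consumes. A minor further quibble: in the additive case you claim a local conditional-mean error of $O(\Delta t^3)$; the true order is $O(\Delta t^2)$ (the term $(f(z_n)-f(Z_{t_n}))\Delta t$ and the quadratic remainders already contribute at that order), but $O(\Delta t^2)$ is all that strong order $1$ requires, so this over-claim is harmless. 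Note also that the paper does not need your sharpened $\|y_n\| = O(\Delta t)$ in the additive case: since $y_n$ then enters only through the drift, the $O(\sqrt{\Delta t})$ bound in $\L_4$ already suffices.
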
\smallbreak



The key idea in the proof is to consider two adjacent steps of the SDE solver. Then the update $\widehat{z}_n\mapsto \widehat{z}_{n+2}$ becomes a step of a midpoint method, whilst $z_n\mapsto z_{n+1}$ is similar to Heun's method. This makes it possible to show that $\{\widehat{z}_n\}$ and $\{z_n\}$ stay close together: $\E\big[\|z_n - \widehat{z}_n\|_2^4\big] \sim O(\Delta t^2)$.
With this $\L_4$ bound on $z-\widehat{z}$, we can then apply standard lines of argument from the numerical SDE literature. Chaining together local mean squared error estimates, we obtain $\E\big[\|z_N - Z_T\|_2^2\big] \sim O(\Delta t)$.

See Appendix \ref{appendix:rmm-convergence} for the full proof. We additionally consider stability in the ODE setting. Whilst the method is not A-stable, we do show it has the same absolute stability region for a linear test equation as the (reversible) asynchronous leapfrog integrator proposed for Neural ODEs in \citet{MALI2021}.

\paragraph{Precise gradients}
The backward pass is shown in Algorithm \ref{alg:rmm-backward}. As the same numerical solution $\{z_n\}$ is recovered on both the forward and backward passes -- exhibiting the same truncation errors -- then the computed gradients are precisely the (discretise-then-optimise) gradients of the numerical discretisation of the forward pass.

Each $\nicefrac{\partial L(Z_T)}{\partial z_n} \approx A_{n \Delta t}$, where $A$ is the adjoint variable of equation \eqref{eq:_strat_adjoint}.

This is unlike the case of solving equation \eqref{eq:_strat_adjoint} via standard numerical techniques, for which small or adaptive step sizes are necessary to obtain useful gradients \citep{scalable-sde}.

\begin{wrapfigure}[18]{R}{0.34\textwidth}
\vspace{-3em}

    \includegraphics[width=\linewidth]{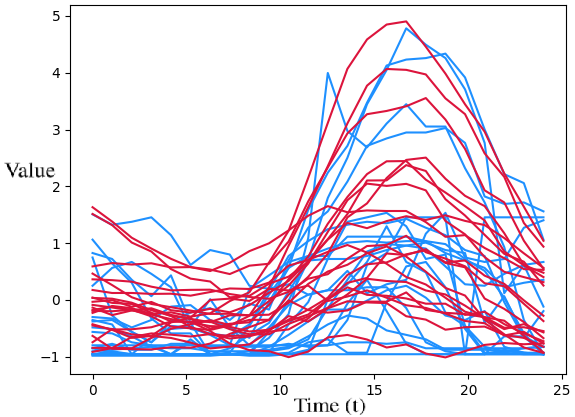}

    \hfill
    \begin{minipage}{0.9\linewidth}
    \captionof{figure}{Samples (red) from Latent SDE on $O_3$ ozone channel of air quality dataset (blue).}\label{figure:air-quality-samples}
    \end{minipage}
    
    \includegraphics[width=\linewidth]{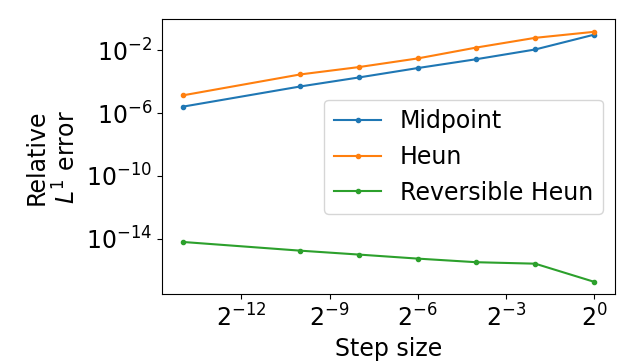}
    
    \hfill
    \begin{minipage}{0.9\linewidth}
    \caption{Relative error in gradient calculation.}\label{fig:gradient}
    \end{minipage}
\end{wrapfigure}

\subsection{Experiments}
We validate the empirical performance of the reversible Heun method. For space, we present abbreviated details and results here. See Appendix \ref{appendix:experiment-details} for details of the hyperparameter optimisation procedure, test metric definitions, and so on, and for further results on additional datasets and additional metrics.

\paragraph{Versus midpoint} We begin by comparing the reversible Heun method with the midpoint method, which also converges to the Stratonovich solution. We train an SDE-GAN on a dataset of weight trajectories evolving under stochastic gradient descent, and train a Latent SDE on a dataset of air quality over Beijing.

See Table \ref{table:reversible-heun}. Due to the reduced number of vector field evaluations, we find that training speed roughly doubles ($1.98\times$) on the weights dataset, whilst its numerically precise gradients substantially improve the test metrics (comparing generated samples to a held-out test set). Similar behaviour is observed on the air quality dataset, with substantial test metric improvements and a training speed improvement of $1.25\times$.

\paragraph{Samples} We verify that samples from a model using reversible Heun resemble that of the original dataset: in Figure \ref{figure:air-quality-samples} we show the Latent SDE on the ozone concentration over Beijing.

\begin{table}
    \centering
    \caption{SDE-GAN on weights dataset; Latent SDE on air quality dataset. Mean $\pm$ standard deviation averaged over three runs.}
    \begin{tabular}{@{}lccc@{}}
        \toprule
        \mybox{Dataset, Solver} & \mybox{Label classification\\accuracy (\%)} & \mybox{MMD ($\times 10^{-2}$)} & \mybox{Training time}\\
        \midrule
        \mybox{Weights, Midpoint} & --- & \score{4.38}{0.67} & \myboxcenter{\score{5.12}{0.01} days}\\
        \mybox{Weights, Reversible Heun} & --- & \bfscore{1.75}{0.3} & \myboxcenter{\bfscore{2.59}{0.05} days}\\\midrule
        \mybox{Air quality, Midpoint} & \score{46.3}{5.1} & \score{0.591}{0.206} & \myboxcenter{\score{5.58}{0.54} hours}\\
        \mybox{Air quality, Reversible Heun} & \bfscore{49.2}{0.02} & \bfscore{0.472}{0.290} & \myboxcenter{\bfscore{4.47}{0.31} hours}\\
        \bottomrule
    \end{tabular}
    \label{table:reversible-heun}
\end{table}

\paragraph{Gradient error} We investigate the numerical error made in solving \eqref{eq:_strat_adjoint}, compared to the (discretise-then-optimise) gradients of the numerically discretised forward pass. We fix a test problem (differentiating a small Neural SDE) and vary the step size and solver; see Figure \ref{fig:gradient}. The error made in standard solvers is very large (but does at least decrease with step size). The reversible Heun method produces results accurate to floating point error, unattainable by any standard solver.

\section{Brownian Interval}\label{section:brownian-interval}
Numerically solving an SDE, via the reversible Heun method or via any other numerical solver, requires sampling Brownian motion: this is the input $W$ in Algorithms \ref{alg:rmm-forward} and \ref{alg:rmm-backward}.

\paragraph{Brownian bridges}
Mathematically, sampling Brownian motion is straightforward. A fixed-step numerical solver may simply sample independent increments during its time stepping. An adaptive solver (which may reject steps) may use L{\'e}vy's Brownian bridge \citep{bbridge} formula to generate increments with the appropriate correlations: letting $W_{a, b} = W_b - W_a \in \reals^w$, then for $u < s < t$,
\begin{equation}\label{eq:bridge}
W_{u, s}|W_{u, t} = \mathcal{N}\Big(\,\frac{s - u}{t - u}W_{u, t}\,, \frac{(t - s)(s - u)}{t - u}I_w\Big).
\end{equation}

\paragraph{Brownian reconstruction} However, there are computational difficulties. On the backward pass, the same Brownian sample as the forward pass must be used, and potentially at locations other than were measured on the forward pass \citep{scalable-sde}.

\begin{wrapfigure}[43]{R}{0.535\textwidth}
\vspace{-0.2em}
\begin{minipage}{0.535\textwidth}
\begin{algorithm}[H]
\SetKwInput{kwType}{Type}
\SetKwInput{kwState}{State}
\SetKwInput{kwInput}{Input}
\SetKwInput{kwOutput}{Output}
\SetKwProg{Def}{def}{:}{}
\SetAlgoVlined
\kwType{Let \textit{Node} denote a 5-tuple consisting of an interval, a seed, and three optional \textit{Node}s, corresponding to the parent node, and two child nodes, respectively. (Optional as the root has no parent and leaves have no children.)}
\kwState{Binary tree with elements of type \textit{Node}, with root $\widehat{I} = ([0, T], \widehat{s}, *, \widehat{I}_\texttt{left}, \widehat{I}_\texttt{right})$. A \textit{Node} $\widehat{J}$.}
\kwInput{Interval $[s, t] \subseteq [0, T]$}
\quad\\

\# The returned `nodes' is a list of \textit{Node}s whose\\
\# intervals partition $[s, t]$. Practically speaking\\
\# this will usually have only one or two elements.\\
\# $\widehat{J}$ is a hint for where in the tree we are.\\
nodes = \texttt{traverse}($\widehat{J}, [s, t]$\,)\\
\quad\\

\Def{\emph{\texttt{sample}($I$ : \textit{Node})}}{
    \If{$I$ {\normalfont{is}} $\widehat{I}$}{
        return $\normal{0}{T}$ sampled with seed $\widehat{s}$.
    }
    Let $I = ([a, b], s, I_{\texttt{parent}}, I_{\texttt{left}}, I_{\texttt{right}})$\\
    Let $I_{\texttt{parent}} = ([a_p, b_p], s_p, I_{pp}, I_{lp}, I_{rp})$\\
    $W_{\texttt{parent}}$ = \texttt{sample}($I_\texttt{parent}$)\\
    \eIf{$I_i$ \normalfont{is} $I_{rp}$}{
        $W_\texttt{left} = \texttt{bridge}(a_p, b_p, a, W_{\texttt{parent}}, s)$\\
        return $W_{\texttt{parent}} - W_\texttt{left}$
    }{
        return \texttt{bridge}($a_p$, $b_p$, $b$, $W_{\texttt{parent}}$, $s$)\\
    }
}
\quad\\

\texttt{sample} = \texttt{LRUCache}(\texttt{sample})\\
\quad\\

$\widehat{J} \leftarrow \text{nodes}[-1]$\\
$W_{s,t} = \sum_{I \in \text{nodes}} \texttt{sample}(I)$

\kwOutput{$W_{s, t}$}
 \caption{Sampling the Brownian Interval}\label{alg:binterval}
\end{algorithm}
\end{minipage}
\begin{subfigure}[b]{0.42\linewidth}
\begin{tikzpicture}[scale=0.7, every node/.style={inner sep=0.5mm,outer sep=0.5mm}]
\draw (0, 0) -- (-1.2, -1);
\draw (0, 0) -- (1.35, -1);
\draw (1.35, -1) -- (0.5, -2);
\draw (1.35, -1) -- (2.2, -2);

\draw (0, 0) node[fill=white] {$[0, T]$};
\draw (-1.2, -1) node[fill=white] {$[0, s]$};
\draw (1.2, -1) node[fill=white] {$[s, T]$};
\draw (0.5, -2) node[fill=white] {$[s, t]$};
\draw (2.2, -2) node[fill=white] {$[t, T]$};
\end{tikzpicture}
\caption{}\label{fig:binterval1}
\end{subfigure}
\begin{subfigure}[b]{0.55\linewidth}
\begin{tikzpicture}[scale=0.7, every node/.style={inner sep=0.5mm,outer sep=0.5mm}]
\draw (0, 0) -- (-1.2, -1);
\draw (0, 0) -- (1.35, -1);

\draw (1.35, -1) -- (0.5, -2);
\draw (1.35, -1) -- (2.2, -2);

\draw (-1.2, -1) -- (-2.2, -2);
\draw (-1.2, -1) -- (-0.5, -2);

\draw (0.5, -2) -- (-0.3, -3);
\draw (0.5, -2) -- (1.3, -3);

\draw (0, 0) node[fill=white] {$[0, T]$};

\draw (-1.2, -1) node[fill=white] {$[0, s]$};
\draw (1.35, -1) node[fill=white] {$[s, T]$};

\draw (0.5, -2) node[fill=white] {$[s, t]$};
\draw (2.2, -2) node[fill=white] {$[t, T]$};

\draw (-2.2, -2) node[fill=white] {$[0, u]$};
\draw (-0.5, -2) node[fill=white] {$[u, s]$};

\draw (-0.3, -3) node[fill=white] {$[s, v]$};
\draw (1.3, -3) node[fill=white] {$[v, t]$};
\end{tikzpicture}
\caption{}\label{fig:binterval2}
\end{subfigure}
\vspace{-0.2em}
\caption{Binary tree of intervals.}
\end{wrapfigure}

\paragraph{Time and memory efficiency} The simple but memory intensive approach would be to store every sample made on the forward pass, and then on the backward pass reuse these samples, or sample Brownian noise according to equation \eqref{eq:bridge}, as appropriate. 

\mbox{}\citet{scalable-sde} instead offer a memory-efficient but time-intensive approach, by introducing the `Virtual Brownian Tree'. This approximates the real line by a tree of dyadic points. Samples are approximate, and demand deep (slow) traversals of the tree.

\paragraph{Binary tree of (interval, seed) pairs} In response to this, we introduce the `Brownian Interval', which offers memory efficiency, exact samples, and fast query times, all at once. The Brownian Interval is built around a binary tree, each node of which is an interval and a random seed.

The tree starts as a stump consisting of the global interval $[0, T]$ and a randomly generated random seed. New leaf nodes are created as observations of the sample are made. For example, making a first query at $[s, t] \subseteq [0, T]$ (an operation that returns $W_{s, t}$) produces the binary tree shown in Figure \ref{fig:binterval1}. Algorithm \ref{alg:traverse} in Appendix \ref{appendix:brownian} gives the formal definition of this procedure. Making a subsequent query at $[u, v]$ with $u < s < v < t$ produces Figure \ref{fig:binterval2}. Using a splittable PRNG \citep{prng1, prng2}, each child node has a random seed deterministically produced from the seed of its parent.

The tree is thus designed to completely encode the conditional statistics of a sample of Brownian motion: $W_{s, t}, W_{t, u}$ are completely specified by $t, [s, u]$, $W_{s, u}$, equation \eqref{eq:bridge}, and the random seed for $[s, u]$. 

In principle this now gives a way to compute $W_{s,t}$; calculating $W_{s,u}$ recursively. Na{\"i}vely this would be very slow -- recursing to the root on every query -- which we cover by augmenting the binary tree structure with a fixed-size Least Recently Used (LRU) cache on the computed increments $W_{s, t}$.

See Algorithm \ref{alg:binterval}, where $\texttt{bridge}$ denotes equation \eqref{eq:bridge}. The operation $\texttt{traverse}$ traverses the binary tree to find or create the list of nodes whose disjoint union is the interval of interest, and is defined explicitly as Algorithm \ref{alg:traverse} in Appendix \ref{appendix:brownian}.

Additionally see Appendix \ref{appendix:brownian} for various technical considerations and extensions to this algorithm.

\paragraph{Advantages of the Brownian Interval}
The LRU cache ensures that queries have an average-case (modal) time complexity of only $\bigO{1}$: in SDE solvers, subsequent queries are typically close to (and thus conditional on) previous queries. Even given cache misses all the way up the tree, the worst-case time complexity will only be $\bigO{\log(1/s)}$ in the average step size $s$ of the SDE solver. This is in contrast to the Virtual Brownian Tree, which has an (average or worst-case) time complexity of $\bigO{\log(1/\varepsilon)}$ in the approximation error $\varepsilon \ll s$.

Meanwhile the (GPU) memory cost is only $\bigO{1}$, corresponding to the fixed and constant size of the LRU cache. There is the small additional cost of storing the tree structure itself, but this is held in CPU memory, which for practical purposes is essentially infinite. This is in contrast to simply holding all the Brownian samples in memory, which has a memory cost of $\bigO{T}$.

Finally, queries are exact because the tree aligns with the query points. This is contrast to the Virtual Brownian Tree, which only produces samples up to some discretisation of the real line at resolution $\varepsilon$.

\subsection{Experiments}
We benchmark the performance of the Brownian Interval against the Virtual Brownian Tree considered in \citet{scalable-sde}. We include benchmarks corresponding to varying batch sizes, number of sample intervals, and access patterns. For space, just a subset of results are shown. Precise experimental details and further results are available in Appendix \ref{appendix:experiment-details}.

See Table \ref{table:brownian-interval}. We see that the Brownian Interval is uniformly faster than the Virtual Brownian Tree, ranging from $1.5\times$ faster on smaller problems to $10.6\times$ faster on larger problems. Moreover these speed gains are despite the Brownian Interval being written in Python, whilst the Virtual Brownian Tree is carefully optimised and written in C++.

\bgroup
\setlength{\tabcolsep}{4pt}
\begin{table}
    \centering
    \caption{V. Brownian Tree against Brownian Interval on speed benchmarks, over 100 subintervals.}
    \label{table:brownian-interval}
    \begin{tabular}{lcccccc}
    \toprule
    & \multicolumn{3}{c}{SDE solve, speed (seconds)} & \multicolumn{3}{c}{Doubly sequential access, speed (seconds)}\\\cmidrule{2-4}\cmidrule{5-7}
    & Size$\mathord=1$ & Size$\mathord=2560$ & Size$\mathord=32768$ & Size$\mathord=1$ & Size$\mathord=2560$ & Size$\mathord=32768$\\\midrule
    V. B. Tree & $1.6 \mathord\times 10^0$ & $2.0 \mathord\times 10^0$ & $5.00 \mathord\times 10^2$ & $2.4 \mathord\times 10^{-1}$ & $3.9 \mathord\times 10^{-1}$ & $2.9 \mathord\times 10^0$\\
    B. Interval & $\mathbf{8.2 \mathord\times 10^{-1}}$ & $\mathbf{1.3 \mathord\times 10^0}$ & $\mathbf{4.7 \mathord\times 10^1}$ & $\mathbf{5.0 \mathord\times 10^{-2}}$ & $\mathbf{8.0 \mathord\times 10^{-2}}$ & $\mathbf{3.5 \mathord\times 10^{-1}}$\\\bottomrule
    \end{tabular}
\end{table}
\egroup

\section{Training SDE-GANs without gradient penalty}\label{section:clipping}
\citet{sde-gan} train SDEs as GANs, as discussed in Section \ref{section:sde-gan}, using a neural CDE as a discriminator as in equation \eqref{eq:ncde}. They found that only gradient penalty \citep{improved-wgan} was suitable to enforce the Lipschitz condition, given the recurrent structure of the discriminator.

However gradient penalty requires calculating second derivatives (a `double-backward'). This complicates the use of continuous adjoint methods: the double-continuous-adjoint introduces substantial truncation error; sufficient to obstruct training and requiring small step sizes to resolve.

Here we overcome this limitation, and moreover do so independently of the possibility of obtaining exact double-gradients via the reversible Heun method. For simplicity we now assume throughout that our discriminator vector fields $f_\phi$, $g_\phi$ are MLPs, which is also the choice we make in practice.

\paragraph{Lipschitz constant one}
The key point is that the vector fields $f_\phi$, $g_\phi$ of the discriminator must not only be Lipschitz, but must have Lipschitz constant at most one.

Given vector fields with Lipschitz constant $\lambda$, then the recurrent structure of the discriminator means that the Lipschitz constant of the overall discriminator will be $\bigO{\lambda^T}$. Ensuring $\lambda \approx 1$ with $\lambda \leq 1$ thus enforces that the overall discriminator is Lipschitz with a reasonable Lipschitz constant.

\paragraph{Hard constraint}
The exponential size of $\bigO{\lambda^T}$ means that $\lambda$ only slightly greater than one is still insufficient for stable training. We found that this ruled out enforcing $\lambda \leq 1$ via soft constraints, via either spectral normalisation \citep{miyato2018spectral} or gradient penalty across just vector field evaluations.

\paragraph{Clipping}
The first part of enforcing this Lipschitz constraint is careful clipping. Considering each linear operation from $\reals^a \to \reals^b$ as a matrix in $A \in \reals^{a \times b}$, then after each gradient update we clip its entries to the region $[-1/b, 1/b]$. Given $x \in \reals^a$, then this enforces $\norm{Ax}_\infty \leq \norm{x}_\infty$.

\paragraph{LipSwish activation functions}
Next we must pick an activation function with Lipschitz constant at most one. It should additionally be at least twice continuously differentiable to ensure convergence of the numerical SDE solver (Appendix \ref{appendix:rmm-convergence}). In particular this rules out the ReLU.

There remain several admissible choices; we use the LipSwish activation function introduced by \citet{lipswish}, defined as $\rho(x) = 0.909\,x\,\mathrm{sigmoid}(x)$. This was carefully constructed to have Lipschitz constant one, and to be smooth. Moreover the SiLU activation function \citep{silu1, silu2, swish} from which it is derived has been reported as an empirically strong choice.

The overall vector fields $f_\phi$, $g_\phi$ of the discriminator consist of linear operations (which are constrained by clipping), adding biases (an operation with Lipschitz constant one), and activation functions (taken to be LipSwish). Thus the Lipschitz constant of the overall vector field is at most one, as desired.

\subsection{Experiments}
We test the SDE-GAN on a dataset of time-varying Ornstein--Uhlenbeck samples. For space only a subset of results are shown; see Appendix \ref{appendix:experiment-details} for further details of the dataset, optimiser, and so on.

See Table \ref{table:ou} for the results. We see that the test metrics substantially improve with clipping, over gradient penalty (which struggles due to numerical errors in the double adjoint). The lack of double backward additionally implies a computational speed-up. This reduced training time from 55 hours to just 33 hours. Switching to reversible Heun additionally and substantially improves the test metrics, and further reduced training time to 29 hours; a speed improvement of $1.87\times$.

\begin{table}
    \centering
    \caption{SDE-GAN on OU dataset. Mean $\pm$ standard deviation averaged over three runs.}
    \begin{tabular}{@{}lcccc@{}}
        \toprule
        & \multicolumn{3}{c}{Test Metrics}\\
        \cmidrule{2-4}
        Solver & \hspace{-1em} \mybox{Real/fake classification\\accuracy (\%)} & \mybox{Prediction\\loss} & \mybox{MMD\\($\times 10^{-1}$)} & \mybox{Training\\time (hours)}\\
        \midrule
        \mybox{Midpoint w/ gradient penalty} & \score{98.2}{2.4} & \score{2.71}{1.03} & \score{2.58}{1.81} & \score{55.0}{27.7} \\
        \mybox{Midpoint w/ clipping} & \score{93.9}{6.9} & \score{1.65}{0.17} & \score{1.03}{0.10} & \score{32.5}{12.1} \\
        \mybox{Reversible Heun w/ clipping} & \bfscore{67.7}{1.1} & \bfscore{1.38}{0.06} & \bfscore{0.45}{0.22} & \bfscore{29.4}{8.9} \\
        \bottomrule
    \end{tabular}
    \label{table:ou}
\end{table}

\section{Discussion}
\subsection{Available implementation in \texttt{torchsde}}
To facilitate the use of the techniques introduced here -- in particular without requiring a technical background in numerical SDEs -- we have contributed implementations of both the reversible Heun method and the Brownian Interval to the open-source \texttt{torchsde} \citep{torchsde} package. (In which the Brownian Interval has already become the default choice, due to its speed.)

\subsection{Limitations}\label{section:limitations}
The reversible Heun method, Brownian Interval, and training of SDE-GANs via clipping, all appear to be strict improvements over previous techniques. Across our experiments we have observed no limitations relative to previous techniques.


\subsection{Ethical statement}\label{section:ethics}
No significant negative societal impacts are anticipated as a result of this work. A positive environmental impact is anticipated, due to the reduction in compute costs implied by the techniques introduced. See Appendix \ref{appendix:ethics} for a more in-depth discussion.

\section{Conclusion}
We have introduced several improvements over the previous state-of-the-art for Neural SDEs, with respect to both training speed and test metrics. This has been accomplished through several novel technical innovations, including a first-of-its-kind algebraically reversible SDE solver; a fast, exact, and memory efficient way of sampling and reconstructing Brownian motion; and the development of SDE-GANs via careful clipping and choice of activation function.

\begin{ack}
PK was supported by the EPSRC grant EP/L015811/1. PK, JF, TL were supported by the Alan Turing Institute under the EPSRC grant EP/N510129/1. PK thanks Chris Rackauckas for discussions related to the reversible Heun method.
\end{ack}

\bibliographystyle{unsrtnat}
\bibliography{main}

\appendix
\clearpage

\section{RNNs as discretised SDEs}\label{appendix:background:recurrent}
Consider the autonomous one-dimensional It{\^o} SDE
\begin{equation*}
    \dd Y_t = \mu(Y_t)\,\dd t + \sigma(Y_t) \,\dd W_t,
\end{equation*}
with $Y_t, \mu(Y_t), \sigma(Y_t), W_t \in \reals$. Then its numerical Euler--Maruyama discretisation is
\begin{align*}
    Y_{n + 1} = Y_n + \mu(Y_n) \Delta t + \sigma(Y_n) \Delta W_n,
\end{align*}
where $\Delta t$ is some fixed time step and $W_n \sim \normal{0}{\Delta t}$. This may be implemented in very few lines of PyTorch code -- see Figure \ref{fig:pytorch-em} -- and subject to a suitable loss function between distributions, such as the KL divergence \citep{scalable-sde} or Wasserstein distance \citep{sde-gan}, simply backpropagated through in the usual way.

In this way we see that a discretised SDE is simply an RNN consuming random noise.

\paragraph{Neural networks as discretised Neural Differential Equations}
In passing we note that this is a common occurrence: many popular neural network architectures may be interpreted as discretised differential equations.

Residual networks are discretisations of ODEs \citep{chen2018neuralode, momentum}. RNNs are discretised controlled differential equations \citep{kidger2020neuralcde, morrill2021neuralrough, chang2019antisymmetricrnn}.

StyleGAN2 and denoising diffusion probabilistic models are both essentially discretised SDEs \citep{stylegan2, song2021scorebased}.

Many invertible neural networks resemble discretised differential equations; for example using an explicit Euler method on the forward pass, and recovering the intermediate computations via the implicit Euler method on the backward pass \citep{invertible-residual, lipswish}.

\begin{figure}
\begin{python}
from torch import randn_like
from torch.nn import Module
from torchtyping import TensorType

def euler_maruyama_solve(y0       : TensorType["batch", 1],
                         dt       : TensorType[()],  # scalar
                         num_steps: int, 
                         drift    : Module, 
                         diffusion: Module
                         ) ->       list[TensorType["batch", 1]]:
    ys = [y0]
    for _ in range(num_steps):
        ys.append(ys[-1] + drift(ys[-1]) * dt
                  + diffusion(ys[-1]) * randn_like(ys[-1]) * dt.sqrt())
    return ys
\end{python}
    \caption{Euler--Maruyama method in PyTorch. Type annotations (Python 3.9+) are included and \texttt{torchtyping} \citep{torchtyping} used to indicate shapes of tensors.}
    \label{fig:pytorch-em}
\end{figure}

\section{Training criteria for Neural SDEs}\label{appendix:training-nsde}
\paragraph{SDEs as GANs}
One classical way of fitting (non-neural) SDEs is to pick some prespecified functions of interest $F_1, \ldots, F_N$, and then ask that $\expect_{Y}\left[F_i(Y)\right] \approx \expect_{Y_\text{true}}\left[F_i(Y_\text{true})\right]$ for all $i$. For example this may be done by optimising
\begin{equation*}
    \min_\theta \sum_{i=1}^N \big(\expect_{Y}\left[F_i(Y)\right] - \expect_{Y_\text{true}}\left[F_i(Y_\text{true})\right]\big)^2.
\end{equation*}

This ensures that the model and the data behave the same with respect to the functions $F_i$. (Known as either `witness functions' or `payoff functions' depending on the field.)

\citet{sde-gan} generalise this by replacing $F_1, \ldots F_N$ with a parameterised function $F_\phi$ -- a neural network with parameters $\phi$ -- and training adversarially:
\begin{equation}
    \min_\theta \max_\phi \big( \expect_{Y}\left[F_\phi(Y)\right] - \expect_{Y_\text{true}}\left[F_\phi(Y_\text{true})\right]\big).
\end{equation}
Thus making a connection to the GAN literature.

In principle $F_\phi$ could be parameterised as any neural network capable of operating on the path-valued $Y$. There is a natural choice: use a Neural CDE \citep{kidger2020neuralcde}. This is a differential equation capable of acting on path-valued inputs. This means letting the discriminator be $F_\phi(Y) = m_\phi \cdot H_T$, where
\begin{equation*}
    H_0 = \xi_\phi(Y_0),\qquad \dd H_t = f_\phi(t, H_t) \,\dd t + g_\phi(t, H_t) \circ \dd Y_t,
\end{equation*}
for suitable neural networks $\xi_\phi, f_\phi, g_\phi$ and vector $m_\phi$. This is a deterministic function of the generated sample $Y$. Here $\cdot$ denotes a dot product.

Adding regularisation to control the derivative of $F_\phi$ ensures that equation \eqref{eq:sde-gan} corresponds to the dual formulation of the Wasserstein distance, so that $Y$ is capable of perfectly matching $Y_\text{true}$ given enough data, training time, and model capacity \citep{wgan}.

In our experience, this approach produces models with a very high modelling capacity, but which are somewhat involved to train -- GANs being notoriously hard to train stably.

\paragraph{Latent SDEs}
\citet{scalable-sde} have an alternate approach. Let
\begin{equation}\label{eq:latent-xi}
    \xi_\phi \colon \reals^y \to \reals^x \times \reals^x,\qquad \nu_\phi \colon [0, T] \times \reals^x \times \{[0, T] \to \reals^y\} \to \reals^x,
\end{equation}
be (Lipschitz) neural networks\footnote{We do not discuss the regularity of elements of $\{[0, T] \to \reals^y\}$ as in practice we will have discrete observations; $\nu_\phi$ is thus commonly parameterised as $\nu_\phi(t, \widehat{X}_t, Y_\text{true}) = \nu^1_\phi(t, \widehat{X}_t, \nu^2_\phi(\restr{Y_\text{true}}{[t, T]}))$, where $\nu^1_\phi$ is for example an MLP, and $\nu^2_\phi$ is an RNN.} parameterised by $\phi$. Let $(m, s) = \xi_\phi(Y_{\text{true}, 0})$, let $\widehat{V} \sim \normal{m, s I_v}$, and let
\begin{equation*}
    \widehat{X}_0 = \zeta_\theta(\widehat{V}),\qquad\dd \widehat{X}_t = \nu_\phi(t, \widehat{X}_t, Y_\text{true}) \,\dd t + \sigma_\theta(t, \widehat{X}_t) \circ \dd W_t,\qquad \widehat{Y}_t = \ell_\theta(\widehat{X}_t).
\end{equation*}
Note that $\widehat{X}$ is a random variable \textit{over SDEs}, as $Y_\text{true}$ is still a random variable.\footnote{So that $\widehat{V}$, $\widehat{X}$, $\widehat{Y}$ might be better denoted $\widehat{V}|Y_{\text{true}, 0}$, $\widehat{X}|Y_{\text{true}}$, $\widehat{Y}|Y_{\text{true}}$ to reflect their dependence on $Y_{\text{true}}$; we elide this for simplicity of notation.}

They show that the KL divergence
\begin{equation*}
    \kl{\widehat{X}}{X} = \expect_{W} \int_0^T \frac{1}{2} \norm{(\sigma_\theta(t, \widehat{X}_t))^{-1}(\mu_\theta(t, \widehat{X}_t) - \nu_\phi(t, \widehat{X}_t, Y_{\text{true}}))}^2_2 \,\dd t,
\end{equation*}
and optimise
\begin{equation*}
    \min_{\theta, \phi}\expect_{Y_\text{true}}\left[(\widehat{Y}_0 - Y_{\text{true}, 0})^2 + \kl{\widehat{V}}{V} + \expect_{W} \int_0^T (\widehat{Y}_t - Y_{\text{true}, t})^2 \,\dd t + \kl{\widehat{X}}{X}\right].
\end{equation*}
This may be derived as an evidence lower-bound (ELBO). The first two terms are simply a VAE for generating $Y_0$, with latent $V$. The third term and fourth term are a VAE for generating $Y$, by autoencoding $Y_\text{true}$ to $\widehat{Y}$, and then fitting $X$ to $\widehat{X}$.

In our experience, this produces less expressive models than the SDE-GAN approach; however the model is easier to train, due to the lack of adversarial training.\footnote{For ease of presentation this section features a slight abuse of notation: writing the KL divergence between random variables rather than probability distributions. It is also a slight specialisation of \citet{scalable-sde}, who allow losses other than the $L^2$ loss between data and sample.}

\section{Adjoints for SDEs}\label{appendix:adjoints}
Recall that we wish to backpropagate from our generated sample $Y$ to the parameters $\theta, \phi$. Here we provide a more complete overview of the options for how this may be done.

\paragraph{Discretise-then-optimise} One way is to simply backpropagate through the internals of every numerical solver. (Also known as `discretise-then-optimise'.) However this requires $\bigO{HT}$ memory, where $H$ denotes the amount of memory used to evaluate and backpropagate through each neural network once.

\paragraph{Optimise-then-discretise} The continuous adjoint method,\footnote{Frequently abbreviated to simply `adjoint method' in the modern literature, although this term is ambiguous as it is also used to refer to backpropagation-through-the-solver in other literature \citep{smoking-adjoints}.} also known as ``optimise-then-discretise'', instead exploits the reversibility of a differential equation. This means that intermediate computations such as $X_t$ for $t < T$ are reconstructed from output computations, and do not need to be held in memory.

Recall equations \eqref{eq:_strat} and \eqref{eq:_strat_adjoint}: given some Stratonovich SDE
\begin{equation*}
    \dd Z_t = \mu(t, Z_t) \,\dd t + \sigma(t, Z_t) \circ\dd W_t \quad\text{ for } t \in [0, T],
\end{equation*}
and a loss $L \colon \reals^z \to \reals$ on its terminal value $Z_T$, then the adjoint process $A_t = \nicefrac{\dd L(Z_T)}{dZ_t} \in \reals^z$ is a (strong) solution to
\begin{equation*}
    \dd A_t^i = - A_t^j \frac{\partial \mu^j}{\partial Z^i}(t, Z_t) \,\dd t - A_t^j \frac{\partial \sigma^{j, k}}{\partial Z^i}(t, Z_t) \circ \dd W^k_t,
\end{equation*}
which in particular uses the same Brownian motion $W$ as on the forward pass. These may be solved backwards-in-time from $t=T$ to $t=0$, starting from $Z_T=Z_T$ (computed on the forward pass) and $A_T = \nicefrac{L(Z_T)}{dZ_T}$. Then $A_0=\nicefrac{\dd L(Z_T)}{Z_0}$ is the desired backpropagated gradient.

The main advantage of the continuous adjoint method is that it reduces the memory footprint to only $\bigO{H + T}$. $\bigO{H}$ to compute each vector-Jacobian product ($A^j\nicefrac{\partial \mu^i}{\partial Z^j}$ and $A^j\nicefrac{\partial \sigma^{i, k}}{\partial Z^j}$), and $\bigO{T}$ to hold the batch of training data.

The main disadvantage (unless using the reversible Heun method) is that the two numerical approximations to $Z_t$, computed in the forward and backward passes of equation \eqref{eq:_strat}, are different. This means that the $Z_t$ used as an input in equation \eqref{eq:_strat_adjoint} does not perfectly match what is used in the forward calculation, and the gradients $A_0$ suffer some error as a result. This slows and worsens training. (Often exacerbating an already tricky training procedure, such as the adversarial training of SDE-GANs.)

\paragraph{It{\^o} versus Stratonovich} Note that backpropagation through an It{\^o} SDE may be performed by first adding a $-(\sigma^{j, k}\nicefrac{\partial \sigma^{i, k}}{\partial Z^j})(t, Z_t)/2$ correction term to $\mu$ in equation \eqref{eq:_strat}, which converts it to a Stratonovich SDE, and then applying equation \eqref{eq:_strat_adjoint}. 

This additional computational cost -- including double autodifferentiation to compute derivatives of the correction term -- means that we prefer to use Stratonovich SDEs throughout.

\section{Error analysis of the reversible Heun method}\label{appendix:rmm-convergence}

\subsection{Notation and definitions}
In this section, we present some of the notation, definitions and assumptions used in our error analysis.

Throughout, we use $\|\cdot\|_2$ to denote the standard Euclidean norm on $\reals^n$ and $\langle\,\cdot, \cdot\,\rangle$ will be the associated inner product with $\langle x, x\rangle = \|x\|_2^2$. We use the same notation to denote norms for tensors.

For $k\geq 1$, a $k$-tensor on $\reals^n$ is simply an element of the $n^{k}$-\hspace{0.125mm}dimensional space $\reals^{n\times \cdots \times n}$ and can be interpreted as a multilinear map into $\reals$ with $k$ arguments from $\reals^n$.
Hence for a $k$-tensor $T$ on $\reals^n$ (with $k\geq 2$) and a vector $v\in\reals^n$, we shall define $Tv := T(v, \cdots)$ as a $(k-1)$-tensor on $\reals^n$.

Therefore, we can define the operator norm of $T$ recursively as
\begin{align}\label{append:operator_norm}
\|T\|_{\text{op}} & := \begin{cases}\,\,\,\|T\|_2, & \text{if}\,\,T\in\reals^n,\\[3pt] \,\underset{\substack{v\in\reals^n,\\[2pt] \|v\|_2\leq 1}}{\sup}\|Tv\|_{\text{op}}, & \text{if}\,\,T\,\,\text{is a }k\text{-tensor on }\reals^n\text{ with }k\geq 2.\end{cases}
\end{align}
With a slight abuse of notation, we shall write $\|\cdot\|_2$ instead of $\|\cdot\|_{\text{op}}$ for all $k$-tensors.

In addition, we denote the standard tensor product by $\otimes$. So for a $k_1$-tensor $x$ and $k_2$-tensor $y$ on $\reals^n$, $x\otimes y$ is a $(k_1 + k_2)$-tensor on  $\reals^n$. Moreover, it is straightforward to show that $\|x\otimes y\|_2 \leq \|x\|_2\|y\|_2$
and for a $k$-tensor $T$ on $\reals^n$, we have that $Tv := T(v_1, \cdots, v_k)$ for all $v = v_1\otimes\cdots\otimes v_k \in (\reals^n)^{\otimes k}$.\smallbreak

We suppose that $\big(\Omega, \mathcal{F}, \P\, ; \{\mathcal{F}_t\}_{t\geq 0}\big)$ is a filtered probability space carrying a standard $d$-dimensional Brownian motion. We consider the following Stratonovich SDE over the finite time horizon $[0,T]$,
\begin{equation}\label{append:sde1}
\dd y_t = f(t, y_t)\,\dd t + g(t, y_t)\circ\dd W_t\,,
\end{equation}
where $y = \{y_t\}_{t\in[0,T]}$ is a continuous $\reals^e$-valued stochastic process and $f:[0,T]\times \reals^e\to \reals^e$, $g:[0,T]\times \reals^e\to\reals^{e\times d}$ are functions. Without loss of generality, we rewrite (\ref{append:sde1}) as an autonomous SDE
by letting $t\mapsto t$ be a coordinate of $y\,$. This simplifies notation and results in the following SDE:
\begin{equation}\label{append:sde2}
\dd y_t = f(y_t)\,\dd t + g(y_t)\circ\dd W_t\,,
\end{equation}
We shall assume that $f, g$ are bounded and twice continuously differentiable with bounded derivatives,
\begin{equation}
\|D^k f\|_\infty := \sup_{x\in\reals^e}\big\|D^k f(x)\big\|_2 < \infty,\hspace{5mm}
\|D^k g\|_\infty := \sup_{x\in\reals^e}\big\|D^k g(x)\big\|_2 < \infty,
\end{equation}
for $k = 0, 1,2$, where $\|\cdot\|_2$ denotes the Euclidean operator norm on $k$-tensors given by (\ref{append:operator_norm}).\smallbreak

For $N\geq 1$, we will compute numerical SDE solutions on $[0,T]$ using a constant step size $h = \frac{T}{N}$.
That is, numerical solutions are obtained at times $\{t_n\}_{0\leq n\leq N}$ with $t_n :=  nh$ for $n\in\{0,1,\cdots, N\}$.

For $0\leq n\leq N$, we denote increments of Brownian motion by $\,W_n := W_{t_{n+1}} - W_{t_n} \sim\mathcal{N}(0, \mathrm{I}_d h)\,$,
where $\mathrm{I}_d$ is the $d\times d$ identity matrix. We use $h_{\max} > 0$ to denote an upper bound for the step size $h$.

Given a random vector $X$, taking its values in $\reals^n$, we define the $\L_p$ norm of $X$ for $p\geq 1$ as
\begin{equation}\label{append:lp}
\|X\|_{\L_p} := \E\big[\|X\|_2^p\big]^{\frac{1}{p}},
\end{equation}
Similarly, for a stochastic process $\{X_t\}$, the $\mathcal{F}_{t_n}$-conditional $\L_p$ norm of $X_t$ is
\begin{equation}\label{append:cond_lp}
\|X_t\|_{\L_p^n} := \E_n\big[\|X_t\|_2^p\big]^{\frac{1}{p}} = \E\big[\|X_t\|_2^p \,\big|\, \mathcal{F}_{t_n}\big]^{\frac{1}{p}},
\end{equation}
for $t\geq t_n$.

We say that $x(h) = O(h^\gamma)$ if there exists a constant $C > 0$, depending on $h_{\max}$ but not $h$, such that $|x(h)| \leq C h^\gamma$ for $h\in(0,h_{\max}]$. We also use big-$O$ notation for estimating quantities in $\L_p$ and $\L_p^n$.

We use $\,\|\mathcal{N}(0, \mathrm{I}_d)\|_{\L_p}\,$ to denote the $\L_p$ norm of a standard $d$-dimensional normal random vector. Thus we have $\|\mathcal{N}(0, \mathrm{I}_d)\|_{\L_2} = \sqrt{d}\,,\, \|\mathcal{N}(0, \mathrm{I}_d)\|_{\L_4} = (d^2 + 2d)^\frac{1}{4}\,$ and $\,\|W_n\|_{\L_p} = \|\mathcal{N}(0, \mathrm{I}_d)\|_{\L_p} h^\frac{1}{2}\,$.\smallbreak

We recall the reversible Heun method given by Algorithm \ref{alg:rmm-forward}.\smallbreak

\begin{definition}[Reversible Heun method]\label{append:rev_heun_def}
For $N\geq 1$, we construct a numerical solution $\{Y_n, Z_n\}_{0\leq n\leq N}$ for the SDE (\ref{append:sde2}) by setting $Y_0 = Z_0 = y_0$ and, for each $n\in\{0,1,\cdots, N-1\}$, defining $(Y_{n+1}, Z_{n+1})$ from $(Y_n, Z_n)$ as
\begin{align}
Z_{n+1} & := 2Y_n - Z_n + f\big(Z_n\big)h + g\big(Z_n\big)W_n\,,\label{append:z_step}\\
Y_{n+1} & := Y_n + \frac{1}{2}\big(f\big(Z_n\big) + f\big(Z_{n+1}\big)\big)h + \frac{1}{2}\big(g\big(Z_n\big) + g\big(Z_{n+1}\big)\big)W_n\,,\label{append:y_step}
\end{align}
where $W_n := W_{t_{n+1}} - W_{t_n} \sim\mathcal{N}(0, \mathrm{I}_d h)$ is an increment of a $d$-dimensional Brownian motion $W$. The collection $\{Y_n\}_{0\leq n \leq N}$ is the numerical approximation to the true solution.
\end{definition}\medbreak
\begin{remark}
Whilst it is not used in our analysis, the above numerical method is time-reversible as
\begin{align*}
Z_n & = 2Y_{n+1} - Z_{n+1} - f\big(Z_{n+1}\big)h - g\big(Z_{n+1}\big)W_n\,,\\
Y_n & = Y_{n+1} - \frac{1}{2}\big(f\big(Z_{n+1}\big) + f\big(Z_n\big)\big)h - \frac{1}{2}\big(g\big(Z_{n+1}\big) + g\big(Z_n\big)\big)W_n\,.
\end{align*}
\end{remark}
To simplify notation, we shall define another numerical solution $\{\widetilde{Z}_n\}_{0\leq n\leq N}$ with $\widetilde{Z}_n := 2Y_n - Z_n$. \smallbreak

Our analysis is outlined as follows. In Section \ref{sect:y_minus_z}, we show that when $h$ is sufficiently small, $\|Y_n - Z_n\|_{\L_4} = O\big(\sqrt{h}\,\big)$ . The choice of $\L_4$ instead of $\L_2$ is important in subsequent error estimates.
In Section \ref{sect:rev_heun_conv}, we show that the reversible Heun method converges to the Stratonovich solution of SDE (\ref{append:sde2}) in the $\L_2$-norm at a rate of $O\big(\sqrt{h}\,\big)$. This matches the convergence rate of standard SDE solvers such as the Heun and midpoint methods.
In Section \ref{append:additive_noise}, we consider the case where $g$ is constant (i.e.~additive noise) and show that the $\L_2$ convergence rate of our method becomes $O(h)$. We also give numerical evidence that the the reversible Heun method can achieve second order weak convergence for SDEs with additive noise. In Section \ref{append:stability_sect}, we consider the stability of the reversible Heun method in the ODE setting. We show that it has the same absolute stability region for a linear test equation as the (reversible) asynchronous leapfrog integrator proposed for Neural ODEs in \citep{MALI2021}.

\subsection{Approximation error between components of the reversible Heun method}\label{sect:y_minus_z}

The key idea underlying our analysis is to consider two steps of the numerical method, which gives,
\begin{align}
Z_{n+2} := Z_n & + 2f\big(Z_{n+1}\big)h + g\big(Z_{n+1}\big)\big(W_n + W_{n+1}\big),\label{append:z_trick}\\[2pt]
Y_{n+2} := Y_n & + \frac{1}{2}\big(f\big(Z_n\big) + 2f\big(Z_{n+1}\big) + f\big(Z_{n+2}\big)\big)h\label{append:y_trick}\\
& + \frac{1}{2}\big(g\big(Z_n\big) + g\big(Z_{n+1}\big)\big)W_n + \frac{1}{2}\big(g\big(Z_{n+1}\big) + g\big(Z_{n+2}\big)\big)W_{n+1}\,.\nonumber
\end{align}
Thus $Z$ is propagated by a midpoint method and $Y$ is propagated by a trapezoidal rule / Heun method.
To prove that $\{Z_n\}$ and $\{Y_n\}$ are close together when $h$ is small, we shall use the Taylor expansions:\smallbreak

\begin{theorem}[Taylor expansions of vector fields]\label{append:first_taylor_thm} Let $F$ be a bounded and twice continuously differentiable function on $\reals^e$ with bounded derivatives (i.e.~we can set $F = f$ or $g$). Then for $n\geq 0$,
\begin{align*}
F\big(Z_{n+1}\big) & = F\big(\widetilde{Z}_n\big) + F^\prime\big(\widetilde{Z}_n\big)\big(Z_{n+1} - \widetilde{Z}_n\big) + R_n^{F,1},\\[3pt]
F\big(Z_{n+2}\big) & = F\big(Z_n\big) + F^\prime\big(Z_n\big)\big(Z_{n+2} - Z_n\big) + R_n^{F,2},
\end{align*}
where the remainder terms $R_n^{F,1}$ and $R_n^{F,2}$ satisfy the following estimates for any fixed $p \geq 1$,
\begin{align*}
\big\|R_n^{F,1}\big\|_{\L_p^n} & = O\big(h\big),\\
\big\|R_n^{F,2}\big\|_{\L_p^n} & = O\big(h\big).
\end{align*}
\end{theorem}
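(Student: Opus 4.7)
The plan is to apply Taylor's theorem with remainder to $F$ around the base points $\widetilde{Z}_n$ and $Z_n$ respectively, and then bound the quadratic remainder using the explicit formulas for the single- and two-step increments of the scheme, combined with Gaussian moment estimates on the Brownian increments.

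First I would invoke Taylor's theorem in integral form: for any $x, y \in \reals^e$,
\begin{equation*}
F(y) = F(x) + F^\prime(x)(y - x) + \int_0^1 (1 - s)\,F^{\prime\prime}\big(x + s(y-x)\big)(y-x, y-x)\,\dd s,
\end{equation*}
so that the remainder is controlled pathwise by $\tfrac{1}{2}\|D^2 F\|_\infty \|y-x\|_2^2$ via the tensor-norm inequality $\|T(v,v)\|_2 \leq \|T\|_2\|v\|_2^2$ (which is immediate from the operator norm defined in (\ref{append:operator_norm})). This reduces each estimate to showing that $\|y - x\|_{\L_{2p}^n}^2 = O(h)$.

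For the first expansion, the scheme (\ref{append:z_step}) gives
\begin{equation*}
Z_{n+1} - \widetilde{Z}_n = f(Z_n)\,h + g(Z_n)\,W_n.
\end{equation*}
Using boundedness of $f$ and $g$, the elementary inequality $(a+b)^2 \leq 2a^2 + 2b^2$, and the independence of $W_n$ from $\mathcal{F}_{t_n}$ (so that $\|W_n\|_{\L_{2p}^n} = \|\mathcal{N}(0, \mathrm{I}_d)\|_{\L_{2p}}\sqrt{h}$), I would get $\|Z_{n+1} - \widetilde{Z}_n\|_{\L_{2p}^n}^2 = O(h^2) + O(h) = O(h)$, and hence $\|R_n^{F,1}\|_{\L_p^n} = O(h)$.

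For the second expansion, the two-step formula (\ref{append:z_trick}) gives
\begin{equation*}
Z_{n+2} - Z_n = 2 f(Z_{n+1})\,h + g(Z_{n+1})\,(W_n + W_{n+1}).
\end{equation*}
The subtle point here is that $Z_{n+1}$ is not $\mathcal{F}_{t_n}$-measurable, so its conditional moments are not directly available; however, the pathwise $L^\infty$ bounds on $f$ and $g$ bypass this entirely, yielding $\|Z_{n+2} - Z_n\|_2 \leq 2\|f\|_\infty h + \|g\|_\infty \|W_n + W_{n+1}\|_2$. Since $W_n + W_{n+1} = W_{t_{n+2}} - W_{t_n}$ is independent of $\mathcal{F}_{t_n}$ and distributed as $\mathcal{N}(0, 2h\,\mathrm{I}_d)$ conditionally, its $\L_{2p}^n$ norm is of order $\sqrt{h}$, and the same argument delivers $\|R_n^{F,2}\|_{\L_p^n} = O(h)$. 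The only real bookkeeping obstacle is this $\mathcal{F}_{t_n}$ vs.~$\mathcal{F}_{t_{n+1}}$ measurability mismatch in the second estimate, and it is resolved purely by the uniform boundedness hypotheses on $f, g, D^2 F$.
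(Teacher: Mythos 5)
Your proposal is correct and follows essentially the same route as the paper: Taylor's theorem with integral remainder, a bound on the quadratic remainder by $\|D^2F\|_\infty\,\|y-x\|_{\L_{2p}^n}^2$, and then the explicit increment formulas $Z_{n+1}-\widetilde{Z}_n = f(Z_n)h + g(Z_n)W_n$ and $Z_{n+2}-Z_n = 2f(Z_{n+1})h + g(Z_{n+1})(W_n+W_{n+1})$ combined with the uniform bounds on $f,g$ and Gaussian moment estimates. The paper handles the $\mathcal{F}_{t_n}$-measurability point you flag in exactly the same way, by bounding $f(Z_{n+1})$ and $g(Z_{n+1})$ in $L^\infty$.
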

\begin{proof}
By Taylor's theorem with integral remainder \citep[Theorem 3.5.6]{taylorthm}, the $R_n^{F, i}$ terms are given by
\begin{align*}
R_n^{F,1} & = \int_0^1(1-t) F^{\prime\prime}\big(\widetilde{Z}_n + t\big(Z_{n+1} - \widetilde{Z}_n\big)\big)\,\dd t\,\big(Z_{n+1} - \widetilde{Z}_n\big)^{\otimes 2},\\
R_n^{F,2} & = \int_0^1(1-t) F^{\prime\prime}\big(Z_n + t\big(Z_{n+2} - Z_n\big)\big)\,\dd t\,\big(Z_{n+2} - Z_n\big)^{\otimes 2}.
\end{align*}
We first note that for $\reals^e$-valued random vectors $X_1$ and $X_2$, we have
\begin{align*}
\bigg\|\int_0^1 (1-t) F^{\prime\prime}\big(X_1 + tX_2\big)\,\dd t\,X_2^{\otimes 2}\bigg\|_{\L_p^n}
& = \E_n\Bigg[\bigg\|\int_0^1 (1-t) F^{\prime\prime}\big(X_1 + tX_2\big)\,\dd t\,X_2^{\otimes 2}\bigg\|_2^p\Bigg]^{\frac{1}{p}}\\
&\leq \E_n\Bigg[\bigg\|\int_0^1 (1-t)^{k-1} F^{\prime\prime}\big(X_1 + tX_2\big)\,\dd t\,\bigg\|_2^p\big\|X_2^{\otimes 2}\big\|_2^p\Bigg]^{\frac{1}{p}}\\
&\leq \E_n\bigg[\int_0^1 \Big\|(1-t) F^{\prime\prime}\big(X_1 + tX_2\big)\Big\|_2^p \dd t\,\big\|X_2\big\|_2^{2p}\bigg]^{\frac{1}{p}}\\
&\leq \big(p + 1\big)^{-\frac{1}{p}}\|F^{\prime\prime}\|_\infty\big\|X_2\big\|_{\L_{2p}^n}^2,
\end{align*}
where the inequality $\,\big\|\int_0^1 \cdot \, \dd t\,\big\|_2^p \leq \int_0^1 \|\cdot\|_2^p\,\dd t\,$ follows by Jensen's inequality and the convexity of $x\mapsto x^p$.  Thus, it is enough to estimate $Z_{n+1} - \widetilde{Z}_n$ and $Z_{n+2} - Z_n$ using Minkowski's inequality.
\begin{align*}
\big\|Z_{n+1} - \widetilde{Z}_n\big\|_{\L_{2p}^n}^2 & = \big\|f\big(Z_n\big)h + g\big(Z_n\big)W_n\big\|_{\L_{2p}^n}^2\\
& \leq 2\big\|f\big(Z_n\big)h\big\|_{\L_{2p}^n}^2 + 2\big\|g\big(Z_n\big)W_n\big\|_{\L_{2p}^n}^2\\
& \leq 2\|f\|_{\infty}^2 h^2 + 2\|g\|_{\infty}^2 \|\mathcal{N}(0, \mathrm{I}_d)\|_{\L_{2p}}^2 h\,,\\[6pt]
\big\|Z_{n+2} - Z_n\big\|_{\L_{2p}^n}^2
& = \big\|2f\big(Z_{n+1}\big)h + g\big(Z_{n+1}\big)\big(W_n + W_{n+1}\big)\big\|_{\L_{2p}^n}^2\\
& \leq 2\big\|2f\big(Z_{n+1}\big)h\big\|_{\L_{2p}^n}^2 + 2\big\|g\big(Z_{n+1}\big)\big(W_n + W_{n+1}\big)\big\|_{\L_{2p}^n}^2\\
& \leq 4\|f\|_{\infty}^2 h^2 + 4\|g\|_{\infty}^2 \|\mathcal{N}(0, \mathrm{I}_d)\|_{\L_{2p}}^2 h,
\end{align*}
where we also used the inequality $(a+b)^2 \leq 2a^2 + 2b^2$. The result now follows from the above.
\end{proof}

With Theorem \ref{append:first_taylor_thm}, it is straightforward to derive a Taylor expansion for the difference $Y_{n+2} - Z_{n+2}\,$.\smallbreak
\begin{theorem} For $n\in \{0, 1,\cdots, N - 2\}$, the difference $Y_{n+2} - Z_{n+2}$ can be expanded as\label{append:second_taylor_thm}
\begin{align}\label{append:diff_expansion}
Y_{n+2} - Z_{n+2}
= Y_n - Z_n & + \Big(f\big(Z_n\big) - f\big(\widetilde{Z}_n\big)\Big)h + \frac{1}{2}\Big(g\big(Z_n\big) - g\big(\widetilde{Z}_n\big)\Big)\big(W_n + W_{n+\frac{1}{2}}\big)\\
& + \frac{1}{2}\Big(g^\prime\big(Z_n\big)\big(g\big(\widetilde{Z}_n\big)\big(W_n + W_{n+1}\big)\big)\Big)W_{n+1}\nonumber\\
& - \frac{1}{2}\Big(g^\prime\big(\widetilde{Z}_n\big)\big(g\big(Z_n\big)W_n\big)\Big)\big(W_n + W_{n+1}\big) + R_n^Z\,,\nonumber
\end{align}
where the remainder term $R_n^Z$ satisfies $\big\|R_n^Z\big\|_{\L_2^n} = O\big(h^\frac{3}{2}\big)$.
\end{theorem}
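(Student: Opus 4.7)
The plan is to subtract equations (\ref{append:y_trick}) and (\ref{append:z_trick}) directly, then expand each vector-field evaluation by Theorem~\ref{append:first_taylor_thm}. The subtraction cancels the common $2f(Z_{n+1})h$ and $g(Z_{n+1})(W_n+W_{n+1})$ contributions, leaving the identity
\begin{align*}
Y_{n+2} - Z_{n+2} = Y_n - Z_n &+ \tfrac{1}{2}\bigl(f(Z_n) + f(Z_{n+2}) - 2f(Z_{n+1})\bigr)h\\
& + \tfrac{1}{2}\bigl(g(Z_n) - g(Z_{n+1})\bigr)W_n + \tfrac{1}{2}\bigl(g(Z_{n+2}) - g(Z_{n+1})\bigr)W_{n+1},
\end{align*}
from which all of the leading-order and remainder pieces in (\ref{append:diff_expansion}) will be read off.

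Next I would apply Theorem~\ref{append:first_taylor_thm} to each of $f(Z_{n+1})$, $f(Z_{n+2})$, $g(Z_{n+1})$, and $g(Z_{n+2})$, writing each as its value at the appropriate base point ($\widetilde{Z}_n$ or $Z_n$) plus a linear Jacobian term plus a quadratic remainder $R^{F,i}_n$ with $\|R^{F,i}_n\|_{\L_p^n} = O(h)$. Substituting the explicit expressions $Z_{n+1}-\widetilde{Z}_n = f(Z_n)h + g(Z_n)W_n$ and $Z_{n+2}-Z_n = 2f(Z_{n+1})h + g(Z_{n+1})(W_n+W_{n+1})$ into those Jacobian terms and collecting isolates the target contributions $(f(Z_n)-f(\widetilde{Z}_n))h$ and $\tfrac{1}{2}(g(Z_n)-g(\widetilde{Z}_n))(W_n+W_{n+1})$, together with the two $g'$-cross terms, provided one further simplification is made: inside $g'(Z_n)(g(Z_{n+1})(W_n+W_{n+1}))W_{n+1}$, I would replace $g(Z_{n+1})$ by $g(\widetilde{Z}_n)$, which produces the displayed cross term plus an error whose prefactor $g(Z_{n+1})-g(\widetilde{Z}_n)$ is $O(\sqrt{h})$ in $\L_p^n$ by Theorem~\ref{append:first_taylor_thm}.

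The main obstacle is the bookkeeping required to verify that every leftover piece is $O(h^{3/2})$ in $\L_2^n$. I would organise the residual terms into three classes: (i) Taylor-remainder pieces $R^{f,i}_n h$ and $R^{g,i}_n W_j$, which are $O(h^2)$ and $O(h^{3/2})$ respectively by Theorem~\ref{append:first_taylor_thm}, Cauchy--Schwarz, and $\|W_j\|_{\L_p^n} = O(\sqrt{h})$; (ii) drift-times-noise cross products such as $f'(Z_n)(g(Z_{n+1})(W_n+W_{n+1}))h$, $f'(\widetilde{Z}_n)(g(Z_n)W_n)h$, $g'(Z_n)(f(Z_{n+1})h)W_{n+1}$, and $g'(\widetilde{Z}_n)(f(Z_n)h)(W_n+W_{n+1})$, each carrying one explicit factor of $h$ and at most one Brownian increment, hence $O(h^{3/2})$; and (iii) the $g(Z_{n+1})\mapsto g(\widetilde{Z}_n)$ substitution error identified above, which is $O(\sqrt{h})$ multiplied by the quadratic Brownian factor $(W_n+W_{n+1})W_{n+1}$ of size $O(h)$ in $\L_p^n$, again $O(h^{3/2})$. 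A final application of Minkowski's inequality yields $\|R_n^Z\|_{\L_2^n} = O(h^{3/2})$.

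The one subtlety beyond routine bookkeeping is the handling of products of Brownian increments inside the conditional norm, since $W_n$ and $W_{n+1}$ are measurable with respect to different sub-$\sigma$-algebras (though both independent of $\mathcal{F}_{t_n}$). This is handled cleanly by Cauchy--Schwarz in $\L_p^n$ together with the uniform bounds $\|D^k f\|_\infty, \|D^k g\|_\infty < \infty$, so that every product of an $\mathcal{F}_{t_{n+1}}$-measurable remainder with $W_{n+1}$ admits the expected product bound.
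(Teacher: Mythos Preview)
Your proposal is correct and follows essentially the same approach as the paper's own proof: subtract the two-step updates (\ref{append:y_trick}) and (\ref{append:z_trick}), apply the Taylor expansions of Theorem~\ref{append:first_taylor_thm} at the base points $Z_n$ and $\widetilde{Z}_n$, substitute the explicit increments $Z_{n+1}-\widetilde{Z}_n$ and $Z_{n+2}-Z_n$, replace $g(Z_{n+1})$ by $g(\widetilde{Z}_n)$ at an $O(\sqrt{h})$ cost, and absorb all leftovers into $R_n^Z$. Your explicit organisation of the remainder into the three classes (Taylor remainders, drift--noise cross terms, and the $g(Z_{n+1})\mapsto g(\widetilde{Z}_n)$ substitution error) is a touch more systematic than the paper's narrative presentation, but the argument is the same.
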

\begin{proof} Expanding the components (\ref{append:z_step}), (\ref{append:y_step}) of the reversible Heun method with Theorem \ref{append:first_taylor_thm} gives
\begin{align*}
&\big(Y_{n+2} - Z_{n+2}\big) - \big(Y_n - Z_n\big)\\
&\hspace{5mm} = \frac{1}{2}f\big(Z_n\big)h + \frac{1}{2}f\big(Z_{n+2}\big)h - f\big(Z_{n+1}\big)h \\
&\hspace{10mm} + \frac{1}{2}g\big(Z_n\big)W_n  + \frac{1}{2}g\big(Z_{n+2}\big)W_{n+1} - \frac{1}{2}g\big(Z_{n+1}\big)\big(W_n + W_{n+1}\big)\\[3pt]
&\hspace{5mm} = \frac{1}{2}f\big(Z_n\big)h + \frac{1}{2}g\big(Z_n\big)W_n + \frac{1}{2}\Big(f\big(Z_n\big) + f^\prime\big(Z_n\big)\big(Z_{n+2} - Z_n\big) + R_n^{f,2}\Big)h\\
&\hspace{10mm} - \Big(f\big(\widetilde{Z}_n\big) + f^\prime\big(\widetilde{Z}_n\big)\big(Z_{n+1} - \widetilde{Z}_n\big) + R_n^{f,1}\Big)h\\
&\hspace{10mm} + \frac{1}{2}\Big(g\big(Z_n\big) + g^\prime\big(Z_n\big)\big(Z_{n+2} - Z_n\big) + R_n^{g,2}\Big)W_{n+1}\\
&\hspace{10mm}  - \frac{1}{2}\Big(g\big(\widetilde{Z}_n\big) + g^\prime\big(\widetilde{Z}_n\big)\big(Z_{n+1} - \widetilde{Z}_n\big) + R_n^{g,1}\Big)\big(W_n + W_{n+1}\big).
\end{align*}
Since $R_n^{f,1}, R_n^{f,2}, R_n^{g,1}, R_n^{g,2}\sim O(h)$, which was shown in Theorem \ref{append:first_taylor_thm}, the above simplifies to
\begin{align*}
&\big(Y_{n+2} - Z_{n+2}\big) - \big(Y_n - Z_n\big)\\
&\hspace{5mm} = \Big(f\big(Z_n\big) - f\big(\widetilde{Z}_n\big)\Big)h + \frac{1}{2}\Big(g\big(Z_n\big) - g\big(\widetilde{Z}_n\big)\Big)\big(W_n + W_{n+\frac{1}{2}}\big)\\
&\hspace{10mm} + \frac{1}{2}\Big(f^\prime\big(Z_n\big)\big(Z_{n+2} - Z_n\big)\Big)h - \Big(f^\prime\big(\widetilde{Z}_n\big)\big(Z_{n+1} - \widetilde{Z}_n\big)\Big)h\\
&\hspace{10mm} + \frac{1}{2}\Big(g^\prime\big(Z_n\big)\big(Z_{n+2} - Z_n\big)\Big)W_{n+1} - \frac{1}{2}\Big(g^\prime\big(\widetilde{Z}_n\big)\big(Z_{n+1} - \widetilde{Z}_n\big)\Big)\big(W_n + W_{n+1}\big) + O\big(h^\frac{3}{2}\big).
\end{align*}

Substituting the formulae (\ref{append:z_step}) and (\ref{append:z_trick}) for $Z_{n+1}$ and $Z_{n+2}$ respectively produces
\begin{align*}
&\big(Y_{n+2} - Z_{n+2}\big) - \big(Y_n - Z_n\big)\\
&\hspace{5mm} = \Big(f\big(Z_n\big) - f\big(\widetilde{Z}_n\big)\Big)h + \frac{1}{2}\Big(g\big(Z_n\big) - g\big(\widetilde{Z}_n\big)\Big)\big(W_n + W_{n+\frac{1}{2}}\big)\\
&\hspace{10mm} + \frac{1}{2}\Big(f^\prime\big(Z_n\big)\big(2f\big(Z_{n+1}\big)h + g\big(Z_{n+1}\big)\big(W_n + W_{n+1}\big)\big)\Big)h\\
&\hspace{10mm} - \Big(f^\prime\big(\widetilde{Z}_n\big)\big(f\big(Z_n\big)h + g\big(Z_n\big)W_n\big)\Big)h\\
&\hspace{10mm} + \frac{1}{2}\Big(g^\prime\big(Z_n\big)\big(2f\big(Z_{n+1}\big)h + g\big(Z_{n+1}\big)\big(W_n + W_{n+1}\big)\big)\Big)W_{n+1}\\
&\hspace{10mm} - \frac{1}{2}\Big(g^\prime\big(\widetilde{Z}_n\big)\big(f\big(Z_n\big)h + g\big(Z_n\big)W_n\big)\Big)\big(W_n + W_{n+1}\big) + O\big(h^\frac{3}{2}\big).
\end{align*}
As $f\big(Y_{n+1}\big)$ and $g\big(Y_{n+1}\big)$ are bounded by $\|f\|_\infty$ and $\|g\|_\infty$, collecting the $O\big(h^\frac{3}{2}\big)$ terms yields
\begin{align*}
\big(Y_{n+2} - Z_{n+2}\big) - \big(Y_n - Z_n\big)
& = \Big(f\big(Z_n\big) - f\big(\widetilde{Z}_n\big)\Big)h + \frac{1}{2}\Big(g\big(Z_n\big) - g\big(\widetilde{Z}_n\big)\Big)\big(W_n + W_{n+\frac{1}{2}}\big)\\
&\hspace{5mm} + \frac{1}{2}\Big(g^\prime\big(Z_n\big)\big(g\big(Z_{n+1}\big)\big(W_n + W_{n+1}\big)\big)\Big)W_{n+1}\\
&\hspace{5mm} - \frac{1}{2}\Big(g^\prime\big(\widetilde{Z}_n\big)\big(g\big(Z_n\big)W_n\big)\Big)\big(W_n + W_{n+1}\big) + O\big(h^\frac{3}{2}\big).
\end{align*}
We note that it is direct consequence of Theorem \ref{append:first_taylor_thm} that $g\big(Z_{n+1}\big) = g\big(\widetilde{Z}_n\big) + O\big(\sqrt{h}\,\big)$. Thus,
\begin{align*}
\big(Y_{n+2} - Z_{n+2}\big) - \big(Y_n - Z_n\big)
& = \Big(f\big(Z_n\big) - f\big(\widetilde{Z}_n\big)\Big)h + \frac{1}{2}\Big(g\big(Z_n\big) - g\big(\widetilde{Z}_n\big)\Big)\big(W_n + W_{n+\frac{1}{2}}\big)\\
&\hspace{5mm} + \frac{1}{2}\Big(g^\prime\big(Z_n\big)\big(g\big(\widetilde{Z}_n\big)\big(W_n + W_{n+1}\big)\big)\Big)W_{n+1}\\
&\hspace{5mm} - \frac{1}{2}\Big(g^\prime\big(\widetilde{Z}_n\big)\big(g\big(Z_n\big)W_n\big)\Big)\big(W_n + W_{n+1}\big) + O\big(h^\frac{3}{2}\big),
\end{align*}
which gives the desired result.
\end{proof}
Using the expansion (\ref{append:diff_expansion}), we shall derive estimates for the $\L_4$-norm of the difference $Y_{n+2} - Z_{n+2}\,$.\smallbreak

\begin{theorem}[Local bound for $Y - Z$]\label{append:local_bound_thm} Let $h_{\max} > 0$ be fixed. Then there exist constants $c_1, c_2 > 0$ such that for $n\in\{0,1,\cdots, N - 2\}$,
\begin{align*}
\big\|Y_{n+2} - Z_{n+2}\big\|_{\L_4^n}^4 \leq e^{c_1 h}\big\|Y_n - Z_n\big\|_2^4 + c_2 h^3,
\end{align*}
provided $h\leq h_{\max}\,$.
\end{theorem}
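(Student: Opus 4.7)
The plan is to start from the Taylor-type expansion of $Y_{n+2} - Z_{n+2}$ established in Theorem~\ref{append:second_taylor_thm}, carefully estimate both the $\L_4^n$ norm and the conditional-mean norm of the increment $\Delta_n := D_{n+2} - D_n$ (where $D_n := Y_n - Z_n$), and then extract the bound $(1 + c_1 h)\|D_n\|_2^4 + c_2 h^3$ from the binomial expansion of $\|D_n + \Delta_n\|_2^4$.

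Using $\widetilde Z_n - Z_n = 2 D_n$ together with the boundedness of $f'$ and $g'$, the first two lines of (\ref{append:diff_expansion}) yield a ``Lipschitz'' piece $P_n$ satisfying $\|P_n\|_{\L_4^n} = O(\sqrt h\,\|D_n\|)$ and $\|\E_n P_n\|_2 = O(h\,\|D_n\|)$. For the cubic Brownian cross term $M_n$ in the third and fourth lines, a direct bound gives $\|M_n\|_{\L_4^n} = O(h)$; crucially, computing $\E_n[M_n]$ explicitly using $\E_n[W_n^\ell W_{n+1}^k] = 0$ and $\E_n[W_n^\ell W_n^k] = \delta_{\ell k}\,h$ shows that the ``$D_n = 0$'' parts of $\E_n[A_n]$ and $\E_n[B_n]$ cancel, leaving a Lipschitz-type remainder $\|\E_n M_n\|_2 = O(h\,\|D_n\|)$. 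The proof of Theorem~\ref{append:second_taylor_thm} in fact establishes $\|R_n^Z\|_{\L_p^n} = O(h^{3/2})$ for every $p \geq 1$, and in particular for $p = 4$, so no further work is needed on the remainder.

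Next, I expand
\begin{equation*}
\|D_{n+2}\|_2^4 = \bigl(\|D_n\|_2^2 + 2\langle D_n, \Delta_n\rangle + \|\Delta_n\|_2^2\bigr)^2
\end{equation*}
and take $\E_n$. The linear-in-$\Delta_n$ contribution is controlled by $4\|D_n\|_2^2\,\langle D_n, \E_n\Delta_n\rangle \leq 4\|D_n\|_2^3\,\|\E_n\Delta_n\|_2 = O(h\,\|D_n\|_2^4) + O(h^{3/2}\,\|D_n\|_2^3)$, while the remaining quadratic, cubic, and quartic-in-$\Delta_n$ terms are estimated by Cauchy--Schwarz with $\|\Delta_n\|_{\L_p^n} = O(\sqrt h\,\|D_n\| + h)$ for $p \in \{2,3,4\}$, producing a finite sum of mixed monomials $h^a \|D_n\|_2^b$ with $a + b/2 \geq 2$. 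Weighted Young's inequality then absorbs every term with $b < 4$ into $c_1 h \|D_n\|_2^4 + c_2 h^3$; for instance $h^{3/2}\|D_n\|_2^3 \leq \tfrac{3}{4}h\|D_n\|_2^4 + \tfrac{1}{4}h^3$ and $h^2\|D_n\|_2^2 \leq \tfrac{1}{2}h\|D_n\|_2^4 + \tfrac{1}{2}h^3$, while the few $b = 4$ terms already carry an extra factor of $h$ or higher (using $h \leq h_{\max}$). Combining and using $1 + c_1 h \leq e^{c_1 h}$ yields the claim.

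The main obstacle is the cross term $M_n$: the naive bound $\|D_n\|_2^3 \cdot \|\E_n M_n\|_2 = O(h\,\|D_n\|_2^3)$ does \emph{not} Young-absorb into $h\|D_n\|_2^4 + h^3$ (it would only give a $h^{1/4}\|D_n\|_2^4$ coefficient, far too large for an $e^{c_1 h}$ envelope), so one genuinely needs the algebraic cancellation between $\E_n[A_n]$ and $\E_n[B_n]$, which leaves the antisymmetric combination $\partial_j g^{i,k}(Z_n) g^{j,k}(\widetilde Z_n) - \partial_j g^{i,k}(\widetilde Z_n) g^{j,k}(Z_n) = O(\|D_n\|)$, to gain the extra $\|D_n\|$ factor required in the drift estimate.
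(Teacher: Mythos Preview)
Your approach is essentially identical to the paper's: both expand $\|D_{n+2}\|_2^4$ binomially, isolate the linear-in-$\Delta_n$ term via the conditional mean, exploit the same add-and-subtract cancellation in $\E_n[M_n]$ to gain the extra factor of $\|D_n\|$, bound the higher-order terms through $\|\Delta_n\|_{\L_p^n} \leq O(\sqrt h)\|D_n\| + O(h)$, and finish with Young's inequality. Your grouping into $P_n, M_n, R_n^Z$ and use of $\L_3^n$ where the paper uses $\L_4^n$ are cosmetic differences only.

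One small slip: the threshold you need for the Young absorption $h^a\|D_n\|^b \lesssim h\|D_n\|^4 + h^3$ (with $b<4$) is $a + b/2 \geq 3$, not $\geq 2$; a term like $h\|D_n\|^2$ would only absorb into $h\|D_n\|^4 + h$. Fortunately every monomial that actually appears in your list satisfies $a + b/2 \geq 3$ (as your own worked examples $h^{3/2}\|D_n\|^3$ and $h^2\|D_n\|^2$ illustrate), so the argument is unaffected.
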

\begin{proof} To begin, we will expand $\big\|Y_{n+2} - Z_{n+2}\big\|_{\L_4^n}^4$ as
\begin{align*}
&\big\|Y_{n+2} - Z_{n+2}\big\|_{\L_4^n}^4\\
&\hspace{5mm} = \E_n\Big[\Big(\big\|Y_n - Z_n\big\|_2^2  + 2\,\big\langle Y_n - Z_n, \big(Y_{n+2} - Z_{n+2}\big) - \big(Y_n - Z_n\big)\big\rangle\\
&\hspace{20mm} + \big\|\big(Y_{n+2} - Z_{n+2}\big) - \big(Y_n - Z_n\big)\big\|_2^2\Big)^2\,\Big]\\
&\hspace{5mm} = \big\|Y_n - Z_n\big\|_2^4 + 4\,\E_n\Big[\big\|Y_n - Z_n\big\|_2^2\big\langle Y_n - Z_n, \big(Y_{n+2} - Z_{n+2}\big) - \big(Y_n - Z_n\big)\big\rangle\Big]\\
&\hspace{10mm} + 4\,\E_n\Big[\big\langle Y_n - Z_n, \big(Y_{n+2} - Z_{n+2}\big) - \big(Y_n - Z_n\big)\big\rangle^2\Big]\\
&\hspace{10mm} + 4\,\E_n\Big[\big\langle Y_n - Z_n, \big(Y_{n+2} - Z_{n+2}\big) - \big(Y_n - Z_n\big)\big\rangle\big\|\big(Y_{n+2} - Z_{n+2}\big) - \big(Y_n - Z_n\big)\big\|_2^2\Big]\\
&\hspace{10mm} + 2\,\E_n\Big[\big\|Y_n - Z_n\big\|_2^2\big\|\big(Y_{n+2} - Z_{n+2}\big) - \big(Y_n - Z_n\big)\big\|_2^2\Big]\\
&\hspace{10mm} + \big\|\big(Y_{n+2} - Z_{n+2}\big) - \big(Y_n - Z_n\big)\big\|_{\L_4^n}^4\,.
\end{align*}
Using the Cauchy-Schwarz inequality along with the fact that $\|\cdot\|_{\L_2^n} \leq \|\cdot\|_{\L_4^n}\,$, we have
\begin{align}\label{append:first_l4_expand}
&\big\|Y_{n+2} - Z_{n+2}\big\|_{\L_4^n}^4\nonumber\\
&\hspace{5mm} \leq \big\|Y_n - Z_n\big\|_2^4 + 4\,\big\|Y_n - Z_n\big\|_2^2\Big\langle Y_n - Z_n, \E_n\big[\big(Y_{n+2} - Z_{n+2}\big) - \big(Y_n - Z_n\big)\big]\Big\rangle\\
&\hspace{10mm}+ 6\big\|Y_n - Z_n\big\|_2^2\big\|\big(Y_{n+2} - Z_{n+2}\big) - \big(Y_n - Z_n\big)\big\|_{\L_2^n}^2\nonumber\\
&\hspace{10mm} + 4\big\|Y_n - Z_n\big\|_2\big\|\big(Y_{n+2} - Z_{n+2}\big) - \big(Y_n - Z_n\big)\big\|_{\L_4^n}^3\nonumber\\
&\hspace{10mm} + \big\|\big(Y_{n+2} - Z_{n+2}\big) - \big(Y_n - Z_n\big)\big\|_{\L_4^n}^4.\nonumber
\end{align}
By Theorem \ref{append:second_taylor_thm}, the second term can be expanded as
\begin{align*}
&2\,\E_n\Big[\big\langle Y_n - Z_n, \big(Y_{n+2} - Z_{n+2}\big) - \big(Y_n - Z_n\big)\big\rangle\Big]\\
&\hspace{5mm} = 2\,\Big\langle Y_n - Z_n,\E_n\Big[\big(Y_{n+2} - Z_{n+2}\big) - \big(Y_n - Z_n\big)\Big]\Big\rangle\\
&\hspace{5mm} = 2\,\Big\langle Y_n - Z_n, \E_n\Big[\Big(f\big(Z_n\big) - f\big(\widetilde{Z}_n\big)\Big)h + \frac{1}{2}\Big(g\big(Z_n\big) - g\big(\widetilde{Z}_n\big)\Big)\big(W_n + W_{n+\frac{1}{2}}\big)\Big]\Big\rangle\\
&\hspace{10mm} + 2\,\Big\langle Y_n - Z_n, \frac{1}{2}\hspace{0.25mm}\E_n\Big[\Big(g^\prime\big(Z_n\big)\big(g\big(\widetilde{Z}_n\big)\big(W_n + W_{n+1}\big)\big)\Big)W_{n+1}\Big]\Big\rangle\\
&\hspace{10mm} - 2\,\Big\langle Y_n - Z_n, \frac{1}{2}\hspace{0.25mm}\E_n\Big[\Big(g^\prime\big(\widetilde{Z}_n\big)\big(g\big(Z_n\big)W_n\big)\Big)\big(W_n + W_{n+1}\big) + R_n^Z\Big]\Big\rangle\\[3pt]
&\hspace{5mm} = 2\,\big\langle Y_n - Z_n, \big(f\big(Z_n\big) - f\big(\widetilde{Z}_n\big)\big)\big\rangle h + \big\langle Y_n - Z_n, \big(g^\prime\big(Z_n\big)g\big(\widetilde{Z}_n\big) - g^\prime\big(Z_n\big)g\big(Z_n\big)\big)\mathrm{I}_d\big\rangle h\\[3pt]
&\hspace{10mm} + \big\langle Y_n - Z_n, \big(g^\prime\big(Z_n\big)g\big(Z_n\big) - g^\prime\big(\widetilde{Z}_n\big)g\big(Z_n\big)\big)\mathrm{I}_d\big\rangle h + 2\big\langle Y_n - Z_n, \E_n\big[R_n^Z\big]\big\rangle.
\end{align*}

The above can be estimated using the Cauchy-Schartz inequality and Young's inequality as
\begin{align*}
&\Big|2\,\E_n\Big[\big\langle Y_n - Z_n, \big(Y_{n+2} - Z_{n+2}\big) - \big(Y_n - Z_n\big)\big\rangle\Big]\Big|\\
&\hspace{5mm} \leq 2\big\|Y_n - Z_n\big\|_2\big\|f\big(Z_n\big) - f\big(\widetilde{Z}_n\big)\big\|_2 h + \big\|Y_n - Z_n\big\|_2\big\|\big(g^\prime\big(Z_n\big)g\big(\widetilde{Z}_n\big) - g^\prime\big(Z_n\big)g\big(Z_n\big)\big)\mathrm{I}_d\big\|_2 h\\[1pt]
&\hspace{10mm} + \big\|Y_n - Z_n\big\|_2\big\|\big(g^\prime\big(Z_n\big)g\big(Z_n\big) - g^\prime\big(\widetilde{Z}_n\big)g\big(Z_n\big)\big)\mathrm{I}_d\big\|_2 h + 2\big\|Y_n - Z_n\big\|_2\big\|\E_n\big[R_n^Z\big]\big\|_2\\[3pt]
&\hspace{5mm} \leq 2\|f^\prime\|_\infty \big\|Y_n - Z_n\big\|_2\big\|Z_n - \widetilde{Z}_n\big\|_2 h + \big\|g^\prime\big\|_\infty^2\big\|Y_n - Z_n\big\|_2\|Z_n - \widetilde{Z}_n\big\|_2 dh\\
&\hspace{10mm} + \big\|g^{\prime\prime}\big\|_\infty\big\|g\big\|_\infty\big\|Y_n - Z_n\big\|_2\|Z_n - \widetilde{Z}_n\big\|_2 dh + 2\big\|Y_n - Z_n\big\|_2 h^{\frac{1}{2}}\big\|\E_n\big[R_n^Z\big]\big\|_2 h^{-\frac{1}{2}}\\
&\hspace{5mm} \leq \big(4\|f^\prime\|_\infty + 2\big\|g^\prime\big\|_\infty^2 d + 2\big\|g^{\prime\prime}\big\|_\infty\|g\|_\infty d + 1\big)\big\|Y_n - Z_n\big\|_2^2 h + \frac{1}{2}\big\|\E_n\big[R_n^Z\big]\big\|_2^2 h^{-1}.
\end{align*}
By Jensen's inequality and Theorem \ref{append:second_taylor_thm}, we have that $\big\|\E_n\big[R_n^Z\big]\big\|_2^2 \leq \big\|R_n^Z\big\|_{\L_2^n}^2 = O\big(h^3\big)$. Therefore
\begin{align*}
\Big|\,\E_n\Big[\big\langle Y_n - Z_n, \big(Y_{n+2} - Z_{n+2}\big) - \big(Y_n - Z_n\big)\big\rangle\Big]\Big| \leq O(h)\cdot\big\|Y_n - Z_n\big\|_2^2 + O(h^2)\,.
\end{align*}

The final term in (\ref{append:first_l4_expand}) can be estimated using Minkowski's inequality as
\begin{align*}
&\Big\|\big(Y_{n+2} - Z_{n+2}\big) - \big(Y_n - Z_n\big)\Big\|_{\L_p^n}\\
&\hspace{5mm} = \Big\|\Big(f\big(Z_n\big) - f\big(\widetilde{Z}_n\big)\Big)h + \frac{1}{2}\Big(g\big(Z_n\big) - g\big(\widetilde{Z}_n\big)\Big)\big(W_n + W_{n+\frac{1}{2}}\big)\\
&\hspace{15mm} + \frac{1}{2}\Big(g^\prime\big(Z_n\big)\big(g\big(\widetilde{Z}_n\big)\big(W_n + W_{n+1}\big)\big)\Big)W_{n+1}\nonumber\\
&\hspace{15mm} - \frac{1}{2}\Big(g^\prime\big(\widetilde{Z}_n\big)\big(g\big(Z_n\big)W_n\big)\Big)\big(W_n + W_{n+1}\big) + R_n^Z\Big\|_{\L_p^n}\\
&\hspace{5mm} \leq O\big(\sqrt{h}\,\big)\cdot\big\|Y_n - Z_n\big\|_2 + O(h).
\end{align*}
Since $(a + b)^k\leq 2^{k-1}(a^k + b^k)$ for $a,b \geq 0$ (which follows from Jensen's inequality), this gives
\begin{align*}
&\Big\|\big(Y_{n+2} - Z_{n+2}\big) - \big(Y_n - Z_n\big)\Big\|_{\L_p^n}^k \leq O\big(h^{\frac{1}{2}k}\big)\cdot\big\|Y_n - Z_n\big\|_2^k + O\big(h^k\big).
\end{align*}
Hence the estimate (\ref{append:first_l4_expand}) becomes
\begin{align}\label{append:second_l4_expand}
\big\|Y_{n+2} - Z_{n+2}\big\|_{\L_4^n}^4
& \leq \big\|Y_n - Z_n\big\|_2^4 + 4\,\big\|Y_n - Z_n\big\|_2^2\big(O(h)\cdot\big\|Y_n - Z_n\big\|_2^2 + O(h^2)\big)\\
&\hspace{5mm}+ 6\big\|Y_n - Z_n\big\|_2^2\big(O(h)\cdot\big\|Y_n - Z_n\big\|_2^2 + O\big(h^2\big)\big)\nonumber\\
&\hspace{5mm} + 4\big\|Y_n - Z_n\big\|_2\big(O\big(h^{\frac{3}{2}}\big)\cdot\big\|Y_n - Z_n\big\|_2^3 + O\big(h^3\big)\big)\nonumber\\
&\hspace{5mm} + O\big(h^2\big)\cdot\big\|Y_n - Z_n\big\|_2^4 + O\big(h^4\big).\nonumber
\end{align}
By Young's inequality, we can further estimate the terms which do not contain $\big\|Y_n - Z_n\big\|_2^4$ as 
\begin{align*}
\big\|Y_n - Z_n\big\|_2^2\cdot O(h^2) & = \big\|Y_n - Z_n\big\|_2^2 h^\frac{1}{2} \cdot O\big(h^\frac{3}{2}\big)\\
& \leq \frac{1}{2}\big\|Y_n - Z_n\big\|_2^4	 h + O\big(h^3\big),\\[3pt]
\big\|Y_n - Z_n\big\|_2\cdot O(h^3) & = \big\|Y_n - Z_n\big\|_2\,h \cdot O\big(h^2\big)\\
& \leq \frac{1}{2}\big\|Y_n - Z_n\big\|_2^2\,h^2 + O\big(h^4\big)\\
& \leq \frac{1}{4}\big\|Y_n - Z_n\big\|_2^4\,h + h^3 + O\big(h^4\big).
\end{align*}
Finally, by applying the above two estimates to the inequality (\ref{append:second_l4_expand}), we arrive at
\begin{align*}
\big\|Y_{n+2} - Z_{n+2}\big\|_{\L_4^n}^4 & \leq \big(1 + O(h)\big)\big\|Y_n - Z_n\big\|_{\L_4^n}^4 + O\big(h^3\big),
\end{align*}
which gives the desired result.
\end{proof}
We can now prove that $Y$ and $Z$ become close to each other at a rate of $O\big(\sqrt{h}\,\big)$ in the $\L_4$-norm.\smallbreak

\begin{theorem}
[Global bound for $Y - Z$]\label{append:global_bound_thm} Let $h_{\max} > 0$ be fixed. There exist a constant $C_1 > 0$ such that for all $n\in\{0,1,\cdots, N\}$,
\begin{align*}
\big\|Y_n - Z_n\big\|_{\L_4} \leq C_1\sqrt{h}\,,
\end{align*}
provided $h\leq h_{\max}\,$.
\end{theorem}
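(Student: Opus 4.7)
The plan is to iterate the one-step (well, two-step) local estimate from Theorem \ref{append:local_bound_thm}, in the spirit of a discrete Gronwall argument, starting from the trivial initial condition $Y_0 = Z_0$. This reduces the global bound to a geometric sum.

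First I would take unconditional expectations of the bound in Theorem \ref{append:local_bound_thm}. Since $\|Y_n - Z_n\|_2^4$ is $\mathcal{F}_{t_n}$-measurable and $\E\big[\E_n[\|Y_{n+2}-Z_{n+2}\|_2^4]\big] = \|Y_{n+2}-Z_{n+2}\|_{\L_4}^4$ by the tower property, the tower property gives the scalar recurrence
\begin{equation*}
a_{n+2} \leq e^{c_1 h}\, a_n + c_2 h^3, \qquad a_n := \big\|Y_n - Z_n\big\|_{\L_4}^4,
\end{equation*}
valid for all $n \in \{0,1,\ldots, N-2\}$ with $h \leq h_{\max}$. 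Since $Y_0 = Z_0 = y_0$, we have $a_0 = 0$, and we may also bound $a_1$ directly from the one-step formulas: expanding
\begin{equation*}
Y_1 - Z_1 = \tfrac{1}{2}\bigl(f(Z_1)-f(Z_0)\bigr)h + \tfrac{1}{2}\bigl(g(Z_1)-g(Z_0)\bigr)W_0 - (Y_0-Z_0),
\end{equation*}
and using boundedness of $f,g$ together with $\|W_0\|_{\L_p} = O(\sqrt{h})$, we get $a_1 = O(h^2)$ (in fact better), so certainly $a_1 \leq C' h^2$ for some constant $C'$ depending on $h_{\max}$.

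Next, I would iterate the recurrence separately along the even-indexed and odd-indexed subsequences. For even $n = 2m \leq N$, unrolling gives
\begin{equation*}
a_{2m} \leq c_2 h^3 \sum_{k=0}^{m-1} e^{k c_1 h} = c_2 h^3\,\frac{e^{mc_1 h}-1}{e^{c_1 h}-1} \leq \frac{c_2 h^3}{c_1 h}\bigl(e^{c_1 T/2}-1\bigr) \leq C_{\mathrm{even}}\, h^2,
\end{equation*}
using $e^{c_1 h}-1 \geq c_1 h$ and $mh \leq T/2$. For odd $n = 2m+1$, the same unrolling starting from $a_1$ yields $a_{2m+1} \leq e^{c_1 T/2} a_1 + C_{\mathrm{even}} h^2 \leq C_{\mathrm{odd}} h^2$. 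Taking fourth roots of $a_n \leq C\, h^2$ uniformly in $n$ gives the desired $\|Y_n - Z_n\|_{\L_4} \leq C_1 \sqrt{h}$.

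The arguments are essentially routine once the local bound is in hand; the one point requiring a small amount of care is handling the odd-indexed subsequence, since the local bound steps by two. This is why I would first establish the single-step estimate $a_1 = O(h^2)$ directly from Definition \ref{append:rev_heun_def}, so that the two iterated geometric series (even and odd) can both be closed with constants depending only on $T$, $h_{\max}$, $d$, and the supremum norms of $f, g$ and their derivatives.
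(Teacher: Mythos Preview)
Your proof is correct and follows essentially the same approach as the paper: apply the tower property to Theorem~\ref{append:local_bound_thm} to obtain the unconditional recurrence $a_{n+2}\leq e^{c_1 h}a_n + c_2 h^3$, then sum the geometric series along even indices starting from $a_0=0$. The only minor difference is in the treatment of odd indices: the paper bounds $\|Y_{2n+1}-Z_{2n+1}\|_{\L_4}$ directly from the preceding even-index term plus the one-step increment $\|(Y_{2n+1}-Z_{2n+1})-(Y_{2n}-Z_{2n})\|_{\L_4}=O(\sqrt{h})$, whereas you first establish $a_1=O(h^2)$ and then iterate the two-step recurrence along the odd subsequence; both routes are elementary and give the same bound.
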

\begin{proof}
By the tower property of expectations, it follows from Theorem \ref{append:local_bound_thm} that
\begin{align*}
\big\|Y_{n+2} - Z_{n+2}\big\|_{\L_4}^4 \leq e^{c_1 h}\big\|Y_n - Z_n\big\|_{\L_4}^4 + c_2 h^3,
\end{align*}
which, along with the fact that $Y_0 = Z_0\,$, implies that
\begin{align*}
\big\|Y_{2n} - Z_{2n}\big\|_{\L_4}^4 \leq c_2\sum_{k=0}^{n-1} e^{c_1 kh} h^2 = c_2\,\frac{e^{c_1 nh} - 1}{e^{c_1 h} - 1} h^2 \leq c_2\,\frac{e^{\frac{1}{2}c_1 T} - 1}{e^{c_1 h} - 1} h^3  = O\big(h^2\big).
\end{align*}
It is now straightforward to estimate the $\L_4$-norm of $Y_{2n+1} - Z_{2n+1}$ as
\begin{align*}
&\big\|Y_{2n+1} - Z_{2n+1}\big\|_{\L_4}\\[1pt]
&\hspace{5mm} \leq \big\|Y_{2n} - Z_{2n}\big\|_{\L_4} + \big\|\big(Y_{2n+1} - Z_{2n+1}\big) - \big(Y_{2n} - Z_{2n}\big)\big\|_{\L_4}\\
&\hspace{5mm}\leq 2\big\|Y_{2n} - Z_{2n}\big\|_{\L_4} + \frac{1}{2}\big\|\big(f\big(Z_{2n+1}\big) - f\big(Z_{2n}\big)\big)h + \big(g\big(Z_{2n+1}\big) - g\big(Z_{2n}\big)\big)W_{2n}\big\|_{\L_4}\\
&\hspace{5mm} = O\big(\sqrt{h}\,\big),
\end{align*}
which gives the desired result.
\end{proof}

\subsection{Strong convergence of the reversible Heun method}\label{sect:rev_heun_conv}

To begin, we will Taylor expand the terms in the $Y_n\mapsto Y_{n+1}$ update of the reversible Heun method.\smallbreak

\begin{theorem}[Further Taylor expansions for the vector fields]\label{append:third_taylor_thm} Let $F$ be a bounded and twice continuously differentiable function on $\reals^e$ with bounded derivatives. Then for $n\geq 0$,
\begin{align*}
F\big(Z_n\big) & = F\big(Y_n\big) + F^\prime\big(Y_n\big)\big(Z_n - Y_n\big) + R_n^{F,3},\\[3pt]
F\big(Z_{n+1}\big) & = F\big(Y_n\big) + F^\prime\big(Y_n\big)\big(Y_n - Z_n\big) + F^\prime\big(Y_n\big)\big(g\big(Y_n\big)W_n\big) + R_n^{F,4},
\end{align*}
where the remainder terms $R_n^{F,3}$ and $R_n^{F,4}$ satisfy the following estimates,
\begin{align*}
\big\|R_n^{F,3}\big\|_{\L_2} & = O\big(h\big),\\
\big\|R_n^{F,4}\big\|_{\L_2} & = O\big(h\big),\\
\big\|R_n^{F,3}W_n\big\|_{\L_2} & = O\big(h^\frac{3}{2}\big),\\
\big\|R_n^{F,4}W_n\big\|_{\L_2} & = O\big(h^\frac{3}{2}\big).
\end{align*}
\end{theorem}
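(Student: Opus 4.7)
The plan is to obtain both expansions via Taylor's theorem with integral remainder (as in Theorem \ref{append:first_taylor_thm}), centred at $Y_n$. For the first expansion, I would expand $F$ in the direction $Z_n - Y_n$, producing the explicit remainder
$$R_n^{F,3} = \int_0^1 (1-t)\, F''\bigl(Y_n + t(Z_n - Y_n)\bigr)\,dt\,(Z_n - Y_n)^{\otimes 2},$$
which is $\F_{t_n}$-measurable and pointwise bounded by $\tfrac{1}{2}\|F''\|_\infty\|Z_n - Y_n\|_2^2$. Combining this with the global $\L_4$ estimate $\|Y_n - Z_n\|_{\L_4} = O(\sqrt{h})$ from Theorem \ref{append:global_bound_thm} gives $\|R_n^{F,3}\|_{\L_2} = O(h)$ at once. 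For $R_n^{F,3} W_n$, conditioning on $\F_{t_n}$ and using independence of the increment yields $\E_n[\|R_n^{F,3} W_n\|_2^2] \leq d\,h\,\|R_n^{F,3}\|_2^2$, whence taking outer expectations gives $\|R_n^{F,3} W_n\|_{\L_2} = O(h^{3/2})$.

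For the second expansion, the strategy is to substitute the reversible Heun identity $Z_{n+1} - Y_n = (Y_n - Z_n) + f(Z_n) h + g(Z_n) W_n$ (Definition \ref{append:rev_heun_def}) into Taylor's theorem centred at $Y_n$ in direction $Z_{n+1} - Y_n$, and then further expand $g(Z_n) = g(Y_n) + O(\|Z_n - Y_n\|)$ to extract the explicit term $F'(Y_n)(g(Y_n) W_n)$. Collecting what remains into $R_n^{F,4}$ gives
\begin{align*}
R_n^{F,4} &= F'(Y_n) f(Z_n)\, h + F'(Y_n)\bigl((g(Z_n) - g(Y_n)) W_n\bigr)\\
&\quad + \int_0^1 (1-t)\, F''\bigl(Y_n + t(Z_{n+1} - Y_n)\bigr)\,dt\,(Z_{n+1} - Y_n)^{\otimes 2}.
\end{align*}
The first piece is trivially $O(h)$ in $\L_2$, and becomes $O(h^{3/2})$ after multiplying by $W_n$ by independence. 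The second piece is $O(h)$ in $\L_2$ via Cauchy--Schwarz using $\|Z_n - Y_n\|_{\L_4} = O(\sqrt{h})$ and $\|W_n\|_{\L_4} = O(\sqrt{h})$; for its product with $W_n$, conditioning on $\F_{t_n}$ replaces $\|W_n\|_2^4$ by the explicit Gaussian moment $\E_n[\|W_n\|_2^4] = O(h^2)$, giving $O(h^{3/2})$ again.

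The main obstacle is the quadratic Taylor remainder, which depends on $W_n$ through $Z_{n+1} - Y_n$ and is therefore not $\F_{t_n}$-measurable. My plan is to expand $(Z_{n+1} - Y_n)^{\otimes 2}$ as a sum of cross-products of $(Y_n - Z_n)$, $f(Z_n) h$, and $g(Z_n) W_n$, and to estimate each cross-product separately. For the $\L_2$ norm of the remainder itself, the worst contribution comes from $(g(Z_n) W_n)^{\otimes 2}$, bounded in $\L_2$ by $\|g\|_\infty^2 \|W_n\|_{\L_4}^2 = O(h)$. For the $\L_2$ norm of the remainder multiplied by $W_n$, the worst contribution again comes from this term, producing a scalar multiple of $\|W_n\|_2^3$ whose $\L_2$ norm equals the square root of the explicit moment $\E[\|W_n\|_2^6] = O(h^3)$, delivering the required $O(h^{3/2})$ bound. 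The remaining cross-products are controlled analogously, combining $\F_{t_n}$-conditioning (for the $(Y_n - Z_n)$ factors, to which Theorem \ref{append:global_bound_thm} applies) with direct Gaussian moment computations (for the $W_n$ factors).
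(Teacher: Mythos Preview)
Your proposal is correct and follows essentially the same approach as the paper: Taylor's theorem with integral remainder centred at $Y_n$, the pointwise bound $\|F''\|_\infty$ on the integral, the global $\L_4$ estimate on $Y_n - Z_n$ from Theorem \ref{append:global_bound_thm}, and $\F_{t_n}$-conditioning to separate the adapted factors from the Brownian increment. The only cosmetic difference is in handling the quadratic remainder for $k=1$: the paper bounds $\|Z_{n+1}-Y_n\|_2^4\,\|W_n\|_2^2$ directly via $(a+b)^4\leq 8a^4+8b^4$ rather than expanding $(Z_{n+1}-Y_n)^{\otimes 2}$ into its nine cross-products, but both routes lead to the same estimates.
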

\begin{proof} By Taylor's theorem with integral remainder \citep[Theorem 3.5.6]{taylorthm}, we have that for $k\in\{0,1\}$,
\begin{align*}
F\big(Z_{n+k}\big) = F\big(Y_n\big) & + F^\prime\big(Y_n\big)\big(Z_{n+k} - Y_n\big)\\
& + \int_0^1(1-t) F^{\prime\prime}\big(Y_n + t\big(Z_{n+k} - Y_n\big)\big)\,\dd t\,\big(Z_{n+k} - Y_n\big)^{\otimes 2}.
\end{align*}
By the same argument used in the proof of Theorem \ref{append:first_taylor_thm}, we can estimate the remainder term as
\begin{align*}
\bigg\|\int_0^1(1-t) F^{\prime\prime}\big(Y_n + t\big(Z_{n+k} - Y_n\big)\big)\,\dd t\,\big(Z_{n+k} - Y_n\big)^{\otimes 2}\bigg\|_{\L_2^n}^2
&\leq \frac{1}{3}\|F^{\prime\prime}\|_\infty^2\big\|Z_{n+k} - Y_n\big\|_{\L_4^n}^4\,,
\end{align*}
Using the inequality $(a+b)^4 \leq (2a^2+2b^2)^2\leq 8a^4 + 8b^4$, we have
\begin{align*}
\big\|Z_n - Y_n\big\|_{\L_4^n}^4 & = \big\|Y_n - Z_n\big\|_2^4\,,\\[5pt]
\big\|Z_{n+1} - Y_n\big\|_{\L_4^n}^4 & = \big\|Y_n - Z_n + f\big(Z_n\big)h + g\big(Z_n\big)W_n\big\|_{\L_4^n}^4\\
& \leq 8\big\|Y_n - Z_n\big\|_2^4 + 8\big\|f\big(Z_n\big)h + g\big(Z_n\big)W_n\big\|_{\L_4^n}^4\\
& \leq 8\big\|Y_n - Z_n\big\|_2^4 + 64\big\|f\big(Z_n\big)h\big\|_{\L_4^n}^4 + 64\big\|g\big(Z_n\big)W_n\big\|_{\L_4^n}^2\\
& \leq 8\big\|Y_n - Z_n\big\|_2^4 + 64\|f\|_{\infty}^4 h^4 + 64\|g\|_{\infty}^4 \|\mathcal{N}(0, \mathrm{I}_d)\|_{\L_4}^4 h^2.
\end{align*}
Therefore, by the tower property of expectations, for $k\in\{0,1\}$,
\begin{align*}
\bigg\|\int_0^1(1-t) F^{\prime\prime}\big(Y_n + t\big(Z_{n+k} - Y_n\big)\big)\,\dd t\,\big(Z_{n+k} - Y_n\big)^{\otimes 2}\bigg\|_{\L_2}^2
&\leq \frac{1}{3}\|F^{\prime\prime}\|_\infty^2 \E\Big[\big\|Z_{n+k} - Y_n\big\|_{\L_4^n}^4\Big]\\
& = O\Big(\big\|Y_n - Z_n\big\|_{\L_4}^4 + h^2\Big)\\
& = O\big(h^2\big),
\end{align*}
by the $O\big(\sqrt{h}\,\big)$ global bound on $\|Y_n - Z_n\|_{\L_4}$ in Theorem \ref{append:global_bound_thm}.\smallbreak

Similarly, for $k\in\{0,1\}$,
\begin{align*}
&\bigg\|\int_0^1(1-t)\, F^{\prime\prime}\big(Y_n + t\big(Z_{n+k} - Y_n\big)\big)\,\dd t\,\big(Z_{n+k} - Y_n\big)^{\otimes 2}\,W_n\bigg\|_{\L_2}^2\\[-3pt]
&\hspace{5mm} = \E\Bigg[\bigg\|\int_0^1(1-t)\, F^{\prime\prime}\big(Y_n + t\big(Z_{n+k} - Y_n\big)\big)\,\dd t\,\big(Z_{n+k} - Y_n\big)^{\otimes 2}\,W_n\bigg\|_{\L_2^n}^2\Bigg]\\
&\hspace{5mm}\leq \E\Bigg[\bigg\|\int_0^1 (1-t)\, F^{\prime\prime}\big(Y_n + t\big(Z_{n+k} - Y_n\big)\big)\,\dd t\,\bigg\|_2^2\Big\|\big(Z_{n+k} - Y_n\big)^{\otimes 2}\Big\|_2^2\,\big\|W_n\big\|_2^2\Bigg]\\
&\hspace{5mm}\leq \E\bigg[\int_0^1 \Big\|(1-t)\, F^{\prime\prime}\big(Y_n + t\big(Z_{n+k} - Y_n\big)\big)\Big\|_2^2 \dd t\,\big\|Z_{n+k} - Y_n\big\|_2^4\,\big\|W_n\big\|_2^2\bigg]\\
&\hspace{5mm}\leq \frac{1}{3}\|F^{\prime\prime}\|_\infty^2 \E\Big[\big\|Z_{n+k} - Y_n\big\|_2^4\,\big\|W_n\big\|_2^2\Big].
\end{align*}
We consider the $k = 0$ and $k = 1$ cases separately. If $k = 0$, then by the tower property, we have
\begin{align*}
&\bigg\|\int_0^1(1-t) F^{\prime\prime}\big(Y_n + t\big(Z_n - Y_n\big)\big)\,\dd t\,\big(Z_n - Y_n\big)^{\otimes 2}\,W_n\bigg\|_{\L_2}^2\\[-3pt]
&\hspace{5mm}\leq \frac{1}{3}\|F^{\prime\prime}\|_\infty^2\, \E\Big[\E_n\Big[\big\|Z_n - Y_n\big\|_2^4\,\big\|W_n\big\|_2^2\Big]\Big]\\
&\hspace{5mm} = \frac{1}{3}\|F^{\prime\prime}\|_\infty^2\, \E\Big[\big\|Z_n - Y_n\big\|_2^4\,\E_n\Big[\big\|W_n\big\|_2^2\Big]\Big]\\
&\hspace{5mm} = O\big(h^3\big).
\end{align*}
Slightly more care should be taken when $k=1$ since $Z_{n+1} - Y_n$ is not independent of $W_n$. 
\begin{align*}
&\bigg\|\int_0^1(1-t)\, F^{\prime\prime}\big(Y_n + t\big(Z_{n+1} - Y_n\big)\big)\,\dd t\,\big(Z_{n+1} - Y_n\big)^{\otimes 2}\,W_n\bigg\|_{\L_2}^2\\[-3pt]
&\hspace{5mm}\leq \frac{1}{3}\|F^{\prime\prime}\|_\infty^2\, \E\Big[\E_n\Big[\big\|Y_n - Z_n + f\big(Z_n\big)h + g\big(Z_n\big)W_n\big\|_2^4\,\big\|W_n\big\|_2^2\Big]\Big]\\
&\hspace{5mm}\leq \frac{1}{3}\|F^{\prime\prime}\|_\infty^2\, \E\Big[\E_n\Big[8\big\|Y_n - Z_n\big\|_2^4\,\big\|W_n\big\|_2^2 + 8\big\|f\big(Z_n\big)h + g\big(Z_n\big)W_n\big\|_2^4\,\big\|W_n\big\|_2^2\Big]\Big]\\
&\hspace{5mm}\leq \frac{8}{3}\|F^{\prime\prime}\|_\infty^2\, \E\Big[\big\|Y_n - Z_n\big\|_2^4\,\E_n\Big[\big\|W_n\big\|_2^2\Big] + \E_n\Big[\big\|f\big(Z_n\big)h + g\big(Z_n\big)W_n\big\|_2^4\,\big\|W_n\big\|_2^2\Big]\Big]\\[3pt]
&\hspace{5mm} = O\big(h^3\big).
\end{align*}
Finally, by Theorem \ref{append:global_bound_thm} and the Lipschitz continuity of $g$, $F^\prime\big(Y_n\big)\big(Z_{n+1} - Y_n\big)$ can be expanded as
\begin{align*}
&F^\prime\big(Y_n\big)\big(Z_{n+1} - Y_n\big)\\
&\hspace{5mm} = F^\prime\big(Y_n\big)\big(Y_n - Z_n\big) + F^\prime\big(Y_n\big)\big(g\big(Y_n\big)W_n\big) + F^\prime\big(Y_n\big)\big(f\big(Z_n\big)h + \big(g\big(Y_n\big) - g\big(Z_n\big)\big)W_n\big)\\
&\hspace{5mm} = F^\prime\big(Y_n\big)\big(Y_n - Z_n\big) + F^\prime\big(Y_n\big)\big(g\big(Y_n\big)W_n\big) + O(h).
\end{align*}
The result follows from the above estimates.
\end{proof}

We are now in a position to compute the local Taylor expansion of the numerical approximation $Y$.\smallbreak
\begin{theorem}[Taylor expansion of the reversible Heun method]\label{append:rhm_expansion} For $n\in\{0,1,\cdots, N-1\}$,
\begin{align*}
Y_{n+1} = Y_n + f\big(Y_n\big)h + g\big(Y_n\big)W_n + \frac{1}{2}g^\prime\big(Y_n\big)\big(g\big(Y_n\big)W_n\big)W_n + R_n^Y,
\end{align*}
where the remainder term satisfies $\|R_n^Y\|_{\L_2} = O\big(h^\frac{3}{2}\big)$.
\end{theorem}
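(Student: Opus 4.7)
The plan is to insert the two expansions from Theorem~\ref{append:third_taylor_thm} into the update rule (\ref{append:y_step}) and collect terms, using that the $(Z_n-Y_n)$ and $(Y_n-Z_n)$ pieces cancel when we symmetrise.

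First, I would apply Theorem~\ref{append:third_taylor_thm} with $F=f$ and with $F=g$, evaluated at $Z_n$ and at $Z_{n+1}$, to get
\begin{align*}
\tfrac{1}{2}(f(Z_n)+f(Z_{n+1})) &= f(Y_n) + \tfrac{1}{2}f'(Y_n)(g(Y_n)W_n) + \tfrac{1}{2}(R_n^{f,3}+R_n^{f,4}),\\
\tfrac{1}{2}(g(Z_n)+g(Z_{n+1})) &= g(Y_n) + \tfrac{1}{2}g'(Y_n)(g(Y_n)W_n) + \tfrac{1}{2}(R_n^{g,3}+R_n^{g,4}),
\end{align*}
where the cross terms $f'(Y_n)(Z_n-Y_n)$ and $f'(Y_n)(Y_n-Z_n)$ (and similarly for $g$) cancel in the symmetric sum. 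Substituting into~(\ref{append:y_step}) gives
\begin{align*}
Y_{n+1} - Y_n &= f(Y_n)h + g(Y_n)W_n + \tfrac{1}{2}g'(Y_n)(g(Y_n)W_n)W_n \\
&\quad+ \tfrac{1}{2}f'(Y_n)(g(Y_n)W_n)h + \tfrac{1}{2}(R_n^{f,3}+R_n^{f,4})h + \tfrac{1}{2}(R_n^{g,3}+R_n^{g,4})W_n,
\end{align*}
so $R_n^Y$ is precisely the last line.

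It then remains to verify $\|R_n^Y\|_{\L_2} = O(h^{3/2})$ via Minkowski's inequality, handling each piece separately. The ``extra'' drift-type correction $\tfrac{1}{2}f'(Y_n)(g(Y_n)W_n)h$ is bounded by $\tfrac{1}{2}\|f'\|_\infty\|g\|_\infty\,\|W_n\|_{\L_2}\,h = O(h^{3/2})$ since $\|W_n\|_{\L_2} = O(\sqrt{h})$. The terms $R_n^{f,3}h$ and $R_n^{f,4}h$ are $O(h)\cdot h = O(h^2)$ in $\L_2$ by Theorem~\ref{append:third_taylor_thm}, which is absorbed into $O(h^{3/2})$. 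Finally, the genuinely stochastic remainder pieces $R_n^{g,3}W_n$ and $R_n^{g,4}W_n$ are exactly where Theorem~\ref{append:third_taylor_thm} was specifically strengthened: its dedicated bounds give $\|R_n^{g,3}W_n\|_{\L_2}, \|R_n^{g,4}W_n\|_{\L_2} = O(h^{3/2})$.

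The main obstacle, and the reason this theorem is nontrivial rather than a one-line substitution, is bounding $\|R_n^{g,4}W_n\|_{\L_2}$: here $Z_{n+1}-Y_n$ (which appears squared inside the remainder) depends on $W_n$, so one cannot simply take $W_n$ outside a conditional expectation. But this is precisely what was already done in the proof of Theorem~\ref{append:third_taylor_thm}, so at this stage the only work is the bookkeeping above. A minor but worth-noting point is that a naive $\|R_n^{g,3}\|_{\L_2}\cdot\|W_n\|_{\L_2}$ Cauchy--Schwarz bound would only give $O(h^{3/2})$ via $O(h)\cdot O(\sqrt{h})$, which is borderline; using the already-proven joint estimate $\|R_n^{F,3}W_n\|_{\L_2}=O(h^{3/2})$ (and similarly for $R_n^{F,4}$) is cleaner and is what makes the estimate tight.
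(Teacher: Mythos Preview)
Your proposal is correct and follows essentially the same approach as the paper's proof: substitute the expansions of Theorem~\ref{append:third_taylor_thm} into the update rule (\ref{append:y_step}), observe the cancellation of the $\pm(Z_n-Y_n)$ linear terms, and bound the resulting remainder termwise using exactly the $\|R_n^{F,i}\|_{\L_2}$ and $\|R_n^{F,i}W_n\|_{\L_2}$ estimates from that theorem. Your write-up is in fact slightly more explicit than the paper's, which simply notes that ``the bottom line is clearly $O(h^{3/2})$ by Theorem~\ref{append:third_taylor_thm}''.
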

\begin{proof}
By Theorem \ref{append:third_taylor_thm}, we can expand $Y_{n+1}$ as
\begin{align*}
Y_{n+1} = Y_n & + \frac{1}{2}\big(f\big(Z_n\big) + f\big(Z_{n+1}\big)\big)h + \frac{1}{2}\big(g\big(Z_n\big) + g\big(Z_{n+1}\big)\big)W_n\\
= Y_n & + \frac{1}{2}\big(f\big(Y_n\big) + f^\prime\big(Y_n\big)\big(Z_n - Y_n\big) + R_n^{f,3}\big)h\\
& + \frac{1}{2}\big(f\big(Y_n\big) + f^\prime\big(Y_n\big)\big(Y_n - Z_n\big) + f^\prime\big(Y_n\big)\big(g\big(Y_n\big)W_n\big) + R_n^{f,4}\big)h\\
& + \frac{1}{2}\big(g\big(Y_n\big) + g^\prime\big(Y_n\big)\big(Z_n - Y_n\big) + R_n^{g,3}\big)W_n\\
& + \frac{1}{2}\big(g\big(Y_n\big) + g^\prime\big(Y_n\big)\big(Y_n - Z_n\big) + g^\prime\big(Y_n\big)\big(g\big(Y_n\big)W_n\big) + R_n^{g,4}\big)W_n\\
= Y_n & + f\big(Y_n\big)h + g\big(Y_n\big)W_n + \frac{1}{2}g^\prime\big(Y_n\big)\big(g\big(Y_n\big)W_n\big)W_n\\
& + \frac{1}{2}f^\prime\big(Y_n\big)\big(g\big(Y_n\big)W_n\big)h + \frac{1}{2}\big(R_n^{f,3} + R_n^{f,4}\big)h + \frac{1}{2}\big(R_n^{g,3} + R_n^{g,4}\big)W_n\,.
\end{align*}
The result now follows as the bottom line is clearly $O\big(h^\frac{3}{2}\big)$ by Theorem \ref{append:third_taylor_thm}.
\end{proof}

Just as in the numerical analysis of ODE solvers, we also compute Taylor expansions for the solution.
In our setting, we consider the following stochastic Taylor expansion for the Stratonovich SDE (\ref{append:sde2}).\smallbreak

\begin{theorem}[Stratonovich--Taylor expansion, {\citep[Proposition 5.10.1]{kloedenplaten1992}}]\label{append:sde_taylor_thm} For $n\in\{0,1,\cdots, N-1\}$,
\begin{align*}
y_{(n+1)h} = y_{nh} + f\big(y_{nh}\big)h + g\big(y_{nh}\big)W_n + g^\prime\big(y_{nh}\big)g\big(y_{nh}\big)\mathbb{W}_n + R_n^y\,,
\end{align*}
where $\mathbb{W}_n$ denotes the second iterated integral of Brownian motion, that is the $d\times d$ matrix given by
\begin{align*}
\mathbb{W}_n := \int_{nh}^{(n+1)h} \big(W_t - W_{nh}\big)\otimes \circ \,\dd W_t\,,
\end{align*}
and the remainder term satisfies $\|R_n^y\|_{\L_2} = O\big(h^\frac{3}{2}\big)$.
\end{theorem}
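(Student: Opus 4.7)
The plan is to derive this Stratonovich--Taylor expansion by iteratively substituting the SDE into itself, which is the standard route used by \citet{kloedenplaten1992}. Writing (\ref{append:sde2}) in integral form on $[nh, (n+1)h]$ gives
\begin{equation*}
y_{(n+1)h} = y_{nh} + \int_{nh}^{(n+1)h} f(y_s)\,\dd s + \int_{nh}^{(n+1)h} g(y_s)\circ\,\dd W_s\,.
\end{equation*}
First I would apply the Stratonovich chain rule to each integrand $f(y_s)$ and $g(y_s)$, replacing them by their value at $y_{nh}$ plus an integral along the trajectory. Substituting these back and pulling out the constant leading terms yields exactly $f(y_{nh})h$, $g(y_{nh})W_n$, and (from expanding $g(y_s)$ to first order inside the Stratonovich integral) the genuine second-order term $g'(y_{nh}) g(y_{nh}) \mathbb{W}_n$. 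Everything else is collected into $R_n^y$, which is a finite sum of iterated integrals of higher order and residual substitution errors.

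Next I would show that every remaining iterated integral is $O(h^{3/2})$ in $\L_2$. The standard size rule for Stratonovich multiple integrals is that an iterated integral containing $k_1$ time integrations and $k_2$ Brownian integrations has $\L_2$ norm of order $h^{k_1 + k_2/2}$; combined with the assumed boundedness of $f$, $g$ and their first two derivatives, this immediately eliminates all the remainder terms at the required rate. The one technical subtlety is that the outer integrands in the dominant terms should be evaluated at $y_{nh}$ rather than at $y_s$; by the Lipschitz continuity of $f'$, $g'$ and the a priori bound $\|y_s - y_{nh}\|_{\L_p} = O(\sqrt{h})$ (obtained via Burkholder--Davis--Gundy and the boundedness hypotheses), each such substitution error is also $O(h^{3/2})$ in $\L_2$ and may be absorbed into $R_n^y$.

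The main obstacle is the bookkeeping for Stratonovich iterated integrals. Unlike the It{\^o} version, the Stratonovich double integral $\mathbb{W}_n$ has a nonzero symmetric part equal to $\tfrac{1}{2} W_n \otimes W_n$, which is only $O(h)$ in $\L_2$ rather than $O(h^{3/2})$; this is precisely why $g'(y_{nh}) g(y_{nh}) \mathbb{W}_n$ must appear explicitly in the expansion and cannot be swept into the remainder. Since this result is classical and stated in the theorem as a direct citation to \citet[Proposition 5.10.1]{kloedenplaten1992}, the pragmatic route in the paper is simply to invoke that reference rather than reproduce the full combinatorial derivation, and I would do the same.
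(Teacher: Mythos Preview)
Your proposal is correct and matches the paper's approach: the paper simply cites \citet[Proposition 5.10.1]{kloedenplaten1992} without giving its own proof, treating the Stratonovich--Taylor expansion as a classical result. Your sketch of the underlying iterated-substitution argument is accurate, and your final remark that the pragmatic route is to invoke the reference is exactly what the paper does.
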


Using the above theorems, we can now obtain a Taylor expansion for the difference $Y_{n+1} - y_{(n+1)h}\,$.\smallbreak
\begin{theorem}\label{append:error_expand_thm} For $n\in\{0,1,\cdots, N-1\}$, , the difference $Y_{n+2} - Z_{n+2}$ can be expanded as
\begin{align*}
Y_{n+1} - y_{(n+1)h} =  Y_n  - y_{nh} & + \big(f\big(Y_n\big) - f\big(y_{nh}\big)\big)h + \big(g\big(Y_n\big) - g\big(y_{nh}\big)\big)W_n\\[3pt]
& + \frac{1}{2}\Big(g^\prime\big(Y_n\big)g\big(Y_n\big) - g^\prime\big(y_{nh}\big)g\big(y_{nh}\big)\Big) W_n^{\otimes 2}\\
& - g^\prime\big(y_{nh}\big)g\big(y_{nh}\big)\Big(\mathbb{W}_n - \frac{1}{2}W_n^{\otimes 2}\Big) + R_n\,,
\end{align*}
where the remainder term satisfies $\|R_n\|_{\L_2} = O\big(h^\frac{3}{2}\big)\,$.
\end{theorem}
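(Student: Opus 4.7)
The plan is to derive the expansion by directly combining the two Taylor expansions already available in the excerpt: Theorem \ref{append:rhm_expansion} for the reversible Heun update $Y_{n+1}$ and Theorem \ref{append:sde_taylor_thm} for the exact Stratonovich solution $y_{(n+1)h}$. Both expansions have a common structure (a drift term $f(\cdot)h$, a diffusion term $g(\cdot)W_n$, a second-order Stratonovich correction, and a remainder of size $O(h^{3/2})$ in $\L_2$), so after subtraction the statement should fall out by a simple algebraic manipulation.

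Concretely, I would first write
\begin{align*}
Y_{n+1} - y_{(n+1)h}
&= (Y_n - y_{nh}) + \bigl(f(Y_n) - f(y_{nh})\bigr)h + \bigl(g(Y_n) - g(y_{nh})\bigr)W_n\\
&\quad + \tfrac{1}{2}g'(Y_n)\bigl(g(Y_n)W_n\bigr)W_n - g'(y_{nh})g(y_{nh})\mathbb{W}_n + R_n^Y - R_n^y,
\end{align*}
by subtracting the expansion in Theorem \ref{append:sde_taylor_thm} from the one in Theorem \ref{append:rhm_expansion}. The next step is to isolate the $(\mathbb{W}_n - \tfrac{1}{2}W_n^{\otimes 2})$ term by adding and subtracting $\tfrac{1}{2}g'(y_{nh})g(y_{nh})W_n^{\otimes 2}$, which gives
\begin{align*}
\tfrac{1}{2}g'(Y_n)g(Y_n)W_n^{\otimes 2} - g'(y_{nh})g(y_{nh})\mathbb{W}_n
&= \tfrac{1}{2}\bigl(g'(Y_n)g(Y_n) - g'(y_{nh})g(y_{nh})\bigr)W_n^{\otimes 2}\\
&\quad - g'(y_{nh})g(y_{nh})\bigl(\mathbb{W}_n - \tfrac{1}{2}W_n^{\otimes 2}\bigr),
\end{align*}
using the identification $g'(Y_n)(g(Y_n)W_n)W_n = g'(Y_n)g(Y_n)\,W_n^{\otimes 2}$ under the tensor notation established at the start of the appendix.

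For the remainder, setting $R_n = R_n^Y - R_n^y$, Minkowski's inequality combined with the bounds $\|R_n^Y\|_{\L_2} = O(h^{3/2})$ and $\|R_n^y\|_{\L_2} = O(h^{3/2})$ from Theorems \ref{append:rhm_expansion} and \ref{append:sde_taylor_thm} immediately gives $\|R_n\|_{\L_2} = O(h^{3/2})$, which is the required estimate.

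There is no real obstacle here: the theorem is essentially a bookkeeping step that puts the two previous expansions into a common form suitable for the forthcoming global error estimate. The only point that requires a little care is matching the tensor-product notation between $g'(Y_n)(g(Y_n)W_n)W_n$ and $g'(Y_n)g(Y_n)W_n^{\otimes 2}$, and checking that the identification $g'g\,W_n^{\otimes 2}$ indeed contracts as $(g'(Y_n)g(Y_n))(W_n,W_n)$ in the same way the second iterated integral $\mathbb{W}_n$ contracts against $g'g$; otherwise the argument is a direct subtraction-plus-rearrangement.
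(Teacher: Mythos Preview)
Your proposal is correct and follows essentially the same route as the paper: subtract the expansion of $y_{(n+1)h}$ from Theorem~\ref{append:sde_taylor_thm} off the expansion of $Y_{n+1}$ from Theorem~\ref{append:rhm_expansion}, add and subtract $\tfrac{1}{2}g'(y_{nh})g(y_{nh})W_n^{\otimes 2}$ to isolate the $\mathbb{W}_n-\tfrac12 W_n^{\otimes 2}$ term, and bound $R_n=R_n^Y-R_n^y$ via Minkowski. The paper's proof is even terser (it writes the subtraction and states the Minkowski bound without spelling out the add/subtract step), so your write-up is, if anything, slightly more explicit.
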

\begin{proof} Expanding $Y_{n+1} - y_{(n+1)h}$ using Theorem \ref{append:rhm_expansion} and Theorem \ref{append:sde_taylor_thm} gives
\begin{align*}
Y_{n+1} - y_{(n+1)h} & =  Y_{n+1}  - y_{nh} - f\big(y_{nh}\big)h - g\big(y_{nh}\big)W_n - g^\prime\big(y_{nh}\big)g\big(y_{nh}\big)\mathbb{W}_n - R_n^y\\
& = Y_n  - y_{nh} + f\big(Y_n\big)h + g\big(Y_n\big)W_n + \frac{1}{2}\, g^\prime\big(Y_n\big)\big(g\big(Y_n\big)W_n\big)W_n\\
&\hspace{5mm} - f\big(y_{nh}\big)h - g\big(y_{nh}\big)W_n - g^\prime\big(y_{nh}\big)g\big(y_{nh}\big)\mathbb{W}_n + R_n^Y - R_n^y\,,
\end{align*}
where the remainder term $R_n$ satisfies $\|R_n\|_{\L_2}\hspace{-0.5mm} \leq \|R_n^Y\|_{\L_2} + \|R_n^y\|_{\L_2} = O\big(h^\frac{3}{2}\big)$.
\end{proof}

Having derived Taylor expansions for the approximation and solution processes, we will establish the main results of the section (namely, local and global error estimates for the reversible Heun method).\smallbreak

\begin{theorem}[Local error estimate for the reversible Heun method]\label{append:local_error_thm} Let $h_{\max} > 0$ be fixed. Then there exist constants $c_3, c_4 > 0$ such that for all $n\in\{0,1,\cdots, N-1\}$,
\begin{align}\label{append:local_error}
\big\|Y_{n+1} - y_{(n+1)h}\big\|_{\L_2}^2 \leq e^{c_3 h}\big\|Y_n - y_{nh}\big\|_{\L_2}^2 + c_4 h^2,
\end{align}
provided $h\leq h_{\max}\,$.
\end{theorem}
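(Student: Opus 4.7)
The plan is to square the local error expansion from Theorem~\ref{append:error_expand_thm} and take expectations, exploiting the martingale structure of the Brownian increments to kill the low-order cross terms. Setting $E_n := Y_n - y_{nh}$ and writing the increment as
\[
E_{n+1} - E_n = A_n\,h + B_n W_n + C_n W_n^{\otimes 2} + D_n L_n + R_n,
\]
where $A_n := f(Y_n) - f(y_{nh})$, $B_n := g(Y_n) - g(y_{nh})$, $C_n := \tfrac{1}{2}(g'(Y_n)g(Y_n) - g'(y_{nh})g(y_{nh}))$, $D_n := -g'(y_{nh})g(y_{nh})$ are all $\mathcal{F}_{nh}$-measurable and $L_n := \mathbb{W}_n - \tfrac{1}{2}W_n^{\otimes 2}$ is the pure Lévy-area part, I would expand $\|E_{n+1}\|_2^2 = \|E_n\|_2^2 + 2\langle E_n, E_{n+1}-E_n\rangle + \|E_{n+1}-E_n\|_2^2$ and take the overall expectation.

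First I would eliminate cross terms by conditioning. Because $\E_n[W_n] = 0$ and $\E_n[L_n] = 0$ (the Stratonovich iterated integral $\mathbb{W}_n$ has symmetric part $\tfrac{1}{2}W_n\otimes W_n$, so $L_n$ is pure antisymmetric Lévy area), the tower property annihilates $\E[\langle E_n, B_n W_n\rangle]$, $\E[\langle E_n, D_n L_n\rangle]$, and every analogous pairing against the $\mathcal{F}_{nh}$-measurable $A_n h$. The only surviving inner-product contributions are $2h\,\E[\langle E_n, A_n\rangle]$ and $2\,\E[\langle E_n, C_n W_n^{\otimes 2}\rangle]$; using $\E_n[W_n^{\otimes 2}] = h\,\mathrm{I}_d$ the second reduces to an $O(h)$ factor against an $\mathcal{F}_{nh}$-measurable quantity. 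Both are then bounded by $c h\,\|E_n\|_{\L_2}^2$ via Cauchy--Schwarz together with the Lipschitz bounds $\|A_n\|_2, \|C_n\|_2 \leq K\|E_n\|_2$, which follow from the assumed boundedness of the derivatives of $f$, $g$ and hence of $g'g$.

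Second I would bound $\|E_{n+1}-E_n\|_{\L_2}^2$ by Minkowski and direct moment estimates. The diffusion increment satisfies $\E[\|B_n W_n\|_2^2] = h\,\E[\|B_n\|_2^2] \leq c h\,\|E_n\|_{\L_2}^2$ by Lipschitz $g$; the $C_n W_n^{\otimes 2}$ term gives $O(h^2)\cdot\|E_n\|_{\L_2}^2$ via $\E[\|W_n\|_2^4] = O(h^2)$; the Lévy-area term satisfies $\E[\|D_n L_n\|_2^2] = O(h^2)$ from the standard $\E[\|L_n\|_2^2] = O(h^2)$ estimate and boundedness of $g'g$; the drift contributes $\|A_n h\|_{\L_2}^2 = O(h^2)\cdot\|E_n\|_{\L_2}^2$; and the remainder gives $\|R_n\|_{\L_2}^2 = O(h^3)$ by Theorem~\ref{append:error_expand_thm}. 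All cross products within this quadratic are handled analogously via Young's inequality, each yielding either $O(h)\|E_n\|_{\L_2}^2$ or $O(h^2)$.

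Assembling everything produces an inequality of the form $\|E_{n+1}\|_{\L_2}^2 \leq (1 + c_3 h)\|E_n\|_{\L_2}^2 + c_4 h^2$, and (\ref{append:local_error}) then follows from $1 + c_3 h \leq e^{c_3 h}$. The main delicate step is verifying that the Brownian cross terms involving $W_n$ and $L_n$ genuinely vanish under the tower property -- so that they do not pollute the bound with spurious $O(\sqrt{h})$ contributions -- and confirming the Lipschitz regularity of the composite field $g'g$ from the $C^2$ assumptions on $g$; the remainder is essentially careful bookkeeping in powers of $h$.
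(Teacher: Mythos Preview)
Your proposal is correct and follows essentially the same approach as the paper: expand $\|E_{n+1}\|_{\L_2}^2$, use the tower property with $\E_n[W_n]=\E_n[L_n]=0$ to eliminate the dangerous cross terms, bound the surviving inner products via Lipschitz estimates on $f$ and $g'g$, and control the quadratic term by Minkowski and Young. The one contribution you do not list explicitly among the surviving inner products, $2\,\E[\langle E_n, R_n\rangle]$, is handled in the paper by the same Young-inequality splitting $\|E_n\|_2\,h^{1/2}\cdot\|\E_n[R_n]\|_2\,h^{-1/2}$ and yields only $O(h)\|E_n\|_{\L_2}^2 + O(h^2)$.
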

\begin{proof} Expanding the left hand side of (\ref{append:local_error}) and applying the tower property of expectations yields
\begin{align*}
\big\|Y_{n+1} - y_{(n+1)h}\big\|_{\L_2}^2
& = \big\|Y_n - y_{nh}\big\|_{\L_2}^2 + 2\,\E\Big[\big\langle Y_n - y_{nh}\,, \big(Y_{n+1} - y_{(n+1)h}\big) - \big(Y_n - y_{nh}\big)\big\rangle\Big]\\
&\hspace{5mm} + \big\|\big(Y_{n+1} - y_{(n+1)h}\big) - \big(Y_n - y_{nh}\big)\big\|_{\L_2}^2\\
& = \big\|Y_n - y_{nh}\big\|_{\L_2}^2 + 2\,\E\Big[\big\langle Y_n - y_{nh}\,, \E_n\big[\big(Y_{n+1} - y_{(n+1)h}\big) - \big(Y_n - y_{nh}\big)\big]\big\rangle\Big]\\
&\hspace{5mm} + \big\|\big(Y_{n+1} - y_{(n+1)h}\big) - \big(Y_n - y_{nh}\big)\big\|_{\L_2}^2.
\end{align*}
A simple application of the Cauchy-Schwarz inequality then gives
 \begin{align}\label{append:initial_error_expand}
\big\|Y_{n+1} - y_{(n+1)h}\big\|_{\L_2}^2
& \leq \big\|Y_n - y_{nh}\big\|_{\L_2}^2 + \big\|\big(Y_{n+1} - y_{(n+1)h}\big) - \big(Y_n - y_{nh}\big)\big\|_{\L_2}^2\\
&\hspace{5mm} + 2\,\E\Big[\big\|Y_n - y_{nh}\big\|_2\big\|\E_n\big[\big(Y_{n+1} - y_{(n+1)h}\big) - \big(Y_n - y_{nh}\big)\big]\big\|_2\Big]\,.\nonumber
\end{align}
To further estimate the above, we note that $\mathbb{W}_n$ and $\frac{1}{2}W_n^{\otimes 2}$ have the same expectation as
\begin{align*}
\E_n\big[\mathbb{W}_n\big] & = \E_n\bigg[\int_{nh}^{(n+1)h} \big(W_t - W_{nh}\big)\otimes \circ \,\dd W_t\bigg]\\
& = \E_n\bigg[\int_{nh}^{(n+1)h} \big(W_t - W_{nh}\big)\otimes \dd W_t + \frac{1}{2}\mathrm{I}_d h \bigg]\\
& = \frac{1}{2}\mathrm{I}_d h,
\end{align*}
where the second lines follws by the It\^{o}--Stratonovich correction.

This gives the required $O(h)$ cancellation when we expand the final term in (\ref{append:initial_error_expand}) using Theorem \ref{append:error_expand_thm}.
\begin{align*}
&\big\|\E_n\big[\big(Y_{n+1} - y_{(n+1)h}\big) - \big(Y_n - y_{nh}\big)\big]\big\|_2\\
&\hspace{5mm} = \Big\|\big(f\big(Y_n\big) - f\big(y_{nh}\big)\big)h + \frac{1}{2}\Big(g^\prime\big(Y_n\big)g\big(Y_n\big) - g^\prime\big(y_{nh}\big)g\big(y_{nh}\big)\Big) \mathrm{I}_d h + \E_n\big[R_n\big]\big\|_2\\
&\hspace{5mm} \leq \|f^\prime\|_\infty \|Y_n - y_{nh}\|_2\,h + \frac{1}{2}\big(\|g^\prime\|_\infty^2 + \|g^{\prime\prime}\|_\infty\|g\|_\infty\big) \|Y_n - y_{nh}\|_2\,dh + \big\|\E_n\big[R_n\big]\big\|_2\,.
\end{align*}
Similar to the proof of Theorem \ref{append:local_bound_thm}, we use Young's inequality to estimate remainder terms.
\begin{align*}
&\E\Big[\big\|Y_n - y_{nh}\big\|_2\big\|\E_n\big[\big(Y_{n+1} - y_{(n+1)h}\big) - \big(Y_n - y_{nh}\big)\big]\big\|_2\Big]\\
&\hspace{5mm} \leq \big\|Y_n - y_{nh}\big\|_{\L_2}^2 \cdot O(h) + \E\Big[\big\|Y_n - y_{nh}\big\|_2 h^{\frac{1}{2}}\big\|\E_n\big[R_n\big]\big\|_2 h^{-\frac{1}{2}}\Big]\\
&\hspace{5mm} \leq \big\|Y_n - y_{nh}\big\|_{\L_2}^2 \cdot O(h) + \frac{1}{2}\,\E\Big[\big\|Y_n - y_{nh}\big\|_2^2\,h + \big\|\E_n\big[R_n\big]\big\|_2^2\,h^{-1}\Big]\\
&\hspace{5mm} \leq \big\|Y_n - y_{nh}\big\|_{\L_2}^2 \cdot O(h) + O\big(h^2\big),
\end{align*}
where the final line is a consequence of Jensen's inequality as $\E\big[\|\E_n[R_n]\|_2^2\hspace{0.25mm}\big] \leq \|R_n\|_{\L_2}^2 = O\big(h^3\big)$.\smallbreak
It is straightforward to estimate the second term in (\ref{append:initial_error_expand}) using Minkowski's inequality as
\begin{align*}
\big\|\big(Y_{n+1} - y_{(n+1)h}\big) - \big(Y_n - y_{nh}\big)\big\|_{\L_2^n}
& = \Big\|\big(f\big(Y_n\big) - f\big(y_{nh}\big)\big)h + \big(g\big(Y_n\big) - g\big(y_{nh}\big)\big)W_n\\[3pt]
&\hspace{5mm} + \frac{1}{2}\Big(g^\prime\big(Y_n\big)g\big(Y_n\big) - g^\prime\big(y_{nh}\big)g\big(y_{nh}\big)\Big) W_n^{\otimes 2}\\
&\hspace{5mm} - g^\prime\big(y_{nh}\big)g\big(y_{nh}\big)\Big(\mathbb{W}_n - \frac{1}{2}W_n^{\otimes 2}\Big) + R_n\Big\|_{\L_2^n}\\
&\leq \|f^\prime\|_\infty\big\|Y_n - y_{nh}\big\|_2 h + \|g^\prime\|_\infty\big\|Y_n - y_{nh}\big\|_2 \sqrt{d}\, h^{\frac{1}{2}}\\
&\hspace{5mm} + \frac{1}{2}\big(\|g^\prime\|_\infty^2\hspace{-0.5mm} + \|g^{\prime\prime}\|_\infty\|g\|_\infty\big)\|Y_n - y_{nh}\big\|_2\|\mathcal{N}(0, \mathrm{I}_d)\|_{\L_4}^2 h\\
&\hspace{5mm} + \|g^\prime\|_\infty\|g\|_\infty\Big\|\mathbb{W}_n - \frac{1}{2}W_n^{\otimes 2}\Big\|_{\L_2^n} + \|R_n\|_{\L_2^n}\\
& \leq \|Y_n - y_{nh}\big\|_2 \cdot O\big(\sqrt{h}\,\big) + \|Y_n - Z_n\big\|_2 \cdot O\big(\sqrt{h}\,\big) + O(h).
\end{align*}
Therefore, by the $\L_4$-bound on $Y_n - Z_n$ given by Theorem \ref{append:global_bound_thm} and Young's inequality, we have
\begin{align*}
\big\|\big(Y_{n+1} - y_{(n+1)h}\big) - \big(Y_n - y_{nh}\big)\big\|_{\L_2}^2 \leq \big\|Y_n - y_{nh}\big\|_{\L_2}^2 \cdot O(h) + O\big(h^2\big).
\end{align*}
Putting this all together, the inequality (\ref{append:initial_error_expand}) becomes
\begin{align*}
\big\|Y_{n+1} - y_{(n+1)h}\big\|_{\L_2}^2
& \leq \big(1 + O(h)\big)\big\|Y_n - y_{nh}\big\|_{\L_2}^2 + O\big(h^2\big),
\end{align*}
and the result follows.
\end{proof}

Just as before, we can immediately obtain a global error estimate by chaining together local estimates.\smallbreak

\begin{theorem}[Global error estimate for the reversible Heun method]\label{append:global_error_thm} Let $h_{\max} > 0$ be fixed. Then there exists a constant $C_2 > 0$ such that for all $n\in\{0,1,\cdots, N\}$,
\begin{align*}
\big\|Y_n - y_{nh}\big\|_{\L_2} \leq C_2\sqrt{h}\,,
\end{align*}
provided $h\leq h_{\max}\,$.
\end{theorem}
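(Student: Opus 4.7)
The plan is to deduce the global bound by iterating the local one-step estimate from Theorem \ref{append:local_error_thm}, in the same spirit as the closing argument of Theorem \ref{append:global_bound_thm}. Since $Y_0 = y_0$, applying the recursion
\begin{align*}
\big\|Y_{n+1} - y_{(n+1)h}\big\|_{\L_2}^2 \leq e^{c_3 h}\big\|Y_n - y_{nh}\big\|_{\L_2}^2 + c_4 h^2
\end{align*}
inductively yields
\begin{align*}
\big\|Y_n - y_{nh}\big\|_{\L_2}^2 \leq c_4 h^2 \sum_{k=0}^{n-1} e^{c_3 k h} = c_4 h^2 \cdot \frac{e^{c_3 n h} - 1}{e^{c_3 h} - 1}.
\end{align*}

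Next I would bound the right-hand side uniformly in $n$ and $h$. Since $n h \leq T$, the numerator satisfies $e^{c_3 n h} - 1 \leq e^{c_3 T} - 1$, which is a constant depending only on $T$ (and hence on $h_{\max}$ via $T$). For the denominator, the elementary inequality $e^{c_3 h} - 1 \geq c_3 h$ gives $\frac{1}{e^{c_3 h} - 1} \leq \frac{1}{c_3 h}$. Substituting, we obtain
\begin{align*}
\big\|Y_n - y_{nh}\big\|_{\L_2}^2 \leq \frac{c_4\bigl(e^{c_3 T} - 1\bigr)}{c_3}\, h,
\end{align*}
so the result follows by taking $C_2 := \sqrt{c_4\bigl(e^{c_3 T} - 1\bigr)/c_3}$ and using the monotonicity of the square root.

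There is no real obstacle here: once Theorem \ref{append:local_error_thm} is in place, this is a routine discrete Grönwall-style chaining that mirrors the argument already used to pass from Theorem \ref{append:local_bound_thm} to Theorem \ref{append:global_bound_thm}. The only points requiring a touch of care are (i) keeping track that the constants $c_3, c_4$ depend on $h_{\max}$ but not on $h$, so that $C_2$ likewise depends only on $h_{\max}$ and on the problem data $(T, f, g)$, and (ii) noting that the estimate holds uniformly over $n \in \{0, 1, \dots, N\}$ since $n h \leq T$ throughout the range.
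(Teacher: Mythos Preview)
Your proposal is correct and follows essentially the same approach as the paper's proof: iterate the local estimate from Theorem \ref{append:local_error_thm}, sum the resulting geometric series, and use $nh \leq T$ together with $e^{c_3 h} - 1 \geq c_3 h$ to conclude $\|Y_n - y_{nh}\|_{\L_2}^2 = O(h)$. The only cosmetic difference is that the paper writes the final step as $O(h)$ rather than exhibiting the explicit constant you compute.
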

\begin{proof}
Since $Y_0 = Z_0\,$, it follows from Theorem \ref{append:local_error_thm} that
\begin{align*}
\big\|Y_n - y_{nh}\big\|_{\L_2}^2 \leq c_4\sum_{k=0}^{n-1} e^{c_3 kh} h^2 = c_4\,\frac{e^{c_3 nh} - 1}{e^{c_3 h} - 1} h^2 \leq c_4\,\frac{e^{c_3 T} - 1}{e^{c_3 h} - 1} h^2  = O(h),
\end{align*}
which gives the desired result.
\end{proof}

\subsection{The reversible Heun method in the additive noise setting}\label{append:additive_noise}

We now replace the vector field $g$ in the Stratonovich SDE (\ref{append:sde2}) with a fixed matrix $\sigma \in \reals^{e\times d}$ to give
\begin{equation}\label{append:sde3}
\dd y_t = f(y_t)\,\dd t + \sigma\,\dd W_t\,.
\end{equation}

Unsurprisingly, this simplifies the analysis and gives an $O(h)$ strong convergence rate for the method.\smallbreak

\begin{theorem}[Taylor expansion of $Y$ when the SDE's noise is additive]\label{append:additve_rhm_taylor_thm} For $n\in\{0,1\cdots,N-1\}$,
\begin{align*}
Y_{n+1} & = Y_n + f\big(Y_n\big)h + \sigma W_n + \frac{1}{2}f^\prime\big(Y_n\big)\big(\sigma W_n\big)h + \widetilde{R}_n^Y,
\end{align*}
where the remainder term satisfies $\|\widetilde{R}_n^Y\|_{\L_2} = O(h^2)$.
\end{theorem}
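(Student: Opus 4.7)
The plan is to substitute $g\equiv \sigma$ into the reversible Heun update and exploit the constancy of the diffusion to collapse several terms, with the $\L_4$ bound from Theorem~\ref{append:global_bound_thm} furnishing the remainder estimates. With $g$ constant, the $Y$-update in Definition~\ref{append:rev_heun_def} simplifies to
\begin{equation*}
Y_{n+1} = Y_n + \tfrac{1}{2}\bigl(f(Z_n) + f(Z_{n+1})\bigr) h + \sigma W_n\,,
\end{equation*}
so the entire question reduces to Taylor expanding the average $\tfrac{1}{2}(f(Z_n) + f(Z_{n+1}))$ around $Y_n$. I would apply Taylor's theorem with integral remainder exactly as in Theorem~\ref{append:third_taylor_thm}, writing $f(Z_n) = f(Y_n) + f'(Y_n)(Z_n - Y_n) + R_n^{f,3}$ and $f(Z_{n+1}) = f(Y_n) + f'(Y_n)(Z_{n+1} - Y_n) + R_n^{f,4'}$, where $R_n^{f,3}, R_n^{f,4'}$ are the usual second-order remainders involving $f''$ and $(Z_n - Y_n)^{\otimes 2}$, $(Z_{n+1} - Y_n)^{\otimes 2}$ respectively.

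The key algebraic step is to identify the cancellation from the reversible structure. From $Z_{n+1} = 2Y_n - Z_n + f(Z_n) h + \sigma W_n$ we get
\begin{equation*}
(Z_n - Y_n) + (Z_{n+1} - Y_n) = f(Z_n)\, h + \sigma W_n\,,
\end{equation*}
so the two linear terms combine to $f'(Y_n)\bigl(f(Z_n) h + \sigma W_n\bigr)$. Substituting back and grouping gives
\begin{equation*}
Y_{n+1} = Y_n + f(Y_n) h + \sigma W_n + \tfrac{1}{2} f'(Y_n)(\sigma W_n)\, h + \widetilde R_n^Y,
\end{equation*}
with $\widetilde R_n^Y = \tfrac{1}{2} f'(Y_n) f(Z_n)\, h^2 + \tfrac{1}{2}\bigl(R_n^{f,3} + R_n^{f,4'}\bigr) h$.

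It then remains to bound $\widetilde R_n^Y$ in $\L_2$. The first piece is $O(h^2)$ immediately from the assumed boundedness of $f$ and $f'$. For the second piece, the arguments in the proofs of Theorems~\ref{append:first_taylor_thm} and \ref{append:third_taylor_thm} give $\|R_n^{f,3}\|_{\L_2} \lesssim \|f''\|_\infty \|Z_n - Y_n\|_{\L_4}^2$ and $\|R_n^{f,4'}\|_{\L_2} \lesssim \|f''\|_\infty \|Z_{n+1} - Y_n\|_{\L_4}^2$. By Theorem~\ref{append:global_bound_thm} the first squared $\L_4$-norm is $O(h)$; for the second, the identity $Z_{n+1} - Y_n = (Y_n - Z_n) + f(Z_n)h + \sigma W_n$ combined with Minkowski and Theorem~\ref{append:global_bound_thm} again yields $O(h)$. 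Multiplying by the outer factor of $h$ produces an $O(h^2)$ bound in $\L_2$, as required.

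The main obstacle is recognising the cancellation $(Z_n - Y_n) + (Z_{n+1} - Y_n) = f(Z_n)h + \sigma W_n$; everything else is lighter than in the multiplicative Stratonovich case because the absence of $g'$ removes the $\mathbb{W}_n - \tfrac{1}{2} W_n^{\otimes 2}$ contribution that was the dominant $O(\sqrt{h})$ obstruction in Theorem~\ref{append:error_expand_thm}, and because the factor of $h$ (rather than $W_n$) multiplying the Taylor remainders means the bare $\L_2$ estimate of $R_n^{f,3}$ already suffices, without needing to track its conditional mean.
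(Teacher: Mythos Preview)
Your proposal is correct and essentially identical to the paper's proof: both start from the simplified update $Y_{n+1}=Y_n+\tfrac12(f(Z_n)+f(Z_{n+1}))h+\sigma W_n$, Taylor expand $f(Z_n)$ and $f(Z_{n+1})$ around $Y_n$, exploit the cancellation $(Z_n-Y_n)+(Z_{n+1}-Y_n)=f(Z_n)h+\sigma W_n$, and bound the resulting remainder $\widetilde R_n^Y=\tfrac12 f'(Y_n)f(Z_n)h^2+\tfrac12(R_n^{f,3}+R_n^{f,4'})h$ in $\L_2$ via the $\L_4$ estimate of Theorem~\ref{append:global_bound_thm}. The only cosmetic difference is that the paper writes out the integral remainders explicitly rather than naming them $R_n^{f,3},R_n^{f,4'}$.
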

\begin{proof}
Expanding $f\big(Z_n\big)$ and $f\big(Z_{n+1}\big)$ at $Y_n$ using Taylor's theorem \citep[Theorem 3.5.6]{taylorthm} yields
\begin{align*}
Y_{n+1} & = Y_n + \frac{1}{2}\big(f\big(Z_n\big) + f\big(Z_{n+1}\big)\big)h + \sigma W_n\\
& = Y_n + f\big(Y_n\big)h + \sigma W_n + \frac{1}{2}f^\prime\big(Y_n\big)\big(\sigma W_n\big)h + \widetilde{R}_n^Y\,,
\end{align*}
where $\widetilde{R}_n^Y$ is given by
\begin{align*}
\widetilde{R}_n^Y  := \frac{1}{2}f^\prime\big(Y_n\big)f\big(Z_n\big)h^2 & + \frac{1}{2}\int_0^1(1-t) f^{\prime\prime}\big(Y_n + t\big(Z_n - Y_n\big)\big)\,\dd t\,\big(Z_n - Y_n\big)^{\otimes 2}h\\
& + \frac{1}{2}\int_0^1(1-t) f^{\prime\prime}\big(Y_n + t\big(Z_{n+1} - Y_n\big)\big)\,\dd t\,\big(Z_{n+1} - Y_n\big)^{\otimes 2}h\,.
\end{align*}
Similar to the proofs of Theorems \ref{append:first_taylor_thm} and \ref{append:third_taylor_thm}, we can estimate this remainder term as
\begin{align*}
\|\widetilde{R}_n^Y\|_{\L_2}^2 & \leq 2\Big\|\frac{1}{2}f^\prime\big(Y_n\big)f\big(Z_n\big)h^2\Big\|_{\L_2}^2  + 2\Big\|\widetilde{R}_n - \frac{1}{2}f^\prime\big(Y_n\big)f\big(Z_n\big)h^2\Big\|_{\L_2}^2\\
& \leq  \frac{1}{2}\|f^\prime\|_\infty^2\|f\|_\infty^2 h^4 + \E\Bigg[\,\bigg\|\int_0^1(1-t) f^{\prime\prime}\big(Y_n + t\big(Z_n - Y_n\big)\big)\,\dd t\,\big(Z_n - Y_n\big)^{\otimes 2}\bigg\|_{\L_2^n}^2\,\Bigg] h^2\\[-2pt]
&\hspace{5mm} + \E\Bigg[\,\bigg\|\int_0^1(1-t) f^{\prime\prime}\big(Y_n + t\big(Z_{n+1} - Y_n\big)\big)\,\dd t\,\big(Z_{n+1} - Y_n\big)^{\otimes 2}\bigg\|_{\L_2^n}^2\,\Bigg] h^2\\
& \leq  \frac{1}{2}\|f^\prime\|_\infty^2\|f\|_\infty^2 h^4 + \E\bigg[\hspace{0.25mm}\frac{1}{3}\|f^{\prime\prime}\|_\infty^2\big\|Z_n - Y_n\big\|_2^4\hspace{0.25mm}\bigg]h^2 + \E\bigg[\hspace{0.25mm}\frac{1}{3}\|f^{\prime\prime}\|_\infty^2\big\|Z_{n+1} - Y_n\big\|_{\L_4^n}^4\hspace{0.25mm}\bigg]h^2\\
& \leq \frac{1}{2}\|f^\prime\|_\infty^2\|f\|_\infty^2 h^4 + 3\|f^{\prime\prime}\|_\infty^2\big\|Z_n - Y_n\big\|_{\L_2}^4 h^2 + \frac{8}{3}\|f^{\prime\prime}\|_\infty^2\E\Big[\big\|f\big(Z_n\big)h + \sigma W_n\big\|_{\L_4^n}^4\Big]h^2.
\end{align*}
The result now follows by the $\L_4$-bound on $Y_n - Z_n$ given by Theorem \ref{append:global_bound_thm}.
\end{proof}

Likewise, the additive noise SDE (\ref{append:sde3}) admits a simpler Taylor expansion than the general SDE (\ref{append:sde2}).\smallbreak

\begin{theorem}[Stochastic Taylor expansion for additive noise SDEs]\label{append:additve_sde_taylor_thm} For $n\in\{0,1,\cdots, N-1\}$,
\begin{align*}
y_{(n+1)h} = y_{nh} + f\big(y_{nh}\big)h + \sigma W_n + f^\prime\big(y_{nh}\big)\sigma J_n + \widetilde{R}_n^y\,,
\end{align*}
where $J_n$ denotes the time integral of Brownian motion over the interval $[nh, (n+1)h]$, that is
\begin{align*}
J_n := \int_{nh}^{(n+1)h} \big(W_t - W_{nh}\big)\,\dd t\,,
\end{align*}
and the remainder term satisfies $\|\widetilde{R}_n^y\|_{\L_2} = O\big(h^2\big)$.
\end{theorem}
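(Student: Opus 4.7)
\medskip

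\textbf{Proof proposal.} The plan is to apply Taylor's theorem to $f(y_s)$ around $y_{nh}$ and then substitute the SDE's integral form for $y_s - y_{nh}$, isolating the $f'(y_{nh})\sigma J_n$ term and absorbing everything else into a remainder of order $O(h^2)$ in $\L_2$. Since the noise is additive (so $\sigma$ is constant), the It{\^o} and Stratonovich formulations coincide and no It{\^o}--Stratonovich correction appears.

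First, I would write
\begin{align*}
y_{(n+1)h} - y_{nh} \;=\; \int_{nh}^{(n+1)h} f(y_s)\,\dd s \,+\, \sigma W_n\,,
\end{align*}
and apply Taylor's theorem with integral remainder (as in Theorem \ref{append:first_taylor_thm}) to obtain
\begin{align*}
f(y_s) \,=\, f(y_{nh}) + f'(y_{nh})(y_s - y_{nh}) + R(s)\,,
\end{align*}
where $R(s) = \int_0^1 (1-t)\, f''(y_{nh} + t(y_s - y_{nh}))\,\dd t \,(y_s - y_{nh})^{\otimes 2}$. Next I would substitute the integral form $y_s - y_{nh} = \int_{nh}^s f(y_u)\,\dd u + \sigma(W_s - W_{nh})$, which yields
\begin{align*}
\int_{nh}^{(n+1)h} f(y_s)\,\dd s \,=\, f(y_{nh})h \,+\, f'(y_{nh})\sigma J_n \,+\, \int_{nh}^{(n+1)h} f'(y_{nh})\!\int_{nh}^s\! f(y_u)\,\dd u\,\dd s \,+\, \int_{nh}^{(n+1)h}\! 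R(s)\,\dd s\,.
\end{align*}
This identifies the leading terms of the claimed expansion and defines
\begin{align*}
\widetilde{R}_n^y \,=\, \int_{nh}^{(n+1)h} f'(y_{nh})\!\int_{nh}^s\! f(y_u)\,\dd u\,\dd s \,+\, \int_{nh}^{(n+1)h}\! R(s)\,\dd s\,.
\end{align*}

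It then remains to show $\|\widetilde{R}_n^y\|_{\L_2} = O(h^2)$. The first component is bounded pointwise by $\tfrac{1}{2}\|f'\|_\infty \|f\|_\infty h^2$ using boundedness of $f$ and $f'$, so its $\L_2$ norm is clearly $O(h^2)$. For the second component, exactly as in the proof of Theorem \ref{append:first_taylor_thm}, Jensen's inequality and Cauchy--Schwarz give $\|R(s)\|_{\L_2} \leq C\|f''\|_\infty \|y_s - y_{nh}\|_{\L_4}^2$; a standard SDE moment estimate (using boundedness of $f$ and $\sigma$, together with the Burkholder--Davis--Gundy inequality applied to $\sigma W$) yields $\|y_s - y_{nh}\|_{\L_4} = O\big(\sqrt{s - nh}\big)$, so $\|R(s)\|_{\L_2} = O(s-nh) = O(h)$ uniformly in $s\in[nh,(n+1)h]$. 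Minkowski's inequality in integral form then produces $\big\|\int_{nh}^{(n+1)h}\! R(s)\,\dd s\big\|_{\L_2} = O(h^2)$, completing the bound.

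The only mildly subtle point is the $\L_4$ moment bound on $y_s - y_{nh}$, which (unlike the analogous bounds for the numerical scheme in Section \ref{sect:y_minus_z}) concerns the true SDE solution, but this is classical under our standing regularity assumptions on $f$ and the fact that $\sigma$ is constant. No cancellation of $\mathbb{E}_n$-expectations is needed here (in contrast to Theorem \ref{append:local_error_thm}), because the $O(h^2)$ rate is already achieved in the $\L_2$ norm directly; this is the benefit of the additive-noise setting.
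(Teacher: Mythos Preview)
Your argument is correct, but it takes a different route from the paper. The paper simply observes that, because $\sigma$ is constant, the terms in the general Stratonovich--Taylor expansion involving second and third iterated integrals of $W$ vanish, and then cites \citep[Proposition 5.10.1]{kloedenplaten1992} for the general expansion and its remainder estimate. You instead derive the expansion from scratch: write the SDE in integral form, Taylor-expand $f(y_s)$ at $y_{nh}$, substitute $y_s - y_{nh} = \int_{nh}^s f(y_u)\,\dd u + \sigma(W_s - W_{nh})$, and bound the two remainder pieces directly using $\|f\|_\infty,\|f'\|_\infty,\|f''\|_\infty$ and the standard $\L_4$ increment bound $\|y_s - y_{nh}\|_{\L_4} = O(\sqrt{s-nh})$. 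The paper's approach is much shorter and reuses the same external reference already invoked for Theorem~\ref{append:sde_taylor_thm}; your approach is self-contained and makes transparent exactly where the $O(h^2)$ rate comes from in the additive-noise case, at the cost of repeating estimates that are already packaged inside the cited result.
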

\begin{proof}
As $\sigma$ is constant, the terms involving second and third iterated integrals of $W$ do not appear. Therefore the result follows from more general expansions, such as {\citep[Proposition 5.10.1]{kloedenplaten1992}}.
\end{proof}

To simplify the error analysis, we note the following lemma.\smallbreak

\begin{lemma}\label{append:space_time} For each $n\in\{0,1,\cdots, N - 1\}$, we define the random vector $H_n := \frac{1}{h}J_n - \frac{1}{2}W_n$. Then $H_n$ is independent of $W_n$ and $H_n \sim \mathcal{N}\big(0, \frac{1}{12}\mathrm{I}_d h\big)$.
\end{lemma}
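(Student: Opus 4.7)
My plan is to exploit the fact that $J_n$ and $W_n$ are both linear functionals of the Brownian path on $[nh,(n+1)h]$, so the pair $(H_n,W_n)$ is jointly Gaussian. Consequently, proving independence reduces to showing that the cross-covariance vanishes, and characterising the distribution of $H_n$ reduces to computing its mean and covariance. Since the $d$ coordinates of $W$ are independent standard Brownian motions, it suffices to do the computation in the scalar case and then tensor with $\mathrm{I}_d$.

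First I would translate to the interval $[0,h]$ by stationarity of increments: replace $W_t-W_{nh}$ by a Brownian motion $B_t$ on $[0,h]$, so that $W_n$ corresponds to $B_h$ and $J_n$ to $\int_0^h B_t\,\dd t$. Both $B_h$ and $\int_0^h B_t\,\dd t$ are mean-zero Gaussians, so $H_n$ has mean zero. For the cross-covariance in the scalar case, I would use $\E[B_s B_h]=s$ to compute
\begin{equation*}
\E\bigg[\bigg(\int_0^h B_t\,\dd t\bigg) B_h\bigg] = \int_0^h \E[B_t B_h]\,\dd t = \int_0^h t\,\dd t = \tfrac{1}{2}h^2,
\end{equation*}
so that $\E[(\tfrac{1}{h}J_n)B_h] = \tfrac{1}{2}h = \E[\tfrac{1}{2}B_h\cdot B_h]$, giving $\E[H_n B_h]=0$. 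Because $(H_n,W_n)$ is jointly Gaussian in each coordinate (and independent across coordinates), zero covariance yields independence.

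For the variance of $H_n$, I would compute $\Var(J_n)$ in the scalar case via
\begin{equation*}
\E\bigg[\bigg(\int_0^h B_t\,\dd t\bigg)^{\!2}\bigg] = \int_0^h\!\!\int_0^h \min(s,t)\,\dd s\,\dd t = \tfrac{1}{3}h^3,
\end{equation*}
and then combine:
\begin{equation*}
\Var(H_n) = \tfrac{1}{h^2}\cdot\tfrac{1}{3}h^3 + \tfrac{1}{4}h - 2\cdot\tfrac{1}{2h}\cdot\tfrac{1}{2}h^2 = \tfrac{h}{3}+\tfrac{h}{4}-\tfrac{h}{2} = \tfrac{h}{12}.
\end{equation*}
Tensoring with $\mathrm{I}_d$ across the $d$ independent coordinates gives $H_n\sim\normal{0}{\tfrac{1}{12}\mathrm{I}_d h}$.

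There is no real obstacle here; the only thing to be careful about is justifying the joint Gaussianity of $(H_n,W_n)$ rigorously (e.g. by observing that Riemann sum approximations of $J_n$ are linear combinations of Gaussian increments and pass to the limit in $\L_2$, preserving joint Gaussianity), and noting that independence-from-zero-covariance is only valid within the jointly Gaussian framework, which is exactly where we are.
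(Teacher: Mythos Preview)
Your argument is correct and complete. The paper, by contrast, does not carry out any computation at all: it simply cites \cite{foster2020poly} for the $d=1$ case and then remarks that the $d>1$ case follows because the coordinates of $W$ are independent one-dimensional Brownian motions. So your route is genuinely different in that it is self-contained: you establish joint Gaussianity of $(H_n,W_n)$, compute the cross-covariance and the variance directly, and reduce to the scalar case explicitly rather than by citation. What this buys you is that a reader need not chase an external reference; what the paper's approach buys is brevity, since this is a standard identity in the numerical SDE literature and the authors evidently preferred not to reproduce it.
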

\begin{proof}
For the $d=1$ case, the lemma was shown in \citep[Definition 3.5]{foster2020poly}. When $d > 1$, the result is still straightforward as each coordinate of $W$ is an independent one-dimensional Brownian motion.
\end{proof}

Using the same arguments as before, we can obtain error estimates for reversible Heun method.\smallbreak

\begin{theorem}[Local error estimate for the reversible Heun method in the additive noise setting]\label{append:additive_local_error_thm} Let $h_{\max} > 0$ be fixed. Then there exist constants $c_5, c_6 > 0$ such that for $n\in\{0,1,\cdots, N-1\}$,
\begin{align}\label{append:additive_local_error}
\big\|Y_{n+1} - y_{(n+1)h}\big\|_{\L_2}^2 \leq e^{c_5 h}\big\|Y_n - y_{nh}\big\|_{\L_2}^2 + c_6 h^3,
\end{align}
provided $h\leq h_{\max}\,$.
\end{theorem}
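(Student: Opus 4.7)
The plan is to follow the same scheme as Theorem \ref{append:local_error_thm} (Taylor-expand $(Y_{n+1}-y_{(n+1)h}) - (Y_n - y_{nh})$, then square and take expectations), but to exploit the two structural simplifications of the additive-noise regime: (a) by Theorems \ref{append:additve_rhm_taylor_thm} and \ref{append:additve_sde_taylor_thm}, the Taylor remainders for both the numerical and the true solution are $O(h^2)$ in $\L_2$ rather than $O(h^{3/2})$; and (b) the troublesome term $g^\prime(y_{nh})g(y_{nh})(\mathbb{W}_n - \tfrac{1}{2}W_n^{\otimes 2})$, whose $\L_2^n$ norm was only $O(h)$, is now absent. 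Combined, these will yield $O(h^3)$ where the general case produced $O(h^2)$.

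First, I would subtract the two Taylor expansions and apply Lemma \ref{append:space_time} to decompose $J_n = \tfrac{h}{2}W_n + hH_n$, where $H_n \sim \mathcal{N}(0,\tfrac{h}{12}\mathrm{I}_d)$ is independent of $W_n$. After cancelling the $\sigma W_n$ terms, the result is
\begin{align*}
\Delta_n &:= \big(Y_{n+1} - y_{(n+1)h}\big) - \big(Y_n - y_{nh}\big)\\
 &= \big(f(Y_n) - f(y_{nh})\big)h + \tfrac{h}{2}\big(f^\prime(Y_n) - f^\prime(y_{nh})\big)\sigma W_n - h f^\prime(y_{nh})\sigma H_n + R_n,
\end{align*}
with $R_n = \widetilde{R}_n^Y - \widetilde{R}_n^y$ satisfying $\|R_n\|_{\L_2} = O(h^2)$. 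Then expand
\begin{equation*}
\big\|Y_{n+1} - y_{(n+1)h}\big\|_{\L_2}^2 = \big\|Y_n - y_{nh}\big\|_{\L_2}^2 + 2\,\E\big[\langle Y_n - y_{nh}, \E_n[\Delta_n]\rangle\big] + \|\Delta_n\|_{\L_2}^2.
\end{equation*}

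For the cross term, the crucial point is that both $W_n$ and $H_n$ have zero conditional mean, so $\E_n[\Delta_n] = (f(Y_n)-f(y_{nh}))h + \E_n[R_n]$, giving $\|\E_n[\Delta_n]\|_2 \leq \|f^\prime\|_\infty h\|Y_n-y_{nh}\|_2 + \|\E_n[R_n]\|_2$. Cauchy--Schwarz plus Young's inequality (pairing $\|Y_n-y_{nh}\|_2\sqrt{h}$ with $\|\E_n[R_n]\|_2/\sqrt{h}$) and Jensen on the remainder bound this contribution by $O(h)\|Y_n-y_{nh}\|_{\L_2}^2 + O(h^3)$. For the $\|\Delta_n\|_{\L_2}^2$ term, I would apply Minkowski in $\L_2^n$: the four summands have sizes $O(h)\|Y_n-y_{nh}\|_2$, $O(h^{3/2})\|Y_n-y_{nh}\|_2$ (using $\|W_n\|_{\L_2^n}=\sqrt{dh}$ and the Lipschitz bound $\|f^\prime(Y_n)-f^\prime(y_{nh})\|_2 \leq \|f^{\prime\prime}\|_\infty\|Y_n-y_{nh}\|_2$), $O(h^{3/2})$ (using $\|H_n\|_{\L_2^n}=\sqrt{dh/12}$), and $O(h^2)$ in the remainder. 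Squaring, taking outer expectation, and absorbing via the usual $(a+b+c+d)^2 \leq 4(a^2+b^2+c^2+d^2)$ gives $\|\Delta_n\|_{\L_2}^2 \leq O(h^2)\|Y_n-y_{nh}\|_{\L_2}^2 + O(h^3)$.

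Putting the two estimates together produces $\|Y_{n+1}-y_{(n+1)h}\|_{\L_2}^2 \leq (1 + O(h))\|Y_n-y_{nh}\|_{\L_2}^2 + O(h^3)$, and the inequality $1+Ch \leq e^{Ch}$ converts this into the claimed exponential form. The main obstacle is really bookkeeping: one must verify that \emph{every} summand in $\Delta_n$ beyond the leading $(f(Y_n)-f(y_{nh}))h$ drift-difference either has zero conditional mean (so it only contributes at the $\|\Delta_n\|_{\L_2}^2$ level) or is already $O(h^2)$ in $\L_2$. This is precisely what the Wiener-chaos splitting $J_n = \tfrac{h}{2}W_n + hH_n$ buys us: a term whose conditional expectation was nonzero is replaced by two independent mean-zero pieces, eliminating the $O(h^2)$ floor that arose in the multiplicative-noise proof.
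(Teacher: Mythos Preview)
Your proposal is correct and follows essentially the same route as the paper: subtract the two Taylor expansions (Theorems \ref{append:additve_rhm_taylor_thm} and \ref{append:additve_sde_taylor_thm}), use Lemma \ref{append:space_time} to write $J_n = \tfrac{h}{2}W_n + hH_n$, arrive at the identical expression for $\Delta_n$, and then rerun the square--expand--condition scheme of Theorem \ref{append:local_error_thm}. The paper in fact stops after writing down $\Delta_n$ and simply says ``the result now follows using exactly the same arguments as in the proof of Theorem \ref{append:local_error_thm}''; you have spelled out those arguments (cross-term via $\E_n[\Delta_n]$ and Young, quadratic term via Minkowski) in more detail than the paper does, and your order-counting for each summand is correct.
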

\begin{proof}
By Theorems \ref{append:additve_rhm_taylor_thm} and \ref{append:additve_sde_taylor_thm} along with Lemma \ref{append:space_time}, we have
\begin{align*}
Y_{n+1} - y_{(n+1)h} & = Y_n + f\big(Y_n\big)h + \sigma W_n + \frac{1}{2}f^\prime\big(Y_n\big)\big(\sigma W_n\big)h + \widetilde{R}_n^Y\\
&\hspace{5mm} - y_{nh} - f\big(y_{nh}\big)h - \sigma W_n - f^\prime\big(y_{nh}\big)\sigma \Big(\frac{1}{2}W_n h + H_n h\Big) - \widetilde{R}_n^y\\
& =  Y_n  - y_{nh} + \big(f\big(Y_n\big) - f\big(y_{nh}\big)\big)h + \frac{1}{2}\big(f^\prime\big(Y_n\big) - f^\prime\big(y_{nh}\big)\big)\big(\sigma W_n\big)h\\[2pt]
&\hspace{5mm} - f^\prime\big(y_{nh}\big)\big(\sigma H_n\big) h + \widetilde{R}_n^Y - \widetilde{R}_n^y\,.
\end{align*}
The result now follows using exactly the same arguments as in the proof of Theorem \ref{append:local_error_thm}.
\end{proof}\smallbreak

\begin{theorem}[Global error estimate for the reversible Heun method in the additive noise setting]\label{append:additive_global_error_thm} Let $h_{\max} > 0$ be fixed. Then there exists a constant $C_3 > 0$ such that for all $n\in\{0,1,\cdots, N\}$,
\begin{align*}
\big\|Y_n - y_{nh}\big\|_{\L_2} \leq C_3\, h,
\end{align*}
provided $h\leq h_{\max}\,$.
\end{theorem}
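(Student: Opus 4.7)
The plan is to mirror exactly the telescoping argument used in the proof of Theorem \ref{append:global_error_thm}, but starting from the sharper local estimate of Theorem \ref{append:additive_local_error_thm} which has $c_6 h^3$ in place of $c_4 h^2$. Since $Y_0 = y_0$ (both equal the initial condition $y_0$ by construction), iterating the one-step recursion
\begin{align*}
\big\|Y_{n+1} - y_{(n+1)h}\big\|_{\L_2}^2 \leq e^{c_5 h}\big\|Y_n - y_{nh}\big\|_{\L_2}^2 + c_6 h^3
\end{align*}
yields
\begin{align*}
\big\|Y_n - y_{nh}\big\|_{\L_2}^2 \leq c_6 \sum_{k=0}^{n-1} e^{c_5 k h} h^3 = c_6\,\frac{e^{c_5 n h} - 1}{e^{c_5 h} - 1}\,h^3.
\end{align*}

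Next I would bound $nh \leq T$ to pull out a uniform constant, giving
\begin{align*}
\big\|Y_n - y_{nh}\big\|_{\L_2}^2 \leq c_6\,\frac{e^{c_5 T} - 1}{e^{c_5 h} - 1}\,h^3.
\end{align*}
The denominator satisfies $e^{c_5 h} - 1 \geq c_5 h$ for $h > 0$, so the right-hand side is bounded by $\big(c_6 (e^{c_5 T} - 1)/c_5\big)\,h^2$, which is $O(h^2)$ uniformly in $n$ and in $h \in (0, h_{\max}]$. Taking square roots gives the claimed $O(h)$ bound with $C_3 := \sqrt{c_6 (e^{c_5 T} - 1)/c_5}$.

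There is no real obstacle here — the work has already been done in establishing the local estimate (where the tighter $h^3$ arose from the additive-noise cancellation $f'(y_{nh})\sigma J_n$ versus $\tfrac{1}{2}f'(Y_n)(\sigma W_n)h$, together with the independence and $\L_2$-scaling of $H_n$ from Lemma \ref{append:space_time}). The only point worth stating carefully is the uniformity of the constant in $h$, which follows by using $h_{\max}$ to bound the ratio $(e^{c_5 T} - 1)/(e^{c_5 h} - 1)$ from above for all $h \in (0, h_{\max}]$. This is the same elementary step used at the end of Theorem \ref{append:global_error_thm} and of Theorem \ref{append:global_bound_thm}, so the proof is essentially a one-line application of the local estimate plus a geometric-sum bound.
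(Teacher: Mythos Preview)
Your proposal is correct and follows essentially the same approach as the paper: iterate the local estimate from Theorem \ref{append:additive_local_error_thm}, sum the resulting geometric series, bound $nh\le T$, and conclude $O(h^2)$ for the squared error. The paper's proof is the one-line version of exactly this computation; your added remarks on the explicit constant and the bound $e^{c_5 h}-1\ge c_5 h$ simply make explicit what the paper records as ``$=O(h^2)$''.
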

\begin{proof}
Since $Y_0 = Z_0\,$, it follows from Theorem \ref{append:additive_local_error_thm} that
\begin{align*}
\big\|Y_n - y_{nh}\big\|_{\L_2}^2 \leq c_6\sum_{k=0}^{n-1} e^{c_5 kh} h^3 = c_6\,\frac{e^{c_5 nh} - 1}{e^{c_5 h} - 1} h^3 \leq c_6\,\frac{e^{c_5 T} - 1}{e^{c_5 h} - 1} h^3  = O\big(h^2\big),
\end{align*}
which gives the desired result.
\end{proof}

It is known that Heun's method achieves second order weak convergence for additive noise SDEs  \citep{stocgrad2014}. This can make Heun's method more appealing for SDE simulation than other two-stage methods, such as the standard midpoint method -- which is first order weak convergent. Whilst understanding the weak convergence of the reversible Heun method is a topic for future work, we present numerical evidence that it has similar convergence properties as Heun's method for SDEs with additive noise.

We apply the standard and reversible Heun methods to the following scalar anharmonic oscillator:
\begin{equation}\label{append:anharmonic}
\dd y_t = \sin(y_t)\,\dd t + \dd W_t\,,
\end{equation}
with $y_0 = 1$, and compute the following error estimates by standard Monte Carlo simulation:
\begin{align*}
S_N & := \sqrt{\E\big[\,\big|Y_N - Y_T^{\text{fine}}\big|\,\big]},\\
E_N & := \big|\E\big[Y_N\big] - \E\big[Y_T^{\text{fine}}\big]\big|,\\
V_N & :=  \big|\E\big[Y_N^2\big] - \E\big[\big(Y_T^{\text{fine}}\big)^2\,\big]\big|,
\end{align*}
where $\{Y_n\}$ denotes a numerical solution of the SDE (\ref{append:anharmonic}) obtained with step size $h = \frac{T}{N}$ and $Y_T^{\text{fine}}$ is an approximation of $y_Y$ obtained by applying Heun's method to (\ref{append:anharmonic}) with a finer step size of $\frac{1}{10}h$. Both $Y_N$ and $Y_T^{\text{fine}}$ are obtained using the same Brownian sample paths and the time horizon is $T = 1$.

The results of this simple numerical experiment are presented in Figures \ref{append:strong_graph} and \ref{append:weak_graphs}. From the graphs, we observe that the standard and reversible Heun methods exhibit very similar convergence rates (strong order 1.0 and weak order 2.0).
\begin{figure}[h]
\centering
\includegraphics[width=0.7\textwidth]{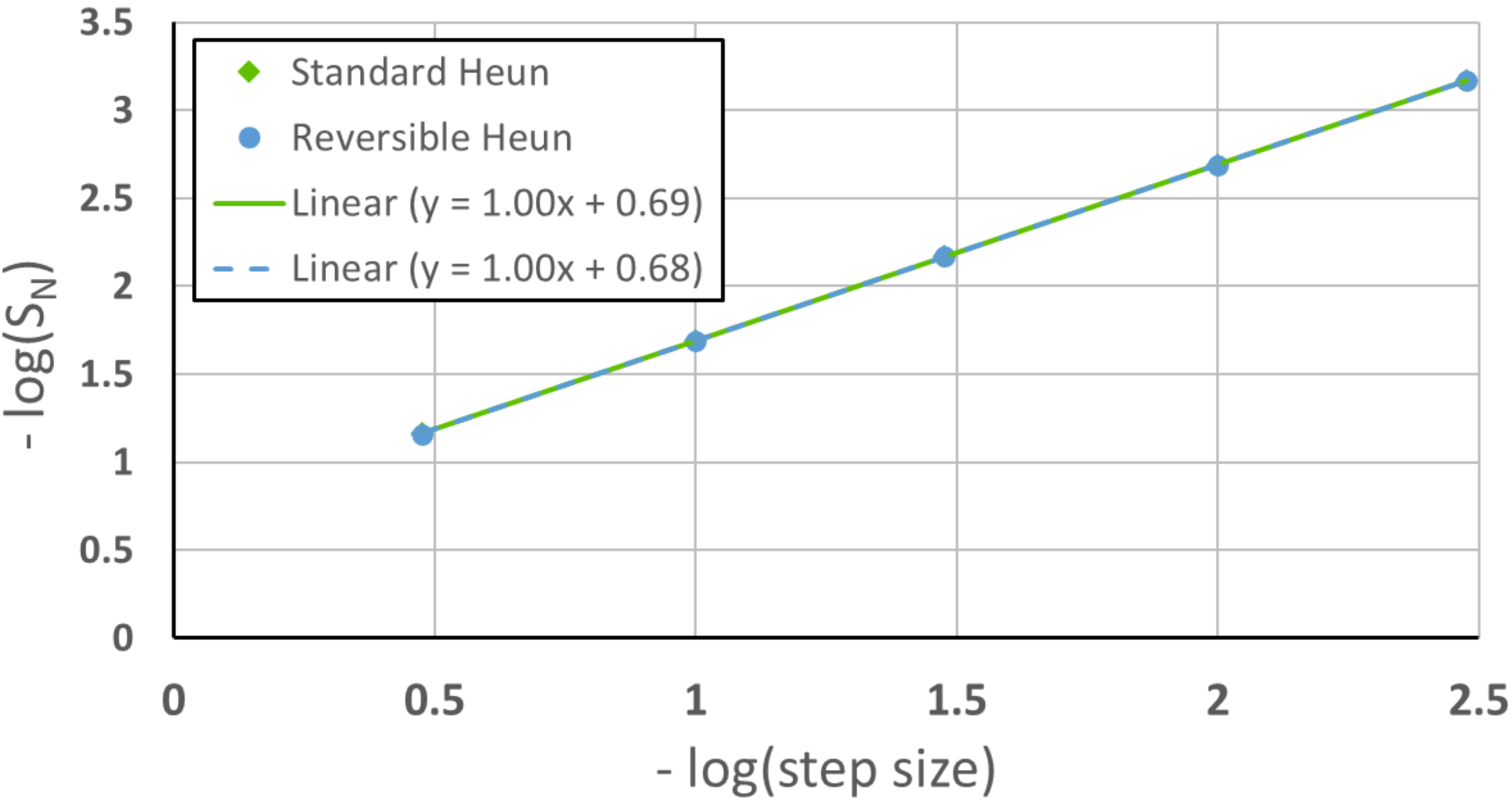}
\caption{Log-log plot for the strong error estimator $S_N$ computed with $10^7$ Brownian sample paths.}\label{append:strong_graph}
\end{figure}
\begin{figure}[h]
\centering
\hspace*{-2mm}
\includegraphics[width=1.025\textwidth]{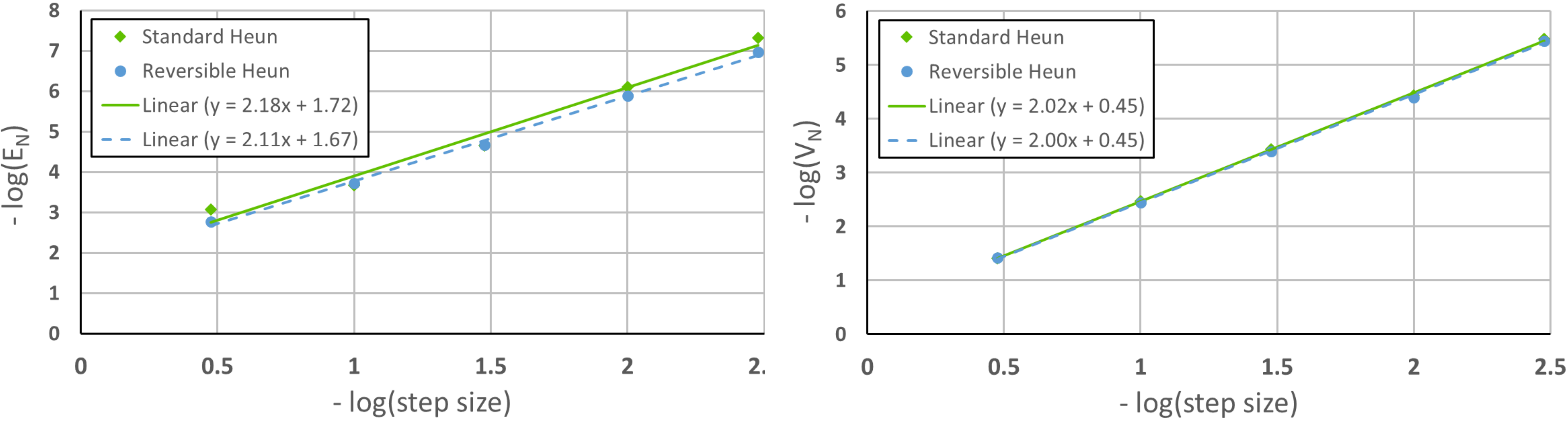}
\caption{Log-log plots for the weak error estimators computed with $10^7$ Brownian sample paths.}\label{append:weak_graphs}
\end{figure}

\subsection{Stability properties of the reversible Heun method in the ODE setting}\label{append:stability_sect}

In this section, we present a stability result for the reversible Heun method when applied to an ODE,
\begin{equation}
y^\prime = f(t, y).
\end{equation}
Just as for the error analysis, it will be helpful to consider two steps of the reversible Heun method. In particular, the updates for the $Z$ component of the numerical solution satisfy
\begin{equation}\label{append:z_update}
Z_{n+2} = Z_n + 2f\big(t_{n+1}\,, Z_{n+1}\big)h,
\end{equation}
with the second value of $Z$ being computed using a standard Euler step as $Z_1 := Z_0 + f\big(t_0\,, Z_0\big)h$.
That is, $\{Z_n\}$ is precisely the numerical solution obtained by the leapfrog/midpoint method, see \citep{stability2009}.
The absolute stability region of this ODE solver is well-known and given below.
\begin{theorem}[Stability region of leapfrog/midpoint method {\citep[Section 2]{stability2009}}]\label{append:midpoint_stability_thm}
Suppose that we apply the leapfrog/midpoint method to obtain a numerical solution $\{Z_n\}_{n\geq 0}$ for the linear test equation
\begin{equation}\label{append:test_eq}
y^\prime = \lambda y,
\end{equation}
where $\lambda\in\mathbb{C}$ with $\mathrm{Re}(\lambda) \leq 0$ and $y_0\neq 0$. Then $|Z_n|$ is bounded for all $n$ if and only if $\lambda h\in[-i,i]\,$.
\end{theorem}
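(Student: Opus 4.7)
The plan is to apply the leapfrog/midpoint scheme to the test equation $y' = \lambda y$, which reduces the update \eqref{append:z_update} to the two-step linear recurrence
\begin{equation*}
Z_{n+2} = 2\lambda h\, Z_{n+1} + Z_n,
\end{equation*}
started with $Z_0 = y_0 \neq 0$ and $Z_1 = (1 + \lambda h) Z_0$. Setting $z := \lambda h$, the question of boundedness reduces to analyzing the roots of the characteristic polynomial $r^2 - 2zr - 1 = 0$.

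First I would solve the characteristic equation, obtaining $r_\pm = z \pm \sqrt{z^2 + 1}$, and note the crucial identity $r_+ r_- = -1$. This forces the two roots to be either both of modulus one, or to consist of one strictly inside and one strictly outside the unit circle. In the latter situation the general solution $Z_n = A r_+^n + B r_-^n$ grows geometrically unless the coefficient of the outside root vanishes; I would rule this out for the specific Euler-start $Z_1 = (1+z)Z_0$ by writing the $2\times 2$ Vandermonde-type system
\begin{equation*}
\begin{pmatrix} 1 & 1 \\ r_+ & r_- \end{pmatrix} \begin{pmatrix} A \\ B \end{pmatrix} = \begin{pmatrix} Z_0 \\ (1+z) Z_0 \end{pmatrix},
\end{equation*}
and checking that the resulting $A$ and $B$ are both nonzero whenever $r_+ \neq r_-$ and $Z_0 \neq 0$. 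Hence boundedness forces $|r_+| = |r_-| = 1$.

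Next I would translate the unit-circle condition back to a condition on $z$. Writing a unit-modulus root as $r = e^{i\theta}$ and rearranging the characteristic equation gives
\begin{equation*}
z = \tfrac{1}{2}\bigl(r - r^{-1}\bigr) = i\sin\theta,
\end{equation*}
so $z$ must lie in the purely imaginary segment $i[-1,1]$. Conversely, for $z = it$ with $|t| \leq 1$ the two roots $it \pm \sqrt{1 - t^2}$ both sit on the unit circle; when $|t| < 1$ they are distinct, so $|Z_n|$ is bounded, matching the assumption $\mathrm{Re}(\lambda) \leq 0$ (which is automatic since the stability set turns out to be imaginary anyway).

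The main delicate point is the boundary $t = \pm 1$, where the two roots coalesce at $\pm i$ and the general solution picks up a resonant factor $n\, r^n$. I would handle this by substituting the Euler-start initial data directly into the coalesced closed form and observing that the coefficient of the resonant mode is forced to be nonzero, so $|Z_n|$ grows linearly rather than staying bounded — unless one adopts the standard convention of including the boundary in the stability interval, which is the convention used in the cited reference \citep{stability2009}. Either way, the characterization $\lambda h \in [-i, i]$ follows, completing the equivalence.
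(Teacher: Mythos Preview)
The paper does not give its own proof of this theorem: it is stated as a cited result from \citep{stability2009}, and the paper only uses it (together with the explicit formula $Z_n=\alpha\eta_1^n+\beta\eta_2^n$) as input to the proof of the next theorem on the reversible Heun method. So there is no in-paper argument to compare against.

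Your approach via the characteristic polynomial $r^2-2zr-1=0$ is the standard one and is correct in substance. The product-of-roots observation $r_+r_-=-1$ is exactly the right lever: it forces either both roots onto the unit circle or one strictly inside and one strictly outside, and your Vandermonde check that the Euler start $Z_1=(1+z)Z_0$ makes both coefficients nonzero (away from $z=0$) cleanly rules out accidental cancellation of the unstable mode. The translation $z=\tfrac12(r-r^{-1})=i\sin\theta$ giving the segment $i[-1,1]$ is also fine.

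The one genuinely soft spot is your last paragraph. You correctly compute that at $z=\pm i$ the double root $r=\pm i$ together with the Euler start forces a nonzero resonant coefficient, hence linear growth; but you then try to rescue the \emph{closed} interval in the statement by invoking ``convention.'' That is not a proof step. In fairness, this endpoint looseness is inherited from the statement itself and the paper shares it: in the very next proof the explicit coefficients $\alpha,\beta$ carry a factor $1/\sqrt{1+\lambda^2h^2}$ that also blows up at $\lambda h=\pm i$, so the paper's closed-form argument silently excludes those endpoints too. If you want a clean statement you can prove as written, either work with the open segment $(-i,i)$ or phrase boundedness as the root condition (all roots of modulus at most one, those on the circle simple), which again singles out the open segment.
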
\smallbreak

Using similar techniques, it is straightforward to extend this result to the reversible Heun method.\smallbreak

\begin{theorem}[Stability region of the reversible Heun method]
Suppose that we apply the reversible Heun method to obtain a pair of numerical solutions $\{Y_n\}_{n\geq 0}\,, \{Z_n\}_{n\geq 0}\,$ for the linear test equation
\begin{equation*}
y^\prime = \lambda y,
\end{equation*}
where $\lambda\in\mathbb{C}$ with $\mathrm{Re}(\lambda) \leq 0$ and $y_0\neq 0$. 
Then $\{Y_n, Z_n\}_{n\geq 0}$ is bounded if and only if $\lambda h\in[-i,i]$.
\end{theorem}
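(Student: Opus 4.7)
The plan is to reduce the joint stability question for $(Y_n, Z_n)$ to the scalar stability result already cited as Theorem \ref{append:midpoint_stability_thm}. The excerpt has already observed (equation \eqref{append:z_update}) that the $Z$-iterates satisfy the midpoint/leapfrog recurrence $Z_{n+2} = Z_n + 2 f(t_{n+1}, Z_{n+1}) h$, which in the linear test case reduces to $Z_{n+2} = 2(\lambda h)\, Z_{n+1} + Z_n$. A direct calculation at $n=0$ with $Y_0 = Z_0 = y_0$ gives $Z_1 = 2 Y_0 + (\lambda h - 1) Z_0 = (1 + \lambda h) y_0$, which coincides with the Euler starting step assumed by Theorem \ref{append:midpoint_stability_thm}. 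That theorem then yields immediately that $\{Z_n\}$ is bounded if and only if $\lambda h \in [-i, i]$.

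The remaining step is to transfer this equivalence to the pair $(Y_n, Z_n)$. Substituting $f(y) = \lambda y$ into the $Z$-update of Definition \ref{append:rev_heun_def} gives $Z_{n+1} = 2Y_n + (\lambda h - 1) Z_n$, which rearranges to
\begin{equation*}
Y_n = \tfrac{1}{2}\bigl(Z_{n+1} + (1 - \lambda h)\, Z_n\bigr).
\end{equation*}
This expresses $Y_n$ as a fixed affine combination of two consecutive $Z$-iterates, so boundedness of $\{Z_n\}$ automatically implies boundedness of $\{Y_n\}$. The reverse implication is trivial: boundedness of the pair implies boundedness of $\{Z_n\}$ alone. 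Combining this with the scalar result of the previous paragraph yields both directions of the claimed ``iff.''

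Given that Theorem \ref{append:midpoint_stability_thm} does the heavy lifting, there is no serious analytical obstacle. The only delicate point is verifying that the $Z$-sub-dynamics of the reversible Heun method really match those of the standard leapfrog scheme, both in the two-step recurrence and in the initial data $(Z_0, Z_1)$; once that bookkeeping is done, the argument reduces to the cited theorem plus the elementary affine identity above.
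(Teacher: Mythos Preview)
Your proposal is correct and in fact cleaner than the paper's own argument. Both proofs invoke Theorem~\ref{append:midpoint_stability_thm} for the $\{Z_n\}$ component and both verify that $Z_1 = (1+\lambda h)y_0$ matches the Euler start assumed there, so the reduction to the leapfrog stability result is identical. The difference lies in how boundedness is transferred from $\{Z_n\}$ to $\{Y_n\}$. The paper writes $Y_{n+1} = Y_n + \tfrac{1}{2}\lambda h(Z_k + Z_{k+1})$, telescopes to obtain $Y_n = y_0 + \tfrac{1}{2}\lambda h\sum_{k=0}^{n-1}(Z_k + Z_{k+1})$, substitutes the closed form $Z_k = \alpha\eta_1^k + \beta\eta_2^k$, evaluates the resulting geometric sums, and then checks separately that $|\eta_i| = 1$ with $\eta_i \neq 1$ when $\lambda \neq 0$ (handling $\lambda = 0$ as a special case). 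Your route avoids all of this: rearranging the $Z$-update directly yields $Y_n = \tfrac{1}{2}\bigl(Z_{n+1} + (1-\lambda h)Z_n\bigr)$, so boundedness of $\{Z_n\}$ immediately gives boundedness of $\{Y_n\}$ without any explicit solution formula, summation, or case split. This is strictly more economical; the paper's explicit formula~\eqref{append:yn_formula} does give more information (the exact value of $Y_n$), but that extra detail is not needed for the stability statement.
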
 
\begin{proof}
By Theorem \ref{append:midpoint_stability_thm}, it is enough to show that $|Y_n|$ is bounded for all $n\geq 0$ when $\lambda h\in[-i,i]$.
It follows from the difference equation (\ref{append:z_update}) and the formula for $Z_1$ that
\begin{align*}
Z_n = \alpha\eta_1^n + \beta\eta_2^n\,,
\end{align*}
where the constants $\alpha,\beta, \eta_1, \eta_2\in\mathbb{C}$ are given by
\begin{align*}
\alpha & := \frac{1}{2}y_0\bigg(1 + \frac{1}{\sqrt{1 + \lambda^2 h^2}}\bigg)\,,\\
\beta  & := \frac{1}{2}y_0\bigg(1 - \frac{1}{\sqrt{1 + \lambda^2 h^2}}\bigg)\,,\\[1pt]
\eta_1 & := \lambda h + \sqrt{1 + \lambda^2 h^2}\,,\\
\eta_2 & := \lambda h - \sqrt{1 + \lambda^2 h^2}\,.
\end{align*}
For each $k\geq 0$, we have $Y_{k+1} = Y_k + \frac{1}{2}\lambda\big(Z_k + Z_{k+1}\big)h\hspace{0.25mm}$ and so we can explicitly compute $Y_n$ as
\begin{align}\label{append:yn_formula}
Y_n = y_0 & + \frac{1}{2}\lambda h\sum_{k=0}^{n-1} \big(Z_k + Z_{k+1}\big)\nonumber\\[-2pt]
= y_0 & + \frac{1}{2}\lambda h\alpha\big(1 + \eta_1\big)\sum_{k=0}^{n-1}\eta_1^k  + \frac{1}{2}\lambda h\beta\big(1 + \eta_2\big)\sum_{k=0}^{n-1}\eta_2^k\nonumber\\
= y_0 & + \frac{1}{2}\lambda h\alpha\bigg(\frac{1 + \eta_1}{1 - \eta_1}\bigg)\big(1 - \eta_1^n\big) + \frac{1}{2}\lambda h\beta\bigg(\frac{1 + \eta_2}{1 - \eta_2}\bigg)\big(1 -  \eta_2^n\big). 
\end{align}
Since $\lambda h\in[-i,i]$, we have $\eta_1 = \lambda h + \sqrt{1 - |\lambda h|^2}$ and $\eta_2 = \lambda h - \sqrt{1 - |\lambda h|^2}$, which implies that
\begin{align*}
|\eta_i|^2 = |\lambda h|^2 + \big(1 - |\lambda h|^2\big) = 1\hspace{3mm}\text{and}\hspace{3mm}\mathrm{Im}(\eta_i) = |\lambda h|\,,
\end{align*}
for both $i\in\{1,2\}$. When $\lambda \neq 0$, it follows that $\eta_1, \eta_2\in \{z\in\mathbb{C} : |z| = 1\}\setminus \{1\}$ and thus by (\ref{append:yn_formula}), $|Y_n|$ is bounded for all $n\geq 0$. On the other hand, when $\lambda = 0$, we have $Y_n = y_0$ for all $n\geq 0$.
\end{proof}
\begin{remark}
The reversible Heun method is not $A$-stable for ODEs as that would require $|\eta_i| < 1$.
\end{remark}
\begin{remark}
The domain $\{\lambda\in\mathbb{C} : \lambda h\in [-i, i]\}$ is also the absolute stability region for the (reversible) asynchronous leapfrog integrator proposed for Neural ODEs in \citet{MALI2021}.
\end{remark}

\section{Sampling Brownian motion}\label{appendix:brownian}
\subsection{Algorithm}\label{appendix:brownian-alg}
We begin by providing the complete traversal and splitting algorithm needed to find or create all intervals in the Brownian Interval, as in Section \ref{section:brownian-interval}. See Algorithm \ref{alg:traverse}.

Here, \textit{List} is an ordered data structure that can be appended to, and iterated over sequentially. For example a linked list would suffice. We let $\texttt{split\_seed}$ denote a splittable PRNG as in \citet{prng1, prng2}. We use $*$ to denote an unfilled part of the data structure, equivalent to \texttt{None} in Python or a null pointer in C/C++; in particular this is used as a placeholder for the (nonexistent) children of leaf nodes. We use $=$ to denote the creation of a new local variable, and $\leftarrow$ to denote in-place modification of a variable.

\begin{algorithm}[p]
\SetKwProg{Def}{def}{:}{}
\SetAlgoLined
\Def{\emph{\texttt{bisect}}($I$ : \textit{Node}, x : $\reals$)}{
    \# Only called on leaf nodes\\
    Let $I = ([a, b], s, I_{\texttt{parent}}, *, *)$\\
    $s_{\texttt{left}}, s_{\texttt{right}} = \texttt{split\_seed}(s)$\\
    $I_{\texttt{left}} = ([a, x], s_{\texttt{left}}, J, *, *)$\\
    $I_{\texttt{right}} = ([x, b], s_{\texttt{right}}, J, *, *)$\\
    $I \leftarrow ([a, b], s, I_{\texttt{parent}}, I_{\texttt{left}}, I_{\texttt{right}})$\\
    return\\
}
\quad\\
\quad\\

\Def{\emph{\texttt{traverse\_impl}}($I$ : \textit{Node},\, $[c, d]$ : \textit{Interval}, \emph{nodes} : \textit{List[Node]})}{
    Let $I = ([a, b], s, I_{\texttt{parent}}, I_{\texttt{left}}, I_{\texttt{right}})$\\
    \quad\\
    \# Outside our jurisdiction - pass to our parent\\
    \If{$c < a$ \normalfont{or} $d > b$}{
        \texttt{traverse\_impl}($I_\texttt{parent}, [c, d]$, nodes)\\
        return\\
    }
    \quad\\
    
    \# It's $I$ that is sought. Add $I$ to the list and return.\\
    \If{$c = a$ \normalfont{and} $d = b$}{
        nodes.append($I$)\\
        return\\
    }
    \quad\\
    
    \# Check if $I$ is a leaf or not.\\
    \eIf{$I_\texttt{left}$ \normalfont{is} $*$}{
        \# $I$ is a leaf\\
        \If{$a = c$}{
            \# If the start points align then create children and add on the left child.\\
            \# (Which is created in \texttt{bisect}.)\\
            \texttt{bisect}($I, d$)\\
            nodes.append($I_\texttt{left}$)\qquad\# nodes is passed by reference\\
            return\\
        }
        \# Otherwise create children and pass on to our right child.\\
        \# (Which is created in \texttt{bisect}.)\\
        \texttt{bisect}($I, c$)\\
        \texttt{traverse\_impl}($I_\texttt{right}, [c, d]$, nodes)\\
        return\\
    }{
        \# $I$ is not a leaf.\\
        Let $I_\texttt{left} = ([a, m], s_\texttt{left}, I, I_{ll}, I_{lr})$\\
        \If{$d \leq m$}{
            \# Strictly our left child's problem.\\
            \texttt{traverse\_impl}($I_\texttt{left}, [c, d]$, nodes)\\
            return \\
        }
        \If{$c \geq m$}{
            \# Strictly our right child's problem.\\
            \texttt{traverse\_impl}($I_\texttt{right}, [c, d]$, nodes)\\
            return \\
        }
        \# A problem for both of our children.\\
        \texttt{traverse\_impl}($I_\texttt{left}, [c, m]$, nodes)\\
        \texttt{traverse\_impl}($I_\texttt{right}, [m, d]$, nodes)\\
        return\\
    }
}
\quad\\
\quad\\

\Def{\emph{\texttt{traverse}}($I$ : \textit{Node},\, $[c, d]$ : \textit{Interval})}{
    Let nodes be an empty List.\\
    \texttt{traverse\_impl}($I$, $[c, d]$, nodes)\\
    return nodes\\
}
\caption{Definition of \texttt{traverse}}\label{alg:traverse}
\end{algorithm}

\subsection{Discussion}\label{appendix:brownian-discussion}
The function \texttt{traverse} is a depth-first tree search for locating an interval within a binary tree. The search may split into multiple (potentially parallelisable) searches if the target interval crosses the intervals of multiple existing leaf nodes. If the search's target is not found then additional nodes are created as needed.

Sections \ref{section:brownian-interval} and \ref{appendix:brownian-alg} now between them define the algorithm in technical detail.

There are some further technical considerations worth mentioning. Recall that the context we are explicitly considering is when sampling Brownian motion to solve an SDE forwards in time, then the adjoint backwards in time, and then discarding the Brownian motion. This motivates several of the choices here.

\paragraph{Small intervals} First, the access patterns of SDE solvers are quite specific. Queries will be over relatively small intervals: the step that the solver is making. This means that the list of nodes populated by \texttt{traverse} is typically small. In our experiments we observed it usually only consisting of a single element; occasionally two. In contrast if the Brownian Interval has built up a reasonable tree of previous queries, and was then queried over $[0, s]$ for $s \gg 0$, then a long (inefficient) list would be returned. It is the fact that SDE solvers do not make such queries that means this is acceptable.

\paragraph{Search hints: starting from $\widehat{J}$} Moreover, the queries are either just ahead (fixed-step solvers; accepted steps of adaptive-step solvers) or just before (rejected steps of adaptive-step solvers) previous queries. Thus in Algorithm \ref{alg:binterval}, we keep track of the most recent node $\widehat{J}$, so that we begin \texttt{traverse} near to the correct location. This is what ensures the modal time complexity is only $\bigO{1}$, and not $\bigO{\log(1/s)}$ in the average step size $s$, which for example would be the case if searching commenced from the root on every query.

\paragraph{LRU cache} The fact that queries are often close to one another is also what makes the strategy of using an LRU (least recently used) cache work. Most queries will correspond to a node that have a recently-computed parent in the cache.

\paragraph{Backward pass} The queries are broadly made left-to-right (on the forward pass), and then right-to-left (on the backward pass). (Other than the occasional rejected adaptive step.)

Left to its own devices, the forward pass will thus build up a highly imbalanced binary tree. At any one time, the LRU cache will contain only nodes whose intervals are a subset of some contiguous subinterval $[s, t]$ of the query space $[0, T]$. Letting $n$ be the number of queries on the forward pass, then this means that the backward pass will consume $\bigO{n^2}$ time -- each time the backward pass moves past $s$, then queries will miss the LRU cache, and a full recomputation to the root  will be triggered, costing $\bigO{n}$. This will then hold only nodes whose intervals are subets of some contiguous subinterval $[u, s]$: once we move past $u$ then this $\bigO{n}$ procedure is repeated, $\bigO{n}$ times. This is clearly undesirable.

This is precisely analogous to the classical problem of optimal recomputation for performing backpropagation, whereby a dependency graph is constructed, certain values are checkpointed, and a minimal amount of recomputation is desired; see \citet{backprop-mem}.

In principle the same solution may be applied: apply a snapshotting procedure in which specific extra nodes are held in the cache. This is a perfectly acceptable solution, but implementing it requires some additional engineering effort, carefully determining which nodes to augment the cache with.

Fortunately, we have an advantage that \citet{backprop-mem} does not: we have some control over the dependency structure between the nodes, as we are free to prespecify any dependency structure we like. That is, we do not have to start the binary tree as just a stump. We may exploit this to produce an easier solution.

Given some estimate $\nu$ of the average step size of the SDE solver (which may be fixed and known if using a fixed step size solver), a size of the LRU cache $L$, and \emph{before a user makes any queries}, then we simply make some queries of our own. These queries correspond to the intervals $[0, T/2], [T/2, T], [0, T/4], [T/4, T/2], \ldots$, so as to create a dyadic tree, such that the smallest intervals (the final ones in this sequence) are of size not more than $\nu L$. (In practice we use $\frac{4}{5}\nu L$ as an additional safety factor.)

Letting $[s, t]$ be some interval at the bottom of this dyadic tree, where $t \approx s + \frac{4}{5} \nu L$, then we are capable of holding every node within this interval in the LRU cache. Once we move past $s$ on the backward pass, then we may in turn hold the entire previous subinterval $[u, s]$ in the LRU cache, and in particular the values of the nodes whose intervals lie within $[u, s]$ may be computed in only logarithmic time, due to the dyadic tree structure.

This is now analogous to the Virtual Brownian Tree of \citet{gaines, scalable-sde}. (Up to the use of intervals rather than points.) If desired, this approach may be loosely interpreted as placing a Brownian Interval on every leaf of a shallow Virtual Brownian Tree.

\paragraph{Recursion errors} We find that for some problems, the recursive computations of \texttt{traverse} (and in principle also \texttt{sample}, but this is less of an issue due to the LRU cache) can occasionally grow very deep. In particular this occurs when crossing the midpoint of the pre-specified tree: for this particular query, the traversal must ascend the tree to the root, and then descend all the way down again. As such \texttt{traverse} should be implemented with trampolining and/or tail recursion to avoid maximum depth recursion errors.

\paragraph{CPU vs GPU memory} We describe this algorithm as requiring only constant memory. To be more precise, the algorithm requires only constant GPU memory, corresponding to the fixed size of the LRU cache. As the Brownian Interval receives queries then its internal tree tracking dependencies will grow, and CPU memory will increase. For deep learning models, GPU memory is usually the limiting (and so more relevant) factor.

\paragraph{Stochastic integrals} What we have not discussed so far is the numerical simulation of integrals such as $\mathbb{W}_{s, t} = \int_s^t W_{s, r} \circ \dd W_r$ and $H_{s,t} =  \frac{1}{t-s}\int_s^t \big(W_{s,r} - \big(\frac{r-s}{t-s}\big)W_{s,t}\big)\,\dd r$ which are used in higher order SDEs solvers
(for example, the Runge-Kutta methods in \citep{robler2010srk} and the log-ODE method in \citep{foster2020poly}).
Just like increments $W_{s, t}$, these integrals fit nicely into an interval-based data structure.

In general, simulating the pair $(W_{s,t}\,, \mathbb{W}_{s,t})$ is known to be a difficult problem \citep{dickinson2007levyarea}, and exact algorithms are only known when $W$ is one or two dimensional \citep{gaines1994area}.
However, the approximation proposed in \citep{davie2014approx} and further developed in \citep{flint2015logode, fosterthesis} constitutes a simple and computable solution. Their approach is to generate
\begin{align*}
\widetilde{\mathbb{W}}_{s, t} := \frac{1}{2}W_{s,t}\otimes W_{s,t} + H_{s,t}\otimes W_{s,t} - W_{s,t}\otimes H_{s,t} + \lambda_{s,t},
\end{align*}
where $\lambda_{s,t}$ is an anti-symmetric matrix with independent entries $\lambda_{s,t}^{i,j}\sim\normal{0}{\frac{1}{12}(t-s)^2}, i < j$.

In these works, the authors input the pairs $\{(W_{t_n, t_{n+1}}\,, \widetilde{\mathbb{W}}_{t_n, t_{n+1}})\}_{0\leq n \leq N - 1}$ into a SDE solver (the Milstein and log-ODE methods respectively) and prove that the resulting approximation achieves a $2$-Wasserstein convergence rate close to $O\left(1/N\right)$, where $N$ is the number of steps. In particular, this approach is efficient and avoids the use of costly L\'{e}vy area approximations, such as in \citep{wiktorsson, RecentLevyArea, fosterhabermann}.

\section{Experimental Details and Further Results}\label{appendix:experiment-details}

\subsection{Metrics}
Several metrics were used to evaluate final model performance of the trained Latent SDEs and SDE-GANs.

\paragraph{Real/fake classification}
A classifier was trained to distinguish real from generated data. This is trained by taking an 80\%/20\% split of the test data, training the classifier on the 80\%, and evaluating its performance on the 20\%. This produces a classification accuracy as a performance metric.

We parameterise the classifier as a Neural CDE \citep{kidger2020neuralcde}, whose vector field is an MLP with two hidden layers each of width 32. The evolving hidden state is also of width 32. A final classification result is given by applying a learnt linear readout to the final hidden state, which produces a scalar. A sigmoid is then applied and this is trained with binary cross-entropy.

It is trained for 5000 steps using Adam with a learning rate of $10^{-4}$ and a batch size of 1024.

\textit{Smaller accuracies -- indicating inability to distinguish real from generated data -- are better.}

\paragraph{Label classification (train-on-synthetic-test-on-real)}
Some datasets (in particular the air quality dataset) have labelled classes associated with each sample time series. For these datasets, a classifier was trained on the generated data -- possible as every model we train is trained conditional on the class label as an input -- and then evaluated on the real test data. This produces a classification accuracy as a performance metric.

We parameterise the classifier as a Neural CDE, with the same architecture as before. A final classification result is given by applying a learnt linear readout to the final hidden state, which produces a vector of unnormalised class probabilities. These are normalised with a softmax and trained using cross-entropy.

It is trained for 5000 steps using Adam with a learning rate of $10^{-4}$ and a batch size of 1024.

\textit{Larger accuracies -- indicating similarity of real and generated data -- are better.}

\paragraph{Prediction (train-on-synthetic-test-on-real)}
A sequence-to-sequence model is trained to perform time series forecasting: given the first 80\% of a time series, can the latter 20\% be predicted. This is trained on the generated data, and then evaluated on the real test data. This produces a regression loss as a performance metric.

We parameterise the predictor as a sequence-to-sequence Neural CDE / Neural ODE pair. The Neural CDE is as before. The Neural ODE has a vector field which is an MLP of two hidden layers, each of width 32. Its evolving hidden state is also of width 32. An evolving prediction is given by applying a learnt linear readout to the evolving hidden state, which produces a time series of predictions. These are trained using an $L^2$ loss.

It is trained for 5000 steps using Adam with a learning rate of $10^{-4}$ and a batch size of 1024.

\textit{Smaller losses -- indicating similarity of real and generated data -- are better.}

\paragraph{Maximum mean discrepancy}
Maximum mean discrepancies \citep{mmd} can be used to compute a (semi)distance between probability distributions. Given some set $\mathcal{X}$, a fixed feature map $\psi \colon \mathcal{X} \to \reals^m$, a norm $\norm{\,\cdot\,}$ on $\reals^m$, and two probability distributions $\mathbb{P}$ and $\mathbb{Q}$ on $\mathcal{X}$, this is defined as
\begin{equation*}
    \norm{\expect_{P \sim \mathbb{P}}\left[\psi(P)\right] - \expect_{Q \sim \mathbb{Q}}\left[\psi(Q)\right]}.
\end{equation*}

In practice $\mathbb{P}$ corresponds to the true distribution, of which we observe samples of data, and $\mathbb{Q}$ corresponds to the law of the generator, from which we may sample arbitrarily many times. Given $N$ empirical samples $P_i$ from the true distribution, and $M$ generated samples $Q_i$ from the generator, we may thus approximate the MMD distance via
\begin{equation*}
    \norm{\frac{1}{N}\sum_{i=1}^N\psi(P_i) - \frac{1}{M}\sum_{i=1}^M\psi(Q_i)}.
\end{equation*}

In our case, $\mathcal{X}$ corresponds to the observed time series, and we use a depth-5 signature transform as the feature map \citep{kiraly2019kernels, toth2020gp}. Similarly, the untruncated signature can be used as a feature map \citep{bonnier2019deep, SigPDE, SigGPDE, signatory, gensig}.

(Note that MMDs may also be used as differentiable optimisation metrics provided $\psi$ is differentiable \citep{gmmn, mmd-gan}. A mistake we have seen `in the wild' for training SDEs is to choose a feature map that is overly simplistic, such as taking $\psi$ to be the marginal mean and variance at all times. Such a feature map would fail to capture time-varying correlations; for example $W$ and $t\mapsto W(0)\sqrt{t}$, where $W$ is a Brownian motion, would be equivalent under this feature map.)

\textit{Smaller values -- indicating similarity of real and generated data -- are better.}

\subsection{Common details}
The following details are in common to all experiments.

\paragraph{Libraries used} PyTorch was used as an autodifferentiable tensor framework \citep{pytorch}.

SDEs were solved using \texttt{torchsde} \citep{torchsde}. CDEs were solved using \texttt{torchcde} \citep{torchcde}. ODEs were solved using \texttt{torchdiffeq} \citep{torchdiffeq}. Signatures were computed using Signatory \citep{signatory}.

Tensors had their shapes annotated using the torchtyping \citep{torchtyping} library, which helped to enforce correctness of the implementation.

Hyperparameter optimisation was performed using the Ax library \citep{ax}.

\paragraph{Numerical methods} SDEs were solved using either the reversible Heun method or the midpoint method (as per the experiment), and trained using continuous adjoint methods.

The CDE used in the discriminator of an SDE-GAN was solved using either the reversible Heun method, or the midpoint method, in common with the choice made in the generator, and trained using continuous adjoint methods.

The ODEs solved for the train-on-synthetic-test-on-real prediction metric used the midpoint method, and were trained using discretise-then-optimise backpropagation. The CDEs solved for the various evaluation metrics used the midpoint method, and trained using discretise-then-optimise backpropagation. (These were essentially arbitrary choices -- we merely needed to fix some choices throughout to ensure a fair comparison.)

\paragraph{Normalisation} Every dataset is normalised so that its initial value (at time $=0$) has mean zero and unit variance. (That is, calculate mean and variance statistics of just the initial values in the dataset, and then normalise every element of the dataset using these statistics.)

We speculate that normalising based on the initial condition produces better results than calculating mean and variance statistics over the whole trajectory, as the rest of the trajectory cannot easily be learnt unless its initial condition is well learnt first. We did not perform a thorough investigation of this topic, merely finding that this worked well enough on the problems considered here.

The times at which observations were made were normalised to have mean zero and unit range. (This is of relevance to the modelling, as the generated samples must be made over the same timespan: that is the integration variable $t$ must correspond to some parameterisation of the time at which data is actually observed.)

\paragraph{Dataset splits} We used 70\% of the data for training, 15\% for validation and hyperparameter optimisation, and 15\% for testing.

\paragraph{Optimiser} The batch size is always taken to be 1024. The number of training steps varies by experiment, see below.

We use Adam \citep{adam} to train every Latent SDE.

Following \citet{sde-gan} we use Adadelta \citep{adadelta} to train every SDE-GAN.

\paragraph{Stochastic weight averaging} When training SDE-GANs, we take a Ces{\`a}ro mean over the latter 50\% of the training steps, of the generator's weights, to produce the final trained model. Known as `stochastic weight averaging' this often slightly improves GAN training \citep{swa1, swa2}.

\paragraph{Architectures}
Each of $\zeta_\theta, \mu_\theta, \sigma_\theta, \xi_\phi, f_\phi, g_\phi$ were parameterised as MLPs. (From equations \eqref{eq:nsde}, \eqref{eq:ncde}, \eqref{eq:latent-xi}.)

Following \citet{scalable-sde}, then $\nu_\phi$ (of equation \eqref{eq:latent-xi}) was parameterised as an MLP composed with a GRU.

In brief, that is $\nu_\phi(t, \widehat{X}_t, Y_\text{true}) = \nu^1_\phi(t, \widehat{X}_t, \nu^2_\phi(\restr{Y_\text{true}}{[t, T]}))$, where $\nu^1_\phi$ is an MLP, and $\nu^2_\phi$ is a GRU run backwards-in-time from $T$ to $t$ over whatever discretisation of $\restr{Y_\text{true}}{[t, T]}$ is observed.

In all cases, for simplicity, the LipSwish activation function was used throughout.

\paragraph{Hyperparameter optimisation} Hyperparameter optimisation used the default optimisation strategy provided by Ax (initial quasirandom Sobol sampling followed by Bayesian optimisation). The optimisation metric was the MMD evaluation metric, due to the speed at which it can be computed relative to the other optimisation metrics (which require training an auxiliary model).

The Latent SDE model was hyperoptimised on every dataset. To ensure we do not bias results in our favour, the hyperoptimised models used the midpoint method, not the reversible Heun method, throughout, and was optimised using discretise-then-optimise backpropagation.

The SDE-GAN models then used the same hyperparameters where applicable, for example on learning rate, neural network size, and so on. This fixes mosts of the hyperparameters. A few extra hyperparameters were then chosen manually.

First, the size of the initial noise $V$ and the dimensionality of the Brownian motion $W$ were arbitrarily fixed at 10.

Second, we adjusted the initialisation strategy for the parameters of the SDE. For each dataset, we picked some constants
\begin{equation}\label{eq:initial-scaling}
\alpha > 0, \quad\beta > 0,
\end{equation}
and multiplied the parameters $\theta$ at initialisation by either $\alpha$ or $\beta$, depending on whether the parameters were used to generate the initial condition ($\zeta_\theta$), or were used in the vector fields of the SDE ($\mu_\theta, \sigma_\theta, \ell_\theta$), respectively. This helped to ensure that the SDE had a good initialisation, and therefore took fewer steps to train. The values $\alpha$, $\beta$ were chosen by manually plotting generated samples from an untrained model against real data. (A less na{\"i}ve approach would be nice.)

The one exception is the Ornstein--Uhlenbeck dataset, for which the SDE-GAN had $\zeta_\theta$, $\mu_\theta$, $\sigma_\theta, \xi_\phi, f_\phi, g_\phi$ parameterised as MLPs with one hidden layer of width 32, and had an evolving hidden state $X$ of size 32; these figures were chosen as being known to work based on early experiments.

\paragraph{Compute resources}
Experiments were performed on an internal GPU cluster. Each experiment used only a single GPU at a time. Amount of compute time varied depending on the experiment -- some took a few hours, some took a few days. Precise times given in the tables of results.

(Moreover some would likely have taken a few weeks without the algorithmic speed improvements introduced in this paper. Recall that each baseline experiment used the improvements introduced in the other sections of this paper, simply to produce tractable training times. For example the Brownian Interval was used throughout.)

GPU types varied between GeForce RTX 2080 Ti, Quadro GP100, and A100s.

\subsection{Weights dataset}
\paragraph{Technical details}
We consider a dataset of weights of a small convolutional network, as it is trained to classify MNIST, with training using stochastic gradient descent, as in \citet{sde-gan}. The network was trained 10 times, and all weight trajectories, across all runs, were aggregated together to form a dataset of univariate time series.

Each time series is 50 elements long, corresponding to how the weights change over 50 epochs.

We train an SDE-GAN on this dataset. We train the generator and discriminator for 80\,000 steps each.

Each MLP ($\zeta_\theta, \mu_\theta, \sigma_\theta, \xi_\phi, f_\phi, g_\phi$) is parameterised as having two hidden layers each of width 67. The evolving hidden states $X, H$ are each of size 62.

The parameters of $\zeta_\theta, \xi_\phi$ have a learning rate of $4.9\times 10^{-3}$. The parameters of $\mu_\theta, \sigma_\theta, f_\phi, g_\phi$ have a learning rate of $1.3\times 10^{-3}$.

The initialisation scaling parameters $\alpha$ and $\beta$ (of equation \eqref{eq:initial-scaling}) were selected to be 4.5 and 0.25 respectively.

\paragraph{Results}
We compare the reversible Heun method to the midpoint method on the weights dataset, by training an SDE-GAN.

We compute three test metrics: classification of real versus generated data, forecasting via train-on-synthetic-test-on-real, and a maximum mean discrepancy. We additionally report training time. See Table \ref{appendix:table:weights}.

\begin{table}
    \centering
    \caption{SDE-GAN on weights dataset. Mean $\pm$ standard deviation averaged over three runs.}
    \begin{tabular}{@{}lcccc@{}}
        \toprule
        & \multicolumn{3}{c}{Test Metrics}\\
        \cmidrule{2-4}
        Solver & \mybox{Real/fake\\classification\\accuracy (\%)} & Prediction loss & \mybox{MMD\\($\times 10^{-2}$)} & \mybox{Training\\time (days)}\\
        \midrule
        Midpoint & \score{77.0}{6.9} & \score{0.303}{0.369} & \score{4.38}{0.67} & \score{5.12}{0.01} \\
        Reversible Heun & \bfscore{75.5}{0.01} & \bfscore{0.068}{0.037} & \bfscore{1.75}{0.3} & \bfscore{2.59}{0.05} \\
        \bottomrule
    \end{tabular}
    \label{appendix:table:weights}
\end{table}

First and most notably, we see a dramatic reduction in training time: the training time of the reversible Heun method is roughly half that of the midpoint method. This corresponds to the reduction in vector field evaluations of the reversible Heun method.

We additionally see better performance on the test metrics, as compared to the midpoint method. This corresponds to the calculation of numerically precise gradients via the reversible Heun method.

We believe that these test metrics could be further improved (in particular the classification accuracy) given further training time.

The Brownian Interval (Section \ref{section:brownian-interval}) is used to sample Brownian noise, and the SDE-GAN is trained using clipping as in Section \ref{section:clipping}. (Without either of which the baseline experiments would have taken infeasibly long to run.)

\subsection{Air quality dataset}
\paragraph{Technical details}
This is a dataset of air quality samples over Beijing, as they vary over the course of a day. Each time series is 24 elements long, corresponding to a single hour each day. We consider specifically the PM2.5 particulate matter concentration, and ozone concentration, to produce a dataset of bivariate time series. In particular the ozone channel was selected as displaying obvious non-autonomous behaviour: the latter half of the time series often includes a peak.

This dataset is available via the UCI machine learning repository \citep{uci, air-quality-dataset}.

Each time series has a label, corresponding to which of 12 different locations the measurements were made at.

We train a Latent SDE on this dataset. We train for 40\,000 steps.

Each MLP ($\zeta_\theta, \mu_\theta, \sigma_\theta, \xi_\phi, \nu^1_\phi$) is parameterised as having a single hidden layer of width 84. The evolving hidden state $X$ is of size 63.

$\nu^2_\phi$ was parameterised as GRU with hidden size 84, whose final hidden state has a learnt affine map applied, to produce vector of size 60.

The parameters of $\zeta_\theta$ have learning rate $1.1\times 10^{-4}$. The parameters of $\mu_\theta, \sigma_\theta, \xi_\phi, \nu^1_\phi, \nu^2_\phi$ have learning rate $1.9\times 10^{-5}$.

The initialisation scaling parameters $\alpha$ and $\beta$ (of equation \eqref{eq:initial-scaling}) were selected to be 2 and 1 respectively.

\paragraph{Results}
We compare the reversible Heun method to the midpoint method on the air quality dataset, by training a Latent SDE.

We compute four test metrics: classification of real versus generated data, classification via train-on-synthetic-test-on-real, forecasting via train-on-synthetic-test-on-real, and a maximum mean discrepancy. We additionally report training time. See Table \ref{appendix:table:air-quality}.

\begin{table}
    \centering
    \caption{Latent SDE on air quality dataset. Mean $\pm$ standard deviation averaged over three runs.}
    \begin{tabular}{@{}lccccc@{}}
        \toprule
        & \multicolumn{4}{c}{Test Metrics}\\
        \cmidrule{2-5}
        Solver & \mybox{Real/fake\\classification\\accuracy (\%)} & \mybox{Label\\classification\\accuracy (\%)} & Prediction loss & \mybox{MMD\\($\times 10^{-3}$)} & \mybox{Training\\time (hours)}\\
        \midrule
        Midpoint & \bfscore{92.3}{0.02} & \score{46.3}{5.1} & \bfscore{0.281}{0.009} & \score{5.91}{2.06} & \score{5.58}{0.54} \\
        Reversible Heun & \score{96.7}{0.01} & \bfscore{49.2}{0.02} & \score{0.314}{0.005} & \bfscore{4.72}{2.90} & \bfscore{4.47}{0.31} \\
        \bottomrule
    \end{tabular}
    \label{appendix:table:air-quality}
\end{table}

Here, the most important metric is again the improvement in training time: whilst less dramatic than the previous SDE-GAN experiment, it is still a speed improvement of $1.2\times$.

Performance on the test metrics varies between the solvers, without a clear pattern.

It is worth noting the apparently poor real/fake classification accuracies obtained using either solver. This is typical of Latent SDEs in general, in particular as opposed to SDE-GANs. Whilst Latent SDEs are substantially quicker to train, they tend to produce less convincing samples.

\subsection{Gradient error analysis}
We investigate the error made in the gradient calculation using continuous adjoints. We consider using the midpoint method, Heun's method, and the reversible Heun method, and vary the step size in decreasing powers of two.

Our test problem is to compute $\nicefrac{\dd X_1}{\dd X_0}$ and $\nicefrac{\dd X_1}{\dd \theta}$ over a batch of 32 samples of
\begin{equation*}
    \dd X_t = f_\theta(t, X_t)\,\dd t + g_\theta(t, X_t) \circ \dd W_t,
\end{equation*}
where $X_t, f_\theta(t, X_t) \in \reals^{32}, W_t \in \reals^{16}, g_\theta(t, X_t) \in \reals^{32, 16}$ and $f_\theta$, $g_\theta$ are feedforward neural networks with a single hidden layer of width 8, and LipSwish activation function. Additionally $f_\theta$ has a $\tanh$ as a final nonlinearity, and $g_\theta$ has a sigmoid as final nonlinearity.

We compare the gradients computed via optimise-then-discretise against discretise-then-optimise, and plot the relative $L^1$ error. Letting $\delta_{\mathrm{o-d}} \in \reals^M$ and $\delta_{\mathrm{d-o}} \in \reals^M$ denote these gradients (with $M$ equal to the size of $X_0$ plus the size of $\theta$), then this is defined as
\begin{equation*}
    \frac{
    \sum_{i = 1}^M \abs{\delta^i_{\mathrm{o-d}} - \delta^i_{\mathrm{d-o}}}
    }{
    \max\{\sum_{i = 1}^M \abs{\delta^i_{\mathrm{o-d}}}, \sum_{i = 1}^M \abs{\delta^i_{\mathrm{d-o}}}\}.
    }
\end{equation*}

Numerical values are shown in Table \ref{appendix:table:gradient}. Results are plotted graphically in the main text.

\begin{table}[p]
    \centering
    \caption{Relative $L^1$ error on the gradient analysis test problem.}
    \begin{tabular}{@{}lllll@{}}
        \toprule
        & \multicolumn{4}{c}{Step size}\\\cmidrule{2-5}
        Method & $2^0$ & $2^{-2}$ & $2^{-4}$ & $2^{-6}$\\\midrule
        Midpoint & $9.4 \times 10^{-2}$ & $1.1 \times 10^{-2}$ & $2.6 \times 10^{-3}$ & $7.2 \times 10^{-4}$ \\
        
        Heun & $1.4 \times 10^{-1}$ & $5.9 \times 10^{-2}$ & $1.4 \times 10^{-2}$ & $2.9 \times 10^{-3}$ \\

        Reversible Heun & $\mathbf{1.9 \times 10^{-17}}$ & $\mathbf{2.7 \times 10^{-16}}$ & $\mathbf{3.3 \times 10^{-16}}$ & $\mathbf{5.7 \times 10^{-16}}$ \\
        \bottomrule\\
        \cmidrule[\heavyrulewidth]{2-4}
        & $2^{-8}$ & $2^{-10}$ & $2^{-14}$\\\cmidrule{2-4}
        & $1.8 \times 10^{-4}$ & $4.8 \times 10^{-5}$ & $2.5 \times 10^{-6}$ \\
        
        & $8.2 \times 10^{-4}$ & $2.8 \times 10^{-4}$ & $1.3 \times 10^{-5}$ \\

        & $\mathbf{1.0 \times 10^{-15}}$ & $\mathbf{1.8 \times 10^{-15}}$ & $\mathbf{6.5 \times 10^{-15}}$ \\
        \cmidrule[\heavyrulewidth]{2-4}
    \end{tabular}
    \label{appendix:table:gradient}
\end{table}

\subsection{Brownian benchmarks}
We benchmark the Brownian Interval against the Virtual Brownian Tree across several benchmarks.

\paragraph{Access benchmarks}

We subdivide the interval $[0, 1]$ into a disjoint union of equal-sized intervals. Across each interval we then place a query for a small Brownian increment.

We consider subdividing $[0, 1]$ into different numbers of subintervals (and so of different sizes): either 10, 100 or 1000 subintervals.

We consider several access patterns.

Sequential access: this involves querying every interval in order, from 0 to 1. Every interval is queried precisely once. This simulates an SDE solve from 0 to 1.

Doubly sequential access: this involves querying every interval in order, from 0 to 1, and then querying them again in reverse order, from 1 to 0. Every interval is queried precisely twice. This simulates an SDE solve from 0 to 1, followed by a backpropagation via the continuous adjoint method, from 1 to 0.

Random access: this involves querying every interval precisely once, in a random order.

We consider several batch sizes: either a single Brownian simulation, a typical batch size of 2560 simultaneous Brownian simulations (corresponding to a batch of size 256, each with a vector of 10 Brownian motions), and a large batch size of 32768 simultaneous Brownian simulations.

For every such combination we report metrics for the Brownian Interval and the Virtual Brownian Tree.

For every such combination we run 32 repeats. The reported metric is the fastest (minimum time) over these repeats.\footnote{Not the mean. Errors in speed benchmarks are one-sided, and so the minimum time represents the least noisy measurement.}

See Tables \ref{appendix:table:brownian-sequential-access}, \ref{appendix:table:brownian-doubly-sequential-access}, and \ref{appendix:table:brownian-random-access}.

The Brownian Interval is consistently and substantially faster than the Virtual Brownian Tree, across all access patterns, batch sizes, and number of subintervals.

On the doubly sequential access benchmark (emulating the SDE solve and backpropagation typical in practice), we see that total speed-ups vary from a factor of $2.89\times$ to a factor of $13.5\times$. That is, at minimum, speed is roughly tripled. Potentially it is improved by over an order of magnitude. Typical values are speed-ups of $6$--$8\times$.

\begin{table}
    \centering
    \caption{Speed on the sequential access benchmark. Minimum over 32 runs.}
    \begin{tabular}{@{}lll@{}}
        \toprule
        & \multicolumn{2}{c}{Speed (seconds)}\\
        \cmidrule{2-3}
        Batch size, subinterval number & Virtual Brownian Tree & Brownian Interval\\
        \midrule
        1, 10 & $8.08 \times 10^{-3}$ & $\mathbf{2.59 \times 10^{-3}}$\\
        1, 100 & $7.84 \times 10^{-2}$ & $\mathbf{3.12 \times 10^{-2}}$\\
        1, 1000 & $7.96 \times 10^{-1}$ & $\mathbf{3.27 \times 10^{-1}}$\\\midrule
        2560, 10 & $1.90 \times 10^{-2}$ & $\mathbf{4.13 \times 10^{-3}}$\\
        2560, 100 & $1.94 \times 10^{-1}$ & $\mathbf{4.95 \times 10^{-2}}$\\
        2560, 1000 & $1.93 \times 10^0$ & $\mathbf{5.15 \times 10^{-1}}$\\\midrule
        32768, 10 & $1.16 \times 10^{-1}$ & $\mathbf{1.71 \times 10^{-2}}$\\
        32768, 100 & $1.40 \times 10^0$ & $\mathbf{2.02 \times 10^{-1}}$\\
        32768, 1000 & $1.43 \times 10^1$ & $\mathbf{2.13 \times 10^0}$\\
        \bottomrule
    \end{tabular}
    \label{appendix:table:brownian-sequential-access}
\end{table}

\begin{table}
    \centering
    \caption{Speed on the doubly sequential access benchmark. Minimum over 32 runs.}
    \begin{tabular}{@{}lll@{}}
        \toprule
        & \multicolumn{2}{c}{Speed (seconds)}\\
        \cmidrule{2-3}
        Batch size, subinterval number & Virtual Brownian Tree & Brownian Interval\\
        \midrule
        1, 10 & $2.02 \times 10^{-2}$ & $\mathbf{2.79 \times 10^{-3}}$\\
        1, 100 & $2.42 \times 10^{-1}$ & $\mathbf{4.96 \times 10^{-2}}$\\
        1, 1000 & $1.67 \times 10^0$ & $\mathbf{5.79 \times 10^{-1}}$\\\midrule
        2560, 10 & $3.23 \times 10^{-2}$ & $\mathbf{4.31 \times 10^{-3}}$\\
        2560, 100 & $3.91 \times 10^{-1}$ & $\mathbf{8.05 \times 10^{-2}}$\\
        2560, 1000 & $4.01 \times 10^0$ & $\mathbf{9.56 \times 10^{-1}}$\\\midrule
        32768, 10 & $2.30 \times 10^{-1}$ & $\mathbf{1.70 \times 10^{-2}}$\\
        32768, 100 & $2.92 \times 10^0$ & $\mathbf{3.49 \times 10^{-1}}$\\
        32768, 1000 & $2.91 \times 10^1$ & $\mathbf{4.36 \times 10^0}$\\
        \bottomrule
    \end{tabular}
    \label{appendix:table:brownian-doubly-sequential-access}
\end{table}

\begin{table}
    \centering
    \caption{Speed on the random access benchmark. Minimum over 32 runs.}
    \begin{tabular}{@{}lll@{}}
        \toprule
        & \multicolumn{2}{c}{Speed (seconds)}\\
        \cmidrule{2-3}
        Batch size, subinterval number & Virtual Brownian Tree & Brownian Interval\\
        \midrule
        1, 10 & $1.53 \times 10^{-2}$ & $\mathbf{2.56 \times 10^{-3}}$\\
        1, 100 & $1.09 \times 10^{-1}$ & $\mathbf{5.17 \times 10^{-2}}$\\
        1, 1000 & $8.52 \times 10^{-1}$ & $\mathbf{1.02 \times 10^0}$\\\midrule
        2560, 10 & $1.50 \times 10^{-2}$ & $\mathbf{3.56 \times 10^{-3}}$\\
        2560, 100 & $1.86 \times 10^{-1}$ & $\mathbf{8.96 \times 10^{-2}}$\\
        2560, 1000 & $1.99 \times 10^0$ & $\mathbf{1.80 \times 10^0}$\\\midrule
        32768, 10 & $1.18 \times 10^{-1}$ & $\mathbf{1.60 \times 10^{-2}}$\\
        32768, 100 & $1.47 \times 10^0$ & $\mathbf{3.63 \times 10^{-1}}$\\
        32768, 1000 & $1.44 \times 10^1$ & $\mathbf{9.08 \times 10^0}$\\
        \bottomrule
    \end{tabular}
    \label{appendix:table:brownian-random-access}
\end{table}

\paragraph{SDE solve benchmarks}
We now benchmark the Brownian Interval against the Virtual Brownian Tree on the actual task of solving and backpropagating through an SDE.

As before we consider several numbers of subintervals, several batch sizes, and run 32 repeats and take the fastest time.

Our test SDE is an It{\^o} SDE with diagonal noise:
\begin{equation*}
    \dd X^i_t = \tanh(A^{i,j} X^j_t) \,\dd t + \delta^{i, k} \delta^{i, l} \tanh(B^{k, j} X^j_t) \,\dd W^l_t,
\end{equation*}
where either $X_t, W_t \in \reals^1$, $X_t, W_t \in \reals^{10}$, or $X_t, W_t \in \reals^{16}$, corresponding to the small, medium and large batch sizes considered in the previous set of benchmarks. Likewise $A, B \in \reals^{1 \times 1}$, $A, B \in \reals^{10 \times 10}$, or $A, B \in \reals^{16 \times 16}$ are random matrices. $\tanh$ is applied component-wise.

For $i=10,100,1000$ subintervals, we calculate a forward pass for $[0, 1]$ using the Euler--Maruyama method, to calculate $X_{j/(i - 1)}$ for $j \in \{0, \ldots, i - 1\}$. We then backpropagate from the vector $(X_{j/(i - 1)})_j$ to $X_0$, using the continuous adjoint method.

See Table \ref{appendix:table:brownian-sde}.

We once again see that the Brownian Interval is uniformly and substantially faster than the Virtual Brownian Tree, in all regimes. On smaller problems (batch size equal to 1 or 2560), then it is typically twice as fast; at worst it is $1.89\times$ as fast. Meanwhile on larger problems (batch size equal to 32768), then it is typically ten times as fast.

These benchmarks include the realistic overheads involved in solving an SDE (such as evaluating its vector fields), and represent typical speed-ups from using the Brownian Interval over the Virtual Brownian Tree.

\begin{table}
    \centering
    \caption{Speed on the sequential access benchmark. Minimum over 32 runs.}
    \begin{tabular}{@{}lll@{}}
        \toprule
        & \multicolumn{2}{c}{Speed (seconds)}\\
        \cmidrule{2-3}
        Batch size, subinterval number & Virtual Brownian Tree & Brownian Interval\\
        \midrule
        1, 10 & $1.42 \times 10^{-1}$ & $\mathbf{7.18 \times 10^{-2}}$\\
        1, 100 & $1.55 \times 10^0$ & $\mathbf{8.16 \times 10^{-1}}$\\
        1, 1000 & $1.61 \times 10^1$ & $\mathbf{8.52 \times 10^0}$\\\midrule
        2560, 10 & $2.61 \times 10^{-1}$ & $\mathbf{1.13 \times 10^{-1}}$\\
        2560, 100 & $3.00 \times 10^0$ & $\mathbf{1.31 \times 10^0}$\\
        2560, 1000 & $3.00 \times 10^1$ & $\mathbf{1.31 \times 10^1}$\\\midrule
        32768, 10 & $4.34 \times 10^1$ & $\mathbf{4.66 \times 10^0}$\\
        32768, 100 & $4.99 \times 10^2$ & $\mathbf{4.68 \times 10^1}$\\
        32768, 1000 & $1.54 \times 10^3$ & $\mathbf{4.74 \times 10^2}$\\
        \bottomrule
    \end{tabular}
    \label{appendix:table:brownian-sde}
\end{table}

\subsection{Time-dependent Ornstein--Uhlenbeck dataset}
\paragraph{Technical details}
This is a dataset of univariate samples of length 32 from the time-dependent Ornstein--Uhlenbeck process
\begin{equation*}
    \dd Y_{\mathrm{true}, t} = (\rho t - \kappa Y_{\mathrm{true}, t}) \,\dd t + \chi \,\dd W_t,
\end{equation*}
with $\rho=0.02, \kappa=0.1$ and $\chi=0.4$ and $t \in [0, 31]$

We train an SDE-GAN on this dataset. We train the generator for 20\,000 steps and the discriminator for 100\,000 steps.

Each MLP ($\zeta_\theta, \mu_\theta, \sigma_\theta, \xi_\phi, f_\phi, g_\phi$) is parameterised as having a single hidden layer of width 32. The evolving hidden states $X, H$ are each of size 32.

The parameters of $\zeta_\theta, \xi_\phi$ have a learning rate of $1.6\times 10^{-3}$. The parameters of $\mu_\theta, \sigma_\theta, f_\phi, g_\phi$ have a learning rate of $2.0\times 10^{-4}$.

The initialisation scaling parameters $\alpha$ and $\beta$ (of equation \eqref{eq:initial-scaling}) were selected to be 5 and 0.5 respectively.

\paragraph{Results}
We compare training SDE-GANs by careful clipping (as in Section \ref{section:clipping}) to using gradient penalty (as in \citet{sde-gan}) on the OU dataset.

As our implementation of the reversible Heun method does not support a double backward, we provide two comparisons of interest: reversible Heun method with clipping against midpoint with gradient penalty, and midpoint with clipping against midpoint with gradient penalty.

We compute three test metrics: classification of real versus generated data, forecasting via train-on-synthetic-test-on-real, and a maximum mean discrepancy. We additionally report training time. See Table \ref{appendix:table:ou}.

\begin{table}
    \centering
    \caption{SDE-GAN on OU dataset. Mean $\pm$ standard deviation averaged over three runs.}
    \begin{tabular}{@{}lcccc@{}}
        \toprule
        & \multicolumn{3}{c}{Test Metrics}\\
        \cmidrule{2-4}
        Solver & \mybox{Real/fake\\classification\\accuracy (\%)} & Prediction loss & \mybox{MMD\\($\times 10^{-1}$)} & \mybox{Training\\time (hours)}\\
        \midrule
        \mybox{Midpoint with\\gradient penalty} & \score{98.2}{2.4} & \score{2.71}{1.03} & \score{2.58}{1.81} & \score{55.0}{27.7} \\
        \mybox{Midpoint with\\clipping} & \score{93.9}{6.9} & \score{1.65}{0.17} & \score{1.03}{0.10} & \score{32.5}{12.1} \\
        \mybox{Reversible Heun\\with clipping} & \bfscore{67.7}{1.1} & \bfscore{1.38}{0.06} & \bfscore{0.45}{0.22} & \bfscore{29.4}{8.9} \\
        \bottomrule
    \end{tabular}
    \label{appendix:table:ou}
\end{table}

We see that reversible Heun with clipping dominates midpoint with clipping, which in turn dominates midpoint with gradient penalty, across all metrics.

As per \citet{sde-gan}, the poor performance of gradient penalty is due in part to the numerical errors of a double adjoint. Switching to midpoint with clipping produces substantially better test metrics. It additionally improves training speed by $1.41\times$.

Switching from midpoint with clipping to reversible Heun with clipping then produces another substantial boost to the test metrics -- most notably the real-versus-fake classification accuracy. It also improves training speed by another $1.09\times$.

\section{Ethical statement}\label{appendix:ethics}
SDEs are already a widely used modelling paradigm, primarily in fields such as finance and science. In this regard they are a tried-and-tested mathematical tool, with, to the best of the authors knowledge, no significant ethical concerns attached.

As this paper extends this existing methodology, then broadly speaking we expect the same to be true.

\paragraph{Expected applications} We anticipate the results of this paper as having applications to finance and the sciences. For example, to model the movement of asset prices, or to model predator-prey interactions.

As such no significant negative societal impacts are anticipated.

\paragraph{Environmental impacts} The primary contributions of this paper are speed improvements to existing methodologies. As such we anticipate a positive environmental impact from this paper, due to a reduction in the compute resources necessary to train model.

\paragraph{Dataset content} The data we are using contains no personally identifiable or offensive content.

\paragraph{Dataset provenance} All data used has been made publicly available, for example via the UCI machine learning repository \citep{uci}. To the best of our knowledge this availability was voluntary, and so we believe the use of the data to be ethical and without licensing issues.
\end{document}